\documentclass[11pt]{article}

\usepackage{acl}

\usepackage{times}
\usepackage{latexsym}
\usepackage[T1]{fontenc}
\usepackage[utf8]{inputenc}
\usepackage{microtype}
\usepackage{inconsolata}
\usepackage{graphicx}
\graphicspath{{figures/}}
\usepackage{amsmath,amssymb,amsthm}
\usepackage{booktabs}
\usepackage{multirow}
\usepackage{array}
\usepackage{xcolor}
\usepackage{url}
\usepackage{enumitem}
\usepackage{placeins}
\usepackage{float}
\usepackage{caption}
\usepackage{subcaption}

\usepackage{tikz}
\usepackage{pgfplots}
\pgfplotsset{compat=1.16}
\usepackage{algorithm}
\usepackage{algorithmic}
\usepackage[most]{tcolorbox}
\usepackage{pifont}
\usepackage{colortbl}
\numberwithin{equation}{section}

\newtheorem{theorem}{Theorem}
\newtheorem{lemma}[theorem]{Lemma}
\newtheorem{corollary}[theorem]{Corollary}
\newtheorem{proposition}[theorem]{Proposition}

\newtheorem{assumption}{Assumption}

\newcommand{\GEAF}{\widehat{\mathrm{GEAF}}}

\title{Understanding Rollout Error in Graph World Models}

\author{\textbf{Xinyuan Song}$^{1}$ \quad
    \textbf{Zekun Cai}$^{2,3}$ \\
    $^{1}$Emory University, Atlanta, GA, USA \quad
    $^{2}$The University of Tokyo, Tokyo, Japan \\
    $^{3}$LocationMind, Tokyo, Japan \\
    \texttt{xinyuan.song@emory.edu, caizekun@csis.u-tokyo.ac.jp} \\
}

\begin{document}
\maketitle

\begin{abstract}
World models are increasingly used for planning, yet most analyses of rollout error assume vector-valued states and scalar error amplification. Many planning environments, however, are naturally graph-structured: agents, tools, skills, routes, and dependencies interact through evolving relations. In this work, we study how prediction errors accumulate in Graph World Models (GWMs). We formulate fixed-edge and dynamic-edge GWM rollouts under a unified state-action transition framework and derive topology-aware error bounds. For fixed-edge rollouts, we show that long-horizon node error separates into a topology factor, governed by the graph spectral radius, and a model factor, governed by layer spectral norms. For dynamic-edge rollouts, we introduce a joint node-edge error operator that captures feedback between feature prediction and structure prediction, revealing when edge errors amplify future message passing. Motivated by these bounds, we propose Error-Aware GWM, a training objective that combines spectral regularization, rollout consistency, and critical-node weighting. Across synthetic graph topologies and heterogeneous agent-graph testbeds, we find that rollout error and planning regret grow with horizon, that dynamic-edge training is necessary when structure evolves, and that Error-Aware GWM improves long-horizon stability without sacrificing one-step accuracy. Our results characterize when graph world models remain reliable under autoregressive planning and when topology makes them fail. Code and benchmark artifacts are available at \url{https://github.com/Hik289/graph_world_model_accumulative_error.git}.
\end{abstract}


\section{Introduction}
\label{sec:intro}

A world model~\cite{ha2018worldmodels} represents environment observations as states and predicts future states conditioned on actions. Modern world models trained on large-scale data~\cite{hafner2018dreamer,hafner2020dreamer,hafner2023dreamerv3,chua2018pets,schrittwieser2020muzero,wu2024ivideogpt,bruce2024genie} have shown strong planning, prediction, and generation capabilities. Most of them, however, are designed for vector- or image-like states. They do not directly capture environments whose state is a graph. Such environments are increasingly common in agent planning: multi-agent calling trees coordinate distributed execution~\cite{anokhin2024arigraph}, tool-use benchmarks require planning over external APIs~\cite{li2023apibank,qin2024toolllm,liu2024agentbench,trivedi2024appworld}, skill graphs encode reusable capabilities~\cite{huang2025asg,cascade2025,rl_sisl_2025,xia2026graspgraphstructuredskillcompositions}, and communication networks route actions among interacting entities. Planning in these settings requires reasoning about node states and about how effects travel through relations. The same autoregressive rollout that enables long-horizon planning also creates a risk: prediction errors can compound and degrade downstream decisions~\cite{asadi2018lipschitz,janner2019trust,kakade2002approx}.

Graph World Models (GWMs) address this setting by autoregressively predicting graph states $G_t = (V, E_t, X_t, A_t)$~\cite{feng2025gwm}. Given a learned transition model $\mathcal{F}_\theta(\hat{G}_k, a_k)$, a GWM rolls out $\hat{G}_1, \ldots, \hat{G}_H$ and scores candidate action sequences. Existing work has mostly studied graph foundation models for static graph tasks~\cite{liu2023one,chen2024llaga}, graph generation or dynamic graph representation learning~\cite{you2018graphrnn,vignac2022digress,rossi2020temporal,roland2022}, and domain-specific graph planning~\cite{zhang2020worldgraph,kgwm2021}. Feng et al.~\cite{feng2025gwm} introduced a unified GWM with action nodes, but their core setting is fixed-edge (FE) rollout. It remains unclear how GWMs behave under \emph{long-horizon rollout error} and in \emph{dynamic-edge (DE) regimes}, where the structure rolled out by the model becomes part of its future input.

Graph structure makes compounding error harder to analyze than in vector-state world models. In vector settings, rollout error is often summarized by a scalar Lipschitz constant. In graph settings, the same local error can behave very differently depending on topology. An error on a chain may stay localized; an error on a hub, dense graph, or high-spectral-radius topology can spread widely, consistent with spectral analyses of graph propagation, graph-network dynamics, GNN stability, and rewiring studies of harmful information bottlenecks~\cite{battaglia2016interaction,battaglia2018relational,sanchezgonzalez2018graph,oono2020graph,chamberlain2021grand,curvaturerewiring2024,piorf2025}. The problem is harder when edges are predicted. A node-state error can corrupt future edge prediction, and an edge error can change later message passing. GWM rollout error is therefore not merely scalar prediction error; it is a topology-dependent, and sometimes coupled, node-edge dynamical process.

Existing rollout-error theory is not designed for graph-structured planning. Standard model-based RL analyses bound compounding error with a scalar Lipschitz constant over a fixed-dimensional state vector~\cite{asadi2018lipschitz,janner2019trust}. This misses three aspects of graph rollouts. First, topology is explicit: for a message-passing GWM, node-error growth separates into a topology term $\rho(A)$ and a model term $\prod_\ell\|W_\ell\|_2$, where $A$ is the adjacency matrix, $\rho(A)$ is its spectral radius, and $W_\ell$ are layer weights. Second, dynamic graphs require coupled error dynamics: node-feature error $e^X_k=\|\hat X_k-X_k\|_F$ and edge error $e^A_k=\|\hat A_k-A_k\|_F$ evolve together rather than through a scalar recursion. Third, rollout error must be tied to planning regret over horizon $H$. These gaps motivate our graph-valued analysis, which replaces the scalar constant with $\GEAF=\rho(A)\prod_\ell\|W_\ell\|_2$ and uses a joint node-edge operator $B$ for the recursion of $(e^X_k,e^A_k)$ in dynamic-edge GWMs.

We analyze accumulative rollout error in GWMs both theoretically and empirically. The graph error amplification factor $\GEAF=\rho(A)\prod_\ell\|W_\ell\|_2$ separates topology-dependent amplification from the model-dependent spectral norm product. For dynamic-edge GWMs, we introduce a joint node-edge error operator $B$ that captures node-to-node, edge-to-node, node-to-edge, and edge-to-edge propagation. This operator cleanly separates two regimes: fixed-edge (FE) models have no edge-prediction head, so $M_X=0$ and the dynamics reduce to node-only prediction; dynamic-edge (DE) models have an edge-prediction head, so $M_X>0$ and node-edge cross-coupling becomes active.

We evaluate the framework on a seven-topology synthetic benchmark with established baselines, including GCN~\cite{kipf2017semi}, MPNN~\cite{gilmer2017neural}, GPS~\cite{rampasek2022recipe}, and ActionNode GWM~\cite{feng2025gwm}. The benchmark includes FE and DE simulators, plus heterogeneous agent-graph testbeds inspired by tool/API and agent benchmarks~\cite{li2023apibank,qin2024toolllm,liu2024agentbench,trivedi2024appworld}. The results support a regime-dependent view of GWM rollout error. Long rollouts turn small prediction errors into larger planning errors. When graph structure evolves, dynamic-edge training sharply outperforms fixed-edge training, and the learned dynamics exhibit the node-edge coupling predicted by the joint operator. The central empirical point is that GWM stability is not an architecture property alone; it depends on topology, horizon, and rollout regime.

We further propose \textbf{Error-Aware GWM} to improve rollout stability on high-spectral-radius graphs. It combines spectral regularization, rollout consistency, and critical-node weighting to control topology-induced amplification while preserving prediction accuracy. Empirically, Error-Aware GWM removes the divergence observed in vanilla GCN rollouts and keeps long-horizon prediction error low. Simpler stabilizers can reduce divergence, but often lose predictive fidelity. This distinction matters: a useful graph world model must be stable enough to roll out and accurate enough for those rollouts to support planning.

Our contributions are summarized as follows:
\begin{itemize}[left = 0em]

\item \textbf{Unified GWM framework with FE/DE instantiations (Section~\ref{sec:gwm}).}
We formulate graph-structured agent planning as state-action-transition modeling and support both fixed-edge and dynamic-edge rollouts.

\item \textbf{Topology-aware rollout error theory (Section~\ref{sec:theory}).}
We develop graph-valued error bounds that account for topology-dependent amplification and node-edge coupling during autoregressive rollout.

\item \textbf{Error-Aware GWM for stable planning (Section~\ref{sec:methods}).}
We introduce a training objective that combines spectral control, rollout consistency, and critical-node weighting to improve long-horizon stability.

\item \textbf{Evaluation across synthetic and agent-graph settings (Section~\ref{sec:experiments}).}
We test the framework across diverse graph topologies, rollout regimes, and agent-graph environments to study when GWMs are stable or unstable.

\item \textbf{Scope analysis on real-world benchmarks (Section~\ref{sec:realworld}).}
We evaluate GWMs beyond controlled synthetic dynamics and clarify their practical boundary relative to standard graph methods.

\end{itemize}

\section{Related Work}
\label{sec:related}

\paragraph{World models for planning.}
World models have been widely studied in model-based reinforcement learning, where a learned dynamics model is used to simulate future trajectories for planning or policy learning. A central challenge is compounding error: small one-step model errors can accumulate over long rollouts and degrade decision quality. Asadi et al.~\cite{asadi2018lipschitz} analyze this effect through Lipschitz continuity of learned transition models. PETS uses probabilistic ensembles and trajectory sampling to propagate uncertainty during model-based planning~\cite{chua2018pets}. Janner et al.~\cite{janner2019trust} study model-based policy optimization with short branched rollouts to reduce error accumulation. Kakade and Langford~\cite{kakade2002approx} provide simulation-lemma-style tools that connect model error to policy performance. Related sequence-modeling work addresses exposure bias by training models on their own generated histories~\cite{bengio2015scheduled}. Recent world-model systems reduce rollout drift through retrieval-augmented context~\cite{chen2025learning}, hierarchical latent planning~\cite{zhang2026hierarchical}, uncertainty-based rollout truncation~\cite{ni2025longhorizon}, modular dynamics experts~\cite{li2025prismatic}, skill-level abstraction~\cite{gurtler2025long}, operating-system simulation~\cite{rivard2025neuralos}, and controlled long-horizon evaluation benchmarks~\cite{li2025smallworlds}. These works motivate rollout-error analysis, but they do not model graph topology, message passing, or coupled node-edge prediction.

\paragraph{Graph neural networks and graph-structured prediction.}
Graph neural networks provide a standard framework for learning over graph-structured data by propagating information along edges. Interaction Networks and Graph Networks introduced object- and relation-centric neural dynamics for physical reasoning~\cite{battaglia2016interaction,battaglia2018relational}, and graph-network physics engines showed that such models can support inference and control in dynamical systems~\cite{sanchezgonzalez2018graph}. Graph convolutional networks aggregate normalized neighborhood information~\cite{kipf2017semi}. Message-passing neural networks generalize local edge-based aggregation~\cite{gilmer2017neural}. Graph transformers such as GPS combine message passing with global attention~\cite{rampasek2022recipe,ying2021graphormer}. Dynamic graph models and temporal graph networks learn over time-varying structure~\cite{rossi2020temporal,roland2022}, while graph generation methods model distributions over discrete graph structure~\cite{you2018graphrnn,vignac2022digress}. Prior theory shows that graph propagation is strongly affected by spectral properties of the adjacency or normalized adjacency matrix~\cite{oono2020graph}, and diffusion-based GNNs connect graph learning to stability of discretized diffusion dynamics~\cite{chamberlain2021grand}. Related work on graph rewiring studies how modifying topology can improve information flow and reduce harmful propagation effects~\cite{curvaturerewiring2024,piorf2025}. These results show that topology shapes prediction behavior, but they mainly study representation learning, graph generation, or one-step dynamics rather than autoregressive graph rollouts where prediction errors become future inputs.

\paragraph{Graph world models.}
Graph world models extend world modeling to graph-structured states. Feng et al.~\cite{feng2025gwm} introduce a GWM framework that represents multimodal observations, graph states, action nodes, and target nodes within a unified graph interface. WorldGraph uses latent landmarks as a graph-structured abstraction for planning~\cite{zhang2020worldgraph}. Value Memory Graph constructs graph-structured memory for offline reinforcement learning~\cite{zhu2022valuememory}. Knowledge-graph world models represent textual environments through structured relational memory~\cite{kgwm2021}. Contrastive structured world models learn object-centric relational dynamics~\cite{kipf2019contrastive}. Dyn-O builds structured world models with object-centric representations~\cite{dyno2025}. These works mainly focus on representation, architecture, and task performance. Our work instead studies long-horizon error behavior: how topology affects rollout error, how fixed-edge and dynamic-edge regimes differ, and how node and edge prediction errors interact through a joint operator.

\paragraph{Agent graphs and skill-graph systems.}
Multi-agent and tool-using systems increasingly organize computation as graphs, including agent calling-trees, knowledge-graph memories, tool graphs, and skill graphs. API-Bank, ToolLLM/ToolBench, AgentBench, and AppWorld evaluate tool retrieval, API calling, multi-turn decision-making, and interactive digital tasks~\cite{li2023apibank,qin2024toolllm,liu2024agentbench,trivedi2024appworld}. AriGraph uses graph memory for LLM agents~\cite{anokhin2024arigraph}. Audited Skill-Graph Self-Improvement studies verifier-backed skill-graph maintenance~\cite{huang2025asg}. CASCADE studies cumulative agentic skill creation through autonomous development and evolution~\cite{cascade2025}. Reinforcement learning for self-improving agents with skill libraries studies policy learning over reusable skills~\cite{rl_sisl_2025}. GraSP compiles retrieved skills into a typed skill graph so failed subgraphs can be repaired locally rather than replanning the full trajectory~\cite{xia2026graspgraphstructuredskillcompositions}. These works show the practical value of graph-structured agent execution. Our work is complementary: we analyze how prediction errors accumulate when such graph states are rolled out over time, and we propose Error-Aware GWM to improve long-horizon stability in graph-based planning.


\section{Graph World Model Framework}
\label{sec:gwm}

\subsection{Preliminaries: Agent Graph World}

A \textbf{Graph World Model} (GWM) represents an agent environment as an evolving graph and predicts future graph states for planning. At time $t$, the graph state is
\begin{equation}
G_t = (V, E_t, X_t, A_t).
\label{eq:gwm_state}
\end{equation}
Here, $V$ is the fixed set of agent, tool, skill, or task nodes; $E_t$ is the edge set; $X_t \in \mathbb{R}^{N \times d}$ is the node-feature matrix; and $A_t \in \{0,1\}^{N \times N}$ is the adjacency matrix. In an agent system, node features can encode agent status, tool outputs, skill availability, task progress, or failure signals. Edges represent calling, dependency, routing, or communication relations.

An action $a_t$ is represented as an action node injected into the graph. The learned transition model
\begin{equation}
\mathcal{F}_\theta : (G_t,a_t) \mapsto \hat G_{t+1}
\label{eq:gwm_transition}
\end{equation}
predicts the next graph state. During planning, an agent evaluates candidate action sequences $a_1,\ldots,a_H$ by rolling out
\begin{equation}
\hat G_{k+1} = \mathcal{F}_\theta(\hat G_k,a_k), \qquad k=0,\ldots,H-1,
\label{eq:gwm_rollout}
\end{equation}
and scoring the predicted outcomes. Thus, GWM planning reduces long-horizon decision-making to simulation over predicted graph trajectories.

\subsection{Action Node Mechanism}

Following Feng et al.~\cite{feng2025gwm}, actions are encoded as action nodes connected to the current graph. This gives a unified representation for three kinds of planning actions: \textbf{node-level actions}, which modify a specific agent, skill, tool, or task node; \textbf{edge-level actions}, which add, remove, or update dependency or communication edges; and \textbf{graph-level actions}, which change global routing, coordination, or allocation patterns.

We use two action-node connection types. \textbf{Explicit action nodes} connect to target nodes through task-defined structure, such as a planner calling a tool or assigning a subtask to an executor. \textbf{Retrieval-based action nodes} connect through embedding similarity or retrieval, such as a query routed to related skill nodes or auxiliary tools. The same transition model can therefore represent explicit decisions and soft retrieval-based dependencies in one agent graph.

\subsection{Two GWM Instantiations}
\label{sec:instantiations}

\paragraph{Fixed-Edge GWM (FE-GWM).}
In the fixed-edge setting, topology is fixed during rollout and only node states are predicted. A representative GCN-style transition is
\begin{equation}
\hat{X}_{k+1} = \tanh(\tilde{A}\hat{X}_k W_1 + U a_k)W_2 + \xi_k,
\end{equation}
where
\begin{equation}
\tilde{A}=D^{-1/2}AD^{-1/2}.
\end{equation}
Here, $U a_k$ injects the action signal and $\xi_k$ denotes rollout noise. This setting covers ActionNode GWM~\cite{feng2025gwm} and standard GNN-based world models with fixed agent-graph structure. Since edges are not predicted, FE-GWM propagates node-state error through a fixed graph, with dynamics governed by $\rho(A)$ and the model weight norms.

\paragraph{Dynamic-Edge GWM (DE-GWM).}
In the dynamic-edge setting, both node features and graph structure are predicted autoregressively. In addition to the node transition, DE-GWM uses an edge-prediction head such as
\begin{equation}
\hat{A}_{ij} = \sigma(h_i^\top Q h_j),
\end{equation}
where $h_i,h_j$ are node embeddings and $Q$ parameterizes pairwise edge scores. The model is trained with
\begin{equation}
\mathcal{L} = \mathcal{L}_{\rm node} + \lambda_e \mathrm{BCE}(\hat{A},A) + \tfrac{1}{2}\|W\|_2\|Q\|_2.
\end{equation}
This setting models changing agent interactions, such as task routing, tool calls, validation links, or recovery paths. DE-GWM activates node-edge cross-coupling: node errors can affect future edge prediction, and edge errors can change future message passing. In the joint-operator view, FE-GWM is the special case with no edge-prediction head and $M_X=0$; DE-GWM has $M_X>0$ and enables node-edge feedback.

\section{Error Amplification Theory}
\label{sec:theory}

\paragraph{Framework overview.}
Figures~\ref{fig:intuition}--\ref{fig:fe_de_pipeline} summarize the graph rollout-error framework. Figure~\ref{fig:intuition} shows topology-dependent amplification: with the same model and initial error, low-GEAF graphs keep errors more local, while high-GEAF hub-like graphs spread them faster. Figure~\ref{fig:framework_overview} presents the autoregressive rollout pipeline and the joint node-edge error operator $B$. Figure~\ref{fig:fe_de_pipeline} contrasts FE and DE regimes, where edge prediction determines whether $B$ reduces to node-only dynamics or activates node-edge coupling.

\begin{figure*}[t]
  \centering
  \includegraphics[width=0.8\textwidth]{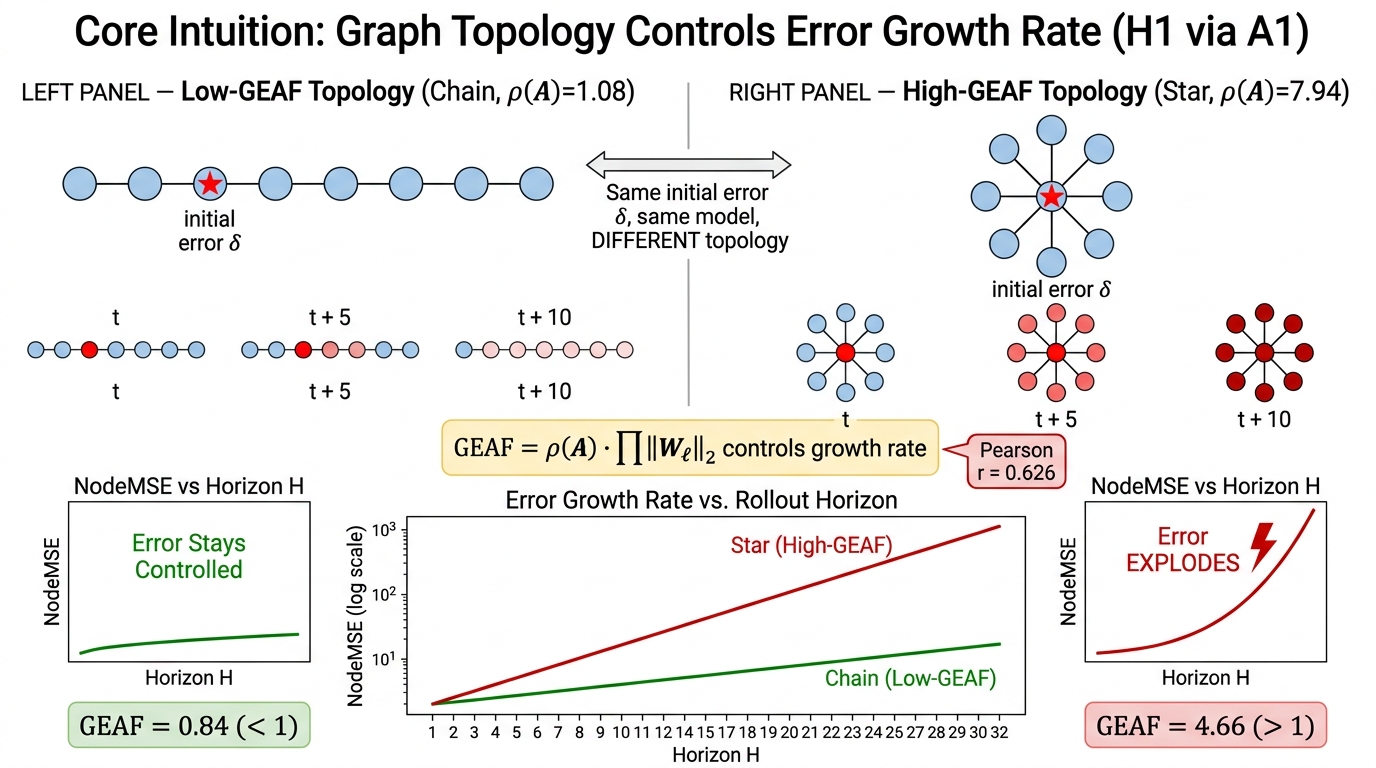}
  \caption{
  \textbf{Topology-controlled rollout error.}
  The same initial node error can evolve differently across graph topologies under the same transition model. Low-GEAF graphs keep perturbations local, whereas hub-dominated or high-spectral-radius graphs amplify them through $\GEAF=\rho(A)\prod_\ell\|W_\ell\|_2$.
  }
  \label{fig:intuition}
\end{figure*}

\begin{figure*}[t]
  \centering
  \includegraphics[width=0.8\textwidth]{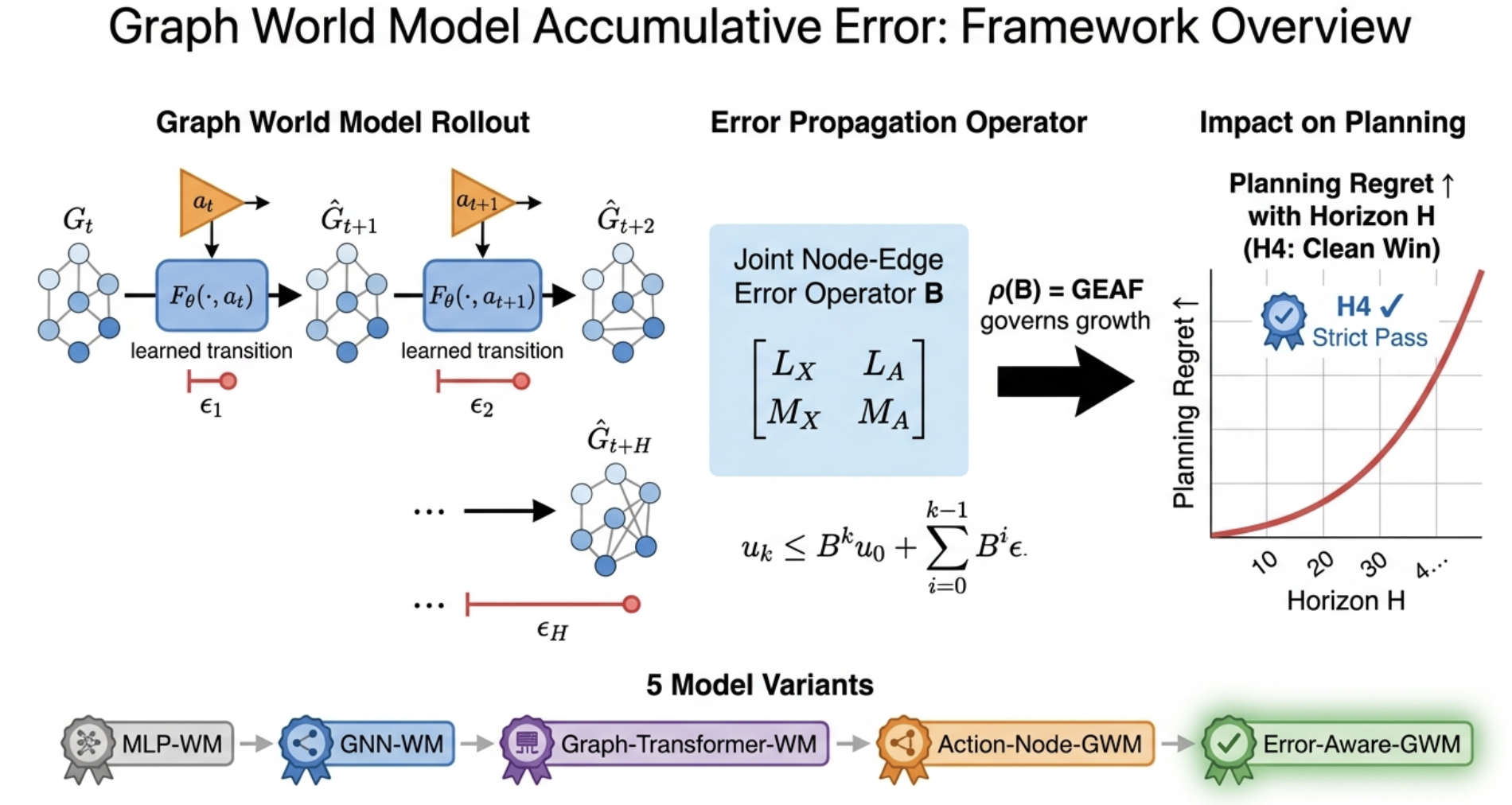}
  \caption{
  \textbf{GWM accumulative-error framework.}
  A GWM predicts graph states autoregressively, so each one-step node or edge error becomes part of the next input. The joint operator $B$ summarizes how node-to-node, edge-to-node, node-to-edge, and edge-to-edge error pathways accumulate and eventually affect planning returns. The bottom row groups the evaluated baselines into five model families; in the FE regime $\rho(B)$ reduces to GEAF, while in the DE regime GEAF is used as a topology-aware proxy for the coupled operator.
  }
  \label{fig:framework_overview}
\end{figure*}

\begin{figure*}[t]
  \centering
  \includegraphics[width=0.8\textwidth]{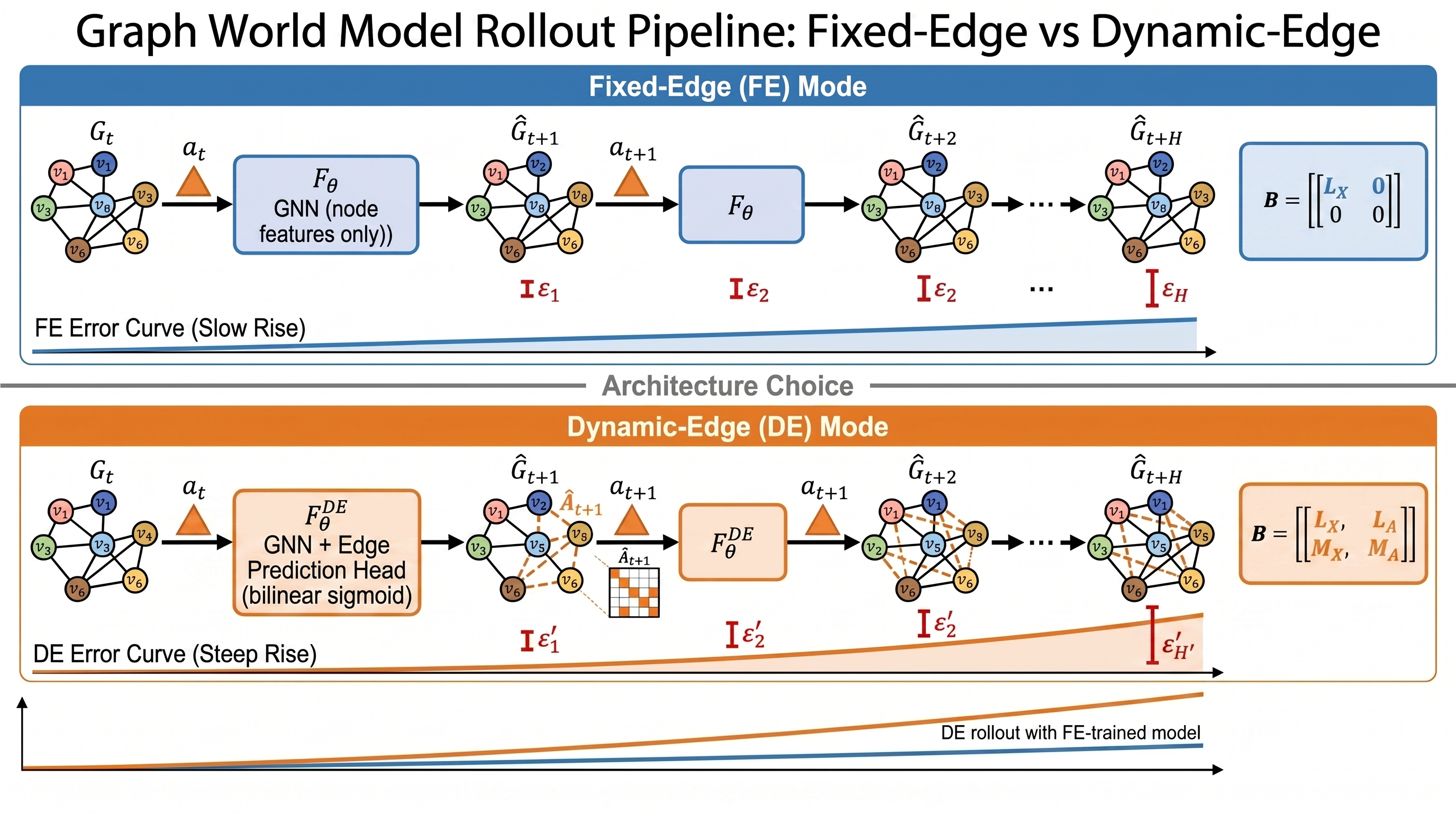}
  \caption{
  \textbf{FE versus DE rollout regimes.}
  Fixed-edge models keep topology fixed and reduce rollout error to node-only propagation. Dynamic-edge models predict future edges as well as node states, activating the full coupled operator $B$ and creating feedback between node-state errors and later message-passing structure.
  }
  \label{fig:fe_de_pipeline}
\end{figure*}

\subsection{Assumptions}
\label{sec:assumptions}

We state the assumptions used in the analysis. They specify the graph-state domain, the one-step approximation error, the message-passing form of node prediction, the Lipschitz structure of dynamic edge prediction, and the reward conditions needed to convert rollout error into planning regret. These assumptions match our synthetic and heterogeneous agent-graph simulators: node identities are fixed during rollout, node features and adjacency matrices are bounded, and FE models are recovered when the edge-update map is absent.

\begin{assumption}[Compact graph state space]
\label{assump:compact_state}
At each time $t$, the graph state is $G_t=(V_t,E_t,X_t,A_t)$ with fixed $|V_t|=N$, node features $X_t\in\mathbb{R}^{N\times D}$, and continuous adjacency matrix $A_t\in[0,1]^{N\times N}$. There exist constants $R_X,R_A<\infty$ such that
\begin{equation}
\|X_t\|_F\le R_X,\qquad \|A_t\|_F\le R_A
\end{equation}
for all rollout steps $t\in[0,T]$.
\end{assumption}

Assumption~\ref{assump:compact_state} holds by construction in our simulators. The main synthetic graphs, agent calling-tree testbed, and skill-graph testbed all keep node identities fixed. Bounded node features follow from bounded nonlinearities or feature clipping, and bounded adjacency follows from $A_t\in[0,1]^{N\times N}$. For growing graphs, the same analysis can be extended by replacing $B$ with a time-varying operator $B_t$ and assuming $\sup_t\rho(B_t)<\infty$.

\begin{assumption}[Uniform one-step model error]
\label{assump:onestep_error}
Let $F^\star=(F^{X\star},F^{A\star})$ be the true graph transition and $F_\theta=(F_\theta^X,F_\theta^A)$ be the learned GWM transition. On the compact domain in Assumption~\ref{assump:compact_state}, there exist constants $\epsilon_X,\epsilon_A\ge0$ such that
\begin{equation}
\begin{aligned}
\|F_\theta^X(X,A,a)-F^{X\star}(X,A,a)\|_F &\le\epsilon_X,\\ \|F_\theta^A(X,A,a)-F^{A\star}(X,A,a)\|_F &\le\epsilon_A.
\end{aligned}
\end{equation}
\end{assumption}

Assumption~\ref{assump:onestep_error} is the graph analogue of the standard one-step model-error assumption in model-based reinforcement learning. The constants $\epsilon_X$ and $\epsilon_A$ are worst-case one-step node and edge prediction errors. If only expected one-step error is available, the same recursion gives an expected rollout-error bound with the same operator form.

\begin{assumption}[Message-passing node transition]
\label{assump:message_passing}
The learned node update has the message-passing form
\begin{equation}
F_\theta^X(X,A,a)=\sigma(\hat A X W_1+Ua\mathbf{1}^\top)W_2,
\end{equation}
where $\sigma$ is coordinate-wise Lipschitz with constant $L_\sigma\le1$, $\hat A$ is the propagation matrix induced by $A$, $W_1,W_2$ are learnable weight matrices, and $U$ maps the action embedding into the node-update space.
\end{assumption}

Assumption~\ref{assump:message_passing} covers standard GNN-style node predictors, including GCN, MPNN-style message passing, graph transformers, and ActionNode GWM variants after choosing the propagation matrix $\hat A$. Multi-layer models are handled by composing layer-wise Lipschitz constants, which gives the weight product $\prod_\ell\|W_\ell\|_2$. Non-graph MLP world models are the degenerate case with no topology-dependent propagation.

\begin{assumption}[Lipschitz dynamic edge update]
\label{assump:edge_update}
For DE models, the learned edge update is Lipschitz in both node features and the current adjacency. In particular, there exist constants $M_X,M_A,L_A<\infty$ such that node and edge errors satisfy the coupled bounds
\begin{equation}
\begin{aligned}
e^X_{k+1} &\le L_X e^X_k+L_A e^A_k+\epsilon_X,\\
e^A_{k+1} &\le M_X e^X_k+M_A e^A_k+\epsilon_A.
\end{aligned}
\end{equation}
\end{assumption}

Assumption~\ref{assump:edge_update} captures differentiable or relaxed differentiable edge predictors, such as a bilinear edge head followed by a bounded projection. In FE models, there is no edge-prediction head, so $M_X=0$ and $M_A=0$. The joint node-edge operator then reduces to fixed-edge node-error recursion, which is the main regime distinction in our theory.

\begin{assumption}[Lipschitz reward and bounded reward approximation error]
\label{assump:reward}
The true reward $R^\star(G,a)$ is Lipschitz under a graph distance $d_G$: for all graph states $G,G'$ and actions $a$,
\begin{equation}
|R^\star(G,a)-R^\star(G',a)|\le L_R d_G(G,G').
\end{equation}
The learned reward model $R_\theta$ has uniformly bounded error
\begin{equation}
|R_\theta(G,a)-R^\star(G,a)|\le\epsilon_R.
\end{equation}
\end{assumption}

Assumption~\ref{assump:reward} is used only to connect rollout error to planning regret. For sparse or discontinuous task rewards, the bound can be applied to a smoothed or shaped reward, with the smoothing error included in the final regret bound.

\begin{assumption}[Discounted finite-horizon return]
\label{assump:discount}
Planning uses the finite-horizon discounted return
\begin{equation}
J^\star=\sum_{k=0}^{H-1}\gamma^k r_{t+k},\qquad \gamma\in[0,1].
\end{equation}
\end{assumption}

Assumption~\ref{assump:discount} follows the standard finite-horizon planning setup. The case $\gamma=1$ is allowed for finite $H$. For infinite-horizon analysis, one needs $\gamma<1$ and a stable enough error operator; otherwise, the regret bound correctly diverges when rollout errors are amplified over time.

\begin{assumption}[Linearized graph distance]
\label{assump:graph_distance}
The graph distance used in the regret analysis is bounded by a linear combination of node and edge errors:
\begin{equation}
d_G(G_k,\hat G_k)\le \alpha\|X_k-\hat X_k\|_F+\beta\|A_k-\hat A_k\|_F,
\end{equation}
where $\alpha,\beta\ge0$.
\end{assumption}

Assumption~\ref{assump:graph_distance} translates the vector error recursion into a reward-error bound. It is natural when node identities are fixed across rollout steps, as in our simulators. Spectral adjacency distances and edge-set distances can be upper-bounded by Frobenius-norm edge error, so their effects can be absorbed into $\beta$.

Together, Assumptions~\ref{assump:compact_state}--\ref{assump:graph_distance} define the scope of the theory. Lemma~\ref{lem:t1_fixed_edge} uses compact state, one-step error, and message passing. Lemma~\ref{lem:t2_joint_operator} adds the dynamic-edge Lipschitz condition. Lemma~\ref{lem:t3_geaf_proxy} uses the same nonnegative operator structure with spectral norm bounds. Lemma~\ref{lem:t4_regret} uses the reward and graph-distance assumptions to convert rollout error into planning regret.

\subsection{Problem Formulation}
\label{sec:problem_formulation}

We study how prediction errors accumulate when a GWM is rolled out for planning. The graph state $G_t$ and transition model $\mathcal{F}_\theta$ are defined in Equations~\eqref{eq:gwm_state}--\eqref{eq:gwm_transition}. Starting from an initial predicted graph $\hat G_0$, the model generates an $H$-step predicted trajectory according to Equation~\eqref{eq:gwm_rollout}. Because each predicted state becomes the input to the next step, one-step errors can accumulate and affect downstream decisions.

Let $(X_k,A_k)$ denote the true node features and adjacency at rollout step $k$, and let $(\hat X_k,\hat A_k)$ denote their GWM predictions. We measure node and edge rollout errors by
\begin{equation}
e^X_k=\|\hat X_k-X_k\|_F,\qquad e^A_k=\|\hat A_k-A_k\|_F.
\label{eq:node_edge_errors}
\end{equation}
We collect them into the joint error vector
\begin{equation}
u_k=(e^X_k,e^A_k)^\top.
\label{eq:joint_error_vector}
\end{equation}
The central question is how $u_k$ evolves with horizon, topology, and rollout regime. In fixed-edge (FE) GWMs, the adjacency is fixed and only node states are predicted. In dynamic-edge (DE) GWMs, both node states and edges are predicted, so node and edge errors can influence each other.

Our analysis uses the assumptions stated in Section~\ref{sec:assumptions}: compact graph states, bounded one-step node and edge prediction errors, a message-passing node transition, a Lipschitz dynamic edge update for DE models, and a Lipschitz reward used to translate rollout error into planning regret.

\subsection{Fixed-Edge Rollout Error}
\label{sec:t1_fixed_edge}

We first analyze the fixed-edge (FE) regime, where topology is held fixed and only node features are predicted. Rollout error then propagates through a fixed message-passing operator. The bound separates the topology-dependent part of amplification from the model-dependent part.

\begin{lemma}[Fixed-edge node error bound]
\label{lem:t1_fixed_edge}
Assume fixed adjacency $A_t\equiv A$ and a message-passing node transition with activation Lipschitz constant $L_\sigma$. Suppose the one-step node prediction error is bounded by $\epsilon_X$. Then, for any rollout step $k\ge 0$,
\begin{equation}
\begin{aligned}
e^X_k &\le L_X^k e^X_0+\epsilon_X\frac{L_X^k-1}{L_X-1},\\ L_X &=L_\sigma\underbrace{\rho(A)}_{\text{topology}}\underbrace{\prod_\ell \|W_\ell\|_2}_{\text{model}}.
\end{aligned}
\label{eq:fixed_edge_node_bound}
\end{equation}
\end{lemma}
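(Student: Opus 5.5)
The plan is to set up a one-step error recursion of the form $e^X_{k+1}\le L_X e^X_k+\epsilon_X$ and then unroll it. Write the error at step $k+1$ as a difference between the learned transition applied to the predicted state and the true transition applied to the true state. Insert the intermediate term $F_\theta^X(X_k,A,a_k)$ and split with the triangle inequality:
\begin{equation*}
\|\hat X_{k+1}-X_{k+1}\|_F\le \|F_\theta^X(\hat X_k,A,a_k)-F_\theta^X(X_k,A,a_k)\|_F+\|F_\theta^X(X_k,A,a_k)-F^{X\star}(X_k,A,a_k)\|_F .
\end{equation*}
The second term is at most $\epsilon_X$ by Assumption~\ref{assump:onestep_error}. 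This uses Assumption~\ref{assump:compact_state} to guarantee that $X_k$ lies in the domain where the uniform bound holds.

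The first term is a Lipschitz estimate for the message-passing map of Assumption~\ref{assump:message_passing}. The action term $Ua_k\mathbf 1^\top$ is identical in both arguments and cancels inside $\sigma$. Because $\sigma$ acts coordinatewise with constant $L_\sigma$, the Frobenius norm contracts by at most $L_\sigma$. Applying $\|MYN\|_F\le\|M\|_2\|Y\|_F\|N\|_2$ then gives the bound $L_\sigma\|\hat A\|_2\|W_1\|_2\|W_2\|_2\,e^X_k$. For deeper models I would compose this layer by layer, which turns $\|W_1\|_2\|W_2\|_2$ into $\prod_\ell\|W_\ell\|_2$. Since $L_\sigma\le 1$, the extra activation factors from intermediate layers can be dropped or absorbed into $L_\sigma$.

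The main obstacle is replacing $\|\hat A\|_2$ by $\rho(A)$. The spectral norm equals the spectral radius only for normal matrices, so I would argue that the propagation matrix is symmetric. For an undirected graph, $A$ is symmetric, so $\|A\|_2=\rho(A)$. For the normalized choice $\tilde A=D^{-1/2}AD^{-1/2}$ we have $\|\tilde A\|_2=\rho(\tilde A)\le 1$, which is bounded by $\rho(A)$ whenever $\rho(A)\ge1$; this holds for any graph with at least one edge. I would state this reading explicitly: $\rho(A)$ denotes the spectral radius of the symmetric propagation matrix induced by $A$, or an upper bound on it.

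For directed graphs I would need either an induced-norm convention or a similarity transform. The first route is to work in a norm adapted to $A$, giving $\rho(A)+\delta$ for any $\delta>0$. The second route is to assume diagonalizability, which introduces a condition-number constant. I expect this identification to be where care is required. Under the symmetry reading the step is immediate, and the recursion
\begin{equation*}
e^X_{k+1}\le L_X e^X_k+\epsilon_X,\qquad L_X=L_\sigma\rho(A)\prod_\ell\|W_\ell\|_2
\end{equation*}
follows.

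Finally, I would unroll the recursion by induction on $k$ to get $e^X_k\le L_X^k e^X_0+\epsilon_X\sum_{j=0}^{k-1}L_X^j$. The geometric sum equals $(L_X^k-1)/(L_X-1)$ when $L_X\ne1$. When $L_X=1$ it is interpreted as its limit $k$, which is the standard convention for \eqref{eq:fixed_edge_node_bound}.
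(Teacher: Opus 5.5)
Your proposal is correct and follows the paper's proof essentially step for step. It uses the same triangle-inequality split around $F_\theta^X(X_k,A,a_k)$, the same Lipschitz and submultiplicativity estimate for the message-passing map, the identification $\|A\|_2=\rho(A)$ for symmetric adjacency, and the same unrolling with the limiting form when $L_X=1$.

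Your explicit treatment of the normalized propagation matrix ($\|\tilde A\|_2\le 1\le\rho(A)$) and of the symmetry requirement is more careful than the paper. The paper simply writes $\|A\|_2$ for the propagation factor and then invokes symmetry.
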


Lemma~\ref{lem:t1_fixed_edge} shows that FE-GWM rollout error is controlled by two factors. The topology factor $\rho(A)$ determines how strongly errors can spread through the graph. The model factor $\prod_\ell\|W_\ell\|_2$ captures layer-wise amplification induced by the learned transition model. This motivates the \textbf{Graph Error Amplification Factor}
\begin{equation}
\GEAF(G)=\rho(A)\prod_\ell\|W_\ell\|_2.
\label{eq:geaf_def}
\end{equation}
Thus, with the same model weights, different graph topologies can induce different long-horizon error growth rates. The proof of Lemma~\ref{lem:t1_fixed_edge} is provided in Appendix~\ref{app:proof_t1_fixed_edge}.

\subsection{Dynamic-Edge Rollout Error}
\label{sec:t2_dynamic_edge}

We next analyze the dynamic-edge (DE) regime, where the GWM predicts both node states and graph structure. Here, rollout errors are coupled: node-feature errors can affect future edge prediction, and edge errors can change future message passing. We capture this interaction with a joint node-edge error operator.

\begin{lemma}[Joint node-edge error bound]
\label{lem:t2_joint_operator}
Assume a DE GWM with bounded one-step errors $\epsilon_X$ and $\epsilon_A$. Suppose that the node and edge errors satisfy the coupled Lipschitz bounds
\begin{equation}
\begin{aligned}
e^X_{k+1} &\le L_X e^X_k+L_A e^A_k+\epsilon_X,\\ e^A_{k+1}&\le M_X e^X_k+M_A e^A_k+\epsilon_A.
\end{aligned}
\label{eq:t2_scalar_bounds}
\end{equation}
Let $u_k=(e^X_k,e^A_k)^\top$. Then
\begin{equation}
\begin{aligned}
u_{k+1}&\preceq Bu_k+\epsilon,\\ B&=\begin{pmatrix}L_X & L_A\\ M_X & M_A\end{pmatrix},\\ \epsilon&=(\epsilon_X,\epsilon_A)^\top.
\end{aligned}
\label{eq:t2_recursion}
\end{equation}
The spectral radius of $B$ is
\begin{equation}
\begin{aligned}
\rho(B)
&=\frac{1}{2}\Big[(L_X+M_A)\\
&\quad+\sqrt{(L_X-M_A)^2+4L_AM_X}\Big].
\end{aligned}
\label{eq:joint_operator_spectral_radius}
\end{equation}
If $L_AM_X>0$, then
\begin{equation}
\rho(B)>\max(L_X,M_A).
\label{eq:t2_superadditive}
\end{equation}
\end{lemma}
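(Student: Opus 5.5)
The lemma has three parts, and I would prove them in order.

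\emph{Step 1: the vector recursion.} This part is immediate. Stack the two scalar inequalities in \eqref{eq:t2_scalar_bounds} and write them with the componentwise order $\preceq$ on $\mathbb{R}^2$. The right-hand sides are exactly the two rows of $Bu_k+\epsilon$, so $u_{k+1}\preceq Bu_k+\epsilon$ holds with $B$ and $\epsilon$ as defined. All entries of $B$ are nonnegative, being Lipschitz constants, so $B$ preserves $\preceq$ on the nonnegative cone. This gives the iterated bound $u_k\preceq B^k u_0+\sum_{j<k}B^j\epsilon$ by induction, which is the form needed later (Lemma~\ref{lem:t3_geaf_proxy}, Lemma~\ref{lem:t4_regret}). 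I would record it even though the statement only asks for the one-step form.

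\emph{Step 2: the spectral radius formula.} Compute the characteristic polynomial
\[
\lambda^2-(L_X+M_A)\lambda+(L_XM_A-L_AM_X)=0 .
\]
Its discriminant is
\[
(L_X+M_A)^2-4(L_XM_A-L_AM_X)=(L_X-M_A)^2+4L_AM_X .
\]
This is nonnegative because $L_A,M_X\ge0$. Hence both eigenvalues are real, namely $\lambda_\pm=\tfrac12\big[(L_X+M_A)\pm\sqrt{(L_X-M_A)^2+4L_AM_X}\big]$. Since $\lambda_+\ge\lambda_-$, we need $\lambda_+\ge|\lambda_-|$. This follows from $\lambda_++\lambda_-=L_X+M_A\ge0$. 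So $\rho(B)=\lambda_+$, which is \eqref{eq:joint_operator_spectral_radius}. Alternatively, one could cite Perron--Frobenius for nonnegative matrices, but the direct computation is self-contained.

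\emph{Step 3: strict superadditivity.} Assume $L_AM_X>0$. Then
\[
\sqrt{(L_X-M_A)^2+4L_AM_X}>|L_X-M_A| .
\]
Therefore
\[
\rho(B)>\tfrac12\big[(L_X+M_A)+|L_X-M_A|\big]=\max(L_X,M_A),
\]
using the identity $\tfrac12(a+b+|a-b|)=\max(a,b)$.

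The only step needing care is Step 2's identification of $\lambda_+$ as the spectral radius rather than merely the larger eigenvalue. This rests on the nonnegativity of the entries of $B$, which I would state explicitly. The paper's standing assumption that Lipschitz constants are nonnegative makes it hold.
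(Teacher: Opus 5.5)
Your proposal is correct and follows the paper's proof step for step. You stack the scalar bounds into $u_{k+1}\preceq Bu_k+\epsilon$, compute the characteristic polynomial and $\lambda_\pm$, and use $\sqrt{(L_X-M_A)^2+4L_AM_X}>|L_X-M_A|$ with $\tfrac12(a+b+|a-b|)=\max(a,b)$. The one difference is that you identify $\rho(B)=\lambda_+$ directly from the real eigenvalues and $\lambda_++\lambda_-=L_X+M_A\ge 0$, whereas the paper cites Perron--Frobenius for the nonnegative matrix $B$; both rest on the nonnegativity of the constants.
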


Lemma~\ref{lem:t2_joint_operator} shows that DE-GWM rollout error is governed by a coupled operator rather than a scalar node-error recursion. The off-diagonal terms $L_A$ and $M_X$ encode edge-to-node and node-to-edge error propagation. In FE models, there is no edge-prediction head, so $M_X=0$ and the recursion reduces to node-only prediction. In DE models, $M_X>0$ activates node-edge feedback, and Equation~\eqref{eq:t2_superadditive} shows that the coupled spectral radius can exceed both uncoupled components. The proof is provided in Appendix~\ref{app:proof_t2_joint_operator}.

\subsection{GEAF as a Proxy for the Joint Operator}
\label{sec:t3_geaf_proxy}

The joint operator $B$ captures coupled node-edge error propagation, but estimating all of its entries can be difficult. We therefore use $\GEAF$ as a topology-aware proxy for the dominant rollout amplification scale.

\begin{lemma}[GEAF proxy bound]
\label{lem:t3_geaf_proxy}
Assume the signal-dominant regime $\|A\|_2\ge R_X$, where $R_X$ bounds the non-topological contribution to the first-row error term. Then
\begin{equation}
\begin{aligned}
\rho(B)
&\le \GEAF(G)\left(1+\frac{R_X}{\|A\|_2}\right)\\
&\le 2\GEAF(G).
\end{aligned}
\label{eq:geaf_proxy_bound}
\end{equation}
Moreover, when model weights are fixed across topologies, the rank order of $\GEAF(G)$ follows the rank order of $\rho(A)$.
\end{lemma}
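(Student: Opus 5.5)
The plan is to bound $\rho(B)$ by an induced matrix norm and then control that norm row by row. Since $B$ is a nonnegative $2\times 2$ matrix, we have $\rho(B)\le\|B\|_\infty=\max(L_X+L_A,\;M_X+M_A)$. So it suffices to bound each row sum by $\GEAF(G)(1+R_X/\|A\|_2)$.

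\textbf{First row.} By Lemma~\ref{lem:t1_fixed_edge}, $L_X=L_\sigma\rho(A)\prod_\ell\|W_\ell\|_2\le\GEAF(G)$, using $L_\sigma\le1$ from Assumption~\ref{assump:message_passing}. For the edge-to-node term $L_A$, we linearize the message-passing update in $A$:
\begin{equation}
\|\sigma(\hat A' X W_1+Ua\mathbf 1^\top)W_2-\sigma(\hat A X W_1+Ua\mathbf 1^\top)W_2\|_F\le L_\sigma\|X\|_F\|\hat A'-\hat A\|_F\prod_\ell\|W_\ell\|_2 .
\end{equation}
Assumption~\ref{assump:compact_state} gives $\|X\|_F\le R_X$, so $L_A\le R_X\prod_\ell\|W_\ell\|_2$. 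This is the ``non-topological contribution to the first-row error term'' named in the statement. Writing this as $\GEAF(G)\cdot R_X/\rho(A)$, and treating $A$ as symmetric so that $\rho(A)=\|A\|_2$, we get
\begin{equation}
L_X+L_A\le\GEAF(G)\left(1+\frac{R_X}{\|A\|_2}\right).
\end{equation}

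\textbf{Second row.} I would not try to prove a separate bound here. The statement's phrase ``first-row error term'' suggests the intended reading is that the edge-head constants are dominated by the node row. I would therefore adopt explicitly the convention, implicit in the proxy claim, that $M_X+M_A\le L_X+L_A$; for instance, this holds after rescaling the edge-head norm $\|Q\|_2$, which the regularizer $\tfrac12\|W\|_2\|Q\|_2$ couples to $W$. This step is the main obstacle: without such a normalization, $M_A$ is unrelated to $\rho(A)$, and no bound in terms of $\GEAF$ can hold. I would state this clearly as the scope of the lemma.

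\textbf{Remaining claims.} The signal-dominant assumption $\|A\|_2\ge R_X$ gives $R_X/\|A\|_2\le 1$, which yields the factor $2$. For the ranking claim, $\GEAF(G)=c\,\rho(A)$ with $c=\prod_\ell\|W_\ell\|_2>0$ fixed across topologies. Multiplying by a positive constant is strictly monotone, so the rank order of $\GEAF(G)$ matches that of $\rho(A)$.
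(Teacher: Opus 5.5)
Your proposal follows the paper's proof essentially step for step: a row-sum (Gershgorin) bound $\rho(B)\le\max(L_X+L_A,\,M_X+M_A)$, the first-row bound $L_X+L_A\le\GEAF(G)\bigl(1+R_X/\|A\|_2\bigr)$, the factor $2$ from $\|A\|_2\ge R_X$, and positive scaling of $\rho(A)$ for the ranking claim. You derive $L_A\le R_X\prod_\ell\|W_\ell\|_2$ explicitly and state $M_X+M_A\le L_X+L_A$ as an assumption, whereas the paper only asserts both (``the second row is no larger than this bound''); since $\rho(B)\ge M_A$ for nonnegative $B$, that second-row condition cannot be dropped, so flagging it as the lemma's scope is correct.
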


Lemma~\ref{lem:t3_geaf_proxy} shows that $\GEAF$ upper-bounds the joint operator up to a constant factor in the signal-dominant regime. In FE models, $M_X=0$, so $\rho(B)=L_X=\GEAF$ by construction; this is an identity rather than empirical validation. In DE models, $M_X>0$ breaks this identity, so comparing $\rho(B)$ and $\GEAF$ becomes meaningful. The proof is provided in Appendix~\ref{app:proof_t3_geaf_proxy}.

\subsection{Planning Regret Under Graph Rollout Error}
\label{sec:t4_regret}

Rollout error affects planning because actions are selected from predicted graph trajectories. We connect the joint node-edge error recursion to planning regret through a graph-state simulation bound.

\begin{lemma}[Planning regret bound]
\label{lem:t4_regret}
Assume the reward is Lipschitz with constant $L_R$ under a graph distance
\begin{equation}
d_G(\hat G_k,G_k)=c^\top u_k.
\label{eq:t4_graph_distance}
\end{equation}
Let $\hat\pi_\theta$ be the policy selected by planning under the learned GWM, and let $\pi^*$ be the policy selected under the true dynamics. Then
\begin{equation}
\begin{aligned}
J^*(\pi^*)-J^*(\hat\pi_\theta)
&\le 2L_R\|c\|_2\kappa(B)\tilde\epsilon\\
&\quad \times \Phi_H(\gamma,\rho(B))\\
&\quad +2\epsilon_R\frac{1-\gamma^H}{1-\gamma},
\end{aligned}
\label{eq:planning_regret_bound}
\end{equation}
where
\begin{equation}
\Phi_H(\gamma,\rho)=\frac{1}{\rho-1}\left[\frac{1-(\gamma\rho)^H}{1-\gamma\rho}-\frac{1-\gamma^H}{1-\gamma}\right].
\label{eq:phi_h}
\end{equation}
If $\gamma\rho(B)>1$, the rollout-error contribution grows super-linearly with horizon $H$.
\end{lemma}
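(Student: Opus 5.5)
The plan is a simulation-lemma argument in three steps: compare returns under the learned and true models, bound the per-step gap using the joint recursion of Lemma~\ref{lem:t2_joint_operator}, then sum the geometric series.

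\textbf{Step 1 (regret decomposition).} For any action sequence (or open-loop policy) $\pi$, let $\hat J(\pi)$ be the return predicted by the learned GWM using $R_\theta$ on the predicted trajectory $\hat G_k$, and $J^*(\pi)$ the true return. Since $\hat\pi_\theta$ maximizes $\hat J$,
\begin{equation*}
J^*(\pi^*)-J^*(\hat\pi_\theta)\le [J^*(\pi^*)-\hat J(\pi^*)]+[\hat J(\hat\pi_\theta)-J^*(\hat\pi_\theta)],
\end{equation*}
so it suffices to show $\sup_\pi|J^*(\pi)-\hat J(\pi)|$ is at most half the right-hand side of \eqref{eq:planning_regret_bound}. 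This is where the factor $2$ comes from.

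\textbf{Step 2 (per-step reward gap).} For each $k$, split
\begin{equation*}
|R_\theta(\hat G_k,a_k)-R^*(G_k,a_k)|\le |R_\theta(\hat G_k,a_k)-R^*(\hat G_k,a_k)|+|R^*(\hat G_k,a_k)-R^*(G_k,a_k)|.
\end{equation*}
By Assumption~\ref{assump:reward}, this is at most $\epsilon_R+L_R\,c^\top u_k\le \epsilon_R+L_R\|c\|_2\|u_k\|_2$, using \eqref{eq:t4_graph_distance}. Summing $\gamma^k\epsilon_R$ over $k<H$ produces the term $\epsilon_R(1-\gamma^H)/(1-\gamma)$.

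\textbf{Step 3 (bound $\|u_k\|_2$).} Iterate $u_{k+1}\preceq Bu_k+\epsilon$ from Lemma~\ref{lem:t2_joint_operator}. Because $B$ has nonnegative entries, the order $\preceq$ is preserved, and starting from $u_0=0$ (a correct initial graph) we get
\begin{equation*}
u_k\preceq \sum_{j=0}^{k-1}B^j\epsilon.
\end{equation*}
To pass from powers of $B$ to $\rho(B)$, diagonalize $B=P\Lambda P^{-1}$. The $2\times2$ nonnegative $B$ has distinct real eigenvalues whenever the discriminant in \eqref{eq:joint_operator_spectral_radius} is positive. This gives $\|B^j\|_2\le\kappa(B)\rho(B)^j$ with $\kappa(B)=\|P\|_2\|P^{-1}\|_2$, and hence
\begin{equation*}
\|u_k\|_2\le \kappa(B)\tilde\epsilon\,\frac{\rho^k-1}{\rho-1},\qquad \tilde\epsilon=\|\epsilon\|_2,\ \rho=\rho(B).
\end{equation*}
Then
\begin{equation*}
\sum_{k=0}^{H-1}\gamma^k\frac{\rho^k-1}{\rho-1}=\frac{1}{\rho-1}\left[\frac{1-(\gamma\rho)^H}{1-\gamma\rho}-\frac{1-\gamma^H}{1-\gamma}\right]=\Phi_H(\gamma,\rho),
\end{equation*}
which yields the first term of \eqref{eq:planning_regret_bound}. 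If $u_0\neq0$, I would absorb $e_0$ into $\tilde\epsilon$, since $\rho^k\le\rho^k-1+1$ can be handled by redefining $\tilde\epsilon$. The degenerate cases $\rho=1$ or $\gamma\rho=1$ follow by taking limits.

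\textbf{Growth claim.} For $\gamma\rho>1$, $\Phi_H$ grows like $(\gamma\rho)^H/((\rho-1)(\gamma\rho-1))$, which is exponential and hence super-linear in $H$.

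\textbf{Main obstacle.} The hard step is converting the matrix-power bound into a scalar $\rho(B)^j$ estimate with a constant $\kappa(B)$. This requires $B$ to be diagonalizable. If $B$ is defective (repeated eigenvalue with a Jordan block), I would first try perturbing $B$ to $B+\delta I$ and letting $\delta\to0$. If that fails, I would define $\kappa(B)$ as $\sup_j\|B^j\|_2/\rho(B)^j$, or use a Perron-vector weighted norm, which applies when $B$ is irreducible. A second, smaller issue is the gap between the closed-loop and open-loop settings: $\pi^*$ and $\hat\pi_\theta$ are compared through predicted open-loop action sequences, and I would state this interpretation explicitly.
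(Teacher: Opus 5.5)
Your proposal is correct and follows the paper's proof essentially step for step: the factor $2$ from comparing true and model returns, the split through $R^\star(\hat G_k,a_k)$ using the Lipschitz reward and $c^\top u_k\le\|c\|_2\|u_k\|_2$, the bound $\|u_k\|_2\le\kappa(B)\tilde\epsilon\sum_{i<k}\rho(B)^i$, and the geometric sums giving $\Phi_H$. You are in fact more explicit than the paper about the optimality argument behind the $2$ and the diagonalization behind its ``standard matrix-recursion bound,'' under the same implicit assumptions $u_0=0$ and $B$ diagonalizable with $\kappa(B)=\|P\|_2\|P^{-1}\|_2$. Your contingency remarks do not work as written, however. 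A nonzero $u_0$ cannot be absorbed into $\tilde\epsilon$ alone, because the $k=0$ summand of $\Phi_H$ vanishes, so the leftover constant of order $\kappa(B)\|u_0\|_2$ must go into the $\epsilon_R$-type term. In the defective case ($L_X=M_A$ with exactly one of $L_A,M_X$ positive), the shift $B+\delta I$ keeps the Jordan block, $\sup_j\|B^j\|_2/\rho(B)^j=\infty$, and $B$ is reducible; none of your fallbacks yields a finite $\kappa(B)$, so that case has to be excluded, as the paper implicitly does.
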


Lemma~\ref{lem:t4_regret} shows that small one-step prediction errors can still cause large planning error when the graph rollout operator is expansive. The factor $\rho(B)$ controls how node and edge errors accumulate, while $\gamma$ controls how future errors are weighted in planning. The proof is provided in Appendix~\ref{app:proof_t4_regret}.

\begin{corollary}[Error level versus error growth slope]
\label{cor:slope_vs_level}
Under the FE recursion in Lemma~\ref{lem:t1_fixed_edge}, when the rollout is not dominated by the one-step error floor, the asymptotic log-error growth rate satisfies
\begin{equation}
\frac{\partial \log e^X_k}{\partial k}\to \log L_X.
\label{eq:slope_vs_level}
\end{equation}
Thus, the slope of the log-error curve can be a more stable indicator of topology-dependent amplification than the final error level.
\end{corollary}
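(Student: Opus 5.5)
The plan is to unroll the scalar recursion behind Lemma~\ref{lem:t1_fixed_edge} and then read off the log-slope. The FE recursion is $e^X_{k+1}\le L_Xe^X_k+\epsilon_X$. We interpret ``not dominated by the one-step error floor'' as follows: the recursion is assumed to hold approximately with equality along the rollout, i.e.\ the worst-case one-step error is attained. This is the regime in which the bound describes the actual error. In that regime we work with the explicit affine solution
\begin{equation*}
e^X_k = L_X^k e^X_0+\epsilon_X\frac{L_X^k-1}{L_X-1} = L_X^k\Big(e^X_0+\frac{\epsilon_X}{L_X-1}\Big)-\frac{\epsilon_X}{L_X-1}.
\end{equation*}
We identify the derivative $\partial\log e^X_k/\partial k$ with the discrete increment $\log e^X_{k+1}-\log e^X_k$.

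First, consider the expansive case $L_X>1$. Write $C=e^X_0+\epsilon_X/(L_X-1)>0$ and $c=\epsilon_X/(L_X-1)$. Then
\begin{equation*}
\log e^X_{k+1}-\log e^X_k=\log L_X+\log\frac{1-c/(CL_X^{k+1})}{1-c/(CL_X^{k})},
\end{equation*}
and the correction term vanishes as $k\to\infty$ because $L_X^k\to\infty$. Since $k$ is continuous in the statement, the same computation works directly: $\frac{d}{dk}\log(CL_X^k-c)=\frac{CL_X^k\log L_X}{CL_X^k-c}\to\log L_X$. The initial error $e^X_0$ and the floor $\epsilon_X$ enter only through $C$ and $c$. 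They therefore shift the \emph{level} $\log e^X_k\approx k\log L_X+\log C$ but not the slope. This gives the interpretive claim that the slope isolates $L_X=L_\sigma\,\GEAF$, so it varies with $\rho(A)$ when weights are fixed, while the level is contaminated by $e^X_0$ and $\epsilon_X$.

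The main obstacle is what the hypothesis means when $L_X\le1$. In that case the solution converges to the fixed point $\epsilon_X/(1-L_X)$, and the log-slope tends to $0$, not to $\log L_X$. This is exactly the floor-dominated regime. To handle it, I would read the hypothesis as the regime $e^X_k\gg\epsilon_X/|L_X-1|$, i.e.\ $e^X_0$ far above the floor. Then $e^X_k\approx L_X^k e^X_0$ over the relevant window, and the slope is $\log L_X+O\big(\epsilon_X/((L_X-1)e^X_k)\big)$. So the statement holds as an approximation on that transient window whenever $L_X<1$, and as a true limit when $L_X>1$. I will state this caveat explicitly.

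Finally, the bound itself only gives $\log e^X_k\le k\log L_X+\log C$, an upper envelope. The convergence of the slope therefore applies to the bounding trajectory, or to the true error under the tightness assumption above. I would note this so the corollary is read as a statement about the amplification rate of the recursion.
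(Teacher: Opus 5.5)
Your argument is correct and follows the same route as the paper: unroll the FE recursion, isolate $L_X^k$ as the dominant $k$-dependence, and take logarithms. You are more careful than the paper's one-line step $e^X_k\approx CL_X^k$, since you state that the bound must be treated as (near-)tight and that the result is a true limit only for $L_X>1$ and a transient approximation for $L_X<1$. One small slip: $L_X=1$ does not belong with the failure case, because there $e^X_k=e^X_0+k\epsilon_X$ has no finite fixed point and its log-slope $\epsilon_X/(e^X_0+k\epsilon_X)\to0=\log L_X$, so the conclusion holds in that case too.
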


Corollary~\ref{cor:slope_vs_level} explains why level-based metrics can be weak even when topology affects rollout dynamics. In practice, trained models may reach an error floor of order $\epsilon_X/(1-L_X)$, which can mask the relation between $\GEAF$ and final NodeMSE. Therefore, we report GrowthSlope as an exploratory diagnostic rather than as a pre-registered confirmatory claim. The proof is provided in Appendix~\ref{app:proof_slope_vs_level}.

\section{Dataset and Evaluation Protocol}
\label{sec:dataset}

\subsection{Graph Topology Generation}
\label{sec:data_topo}

We generate seven canonical graph families with distinct spectral and structural properties at $N=50$ nodes, using three outer seeds for each topology. The topology statistics are summarized in Table~\ref{tab:topo_stats}.

\begin{table*}[t]
\centering
\caption{Topology statistics for the seven synthetic graph families at $N=50$. The table defines the spectral stress range used in the rollout experiments: $\rho(A)$ is the adjacency spectral radius, $\widehat{\mathrm{GEAF}}$ is the mean graph error amplification factor over successful baseline runs, NodeMSE@32 is the median 32-step prediction error, GrowthSlope is the median log-error slope from horizon 4 to 32, and Diverged counts rollout explosions across seeds.}
\label{tab:topo_stats}
\small
\setlength{\tabcolsep}{5pt}
\begin{tabular}{lrrrrrr}
\toprule
\hline
Topology & $N$ & $\rho(A)$ & $\widehat{\mathrm{GEAF}}$ & NodeMSE@32 & GrowthSlope & Diverged \\
\midrule
Chain       & 50 & $2.00$        & $8.7$   & $5.0 \times 10^{-4}$ & $-0.0075$ & 0/3 \\
Tree        & 50 & $2.53$        & $11.8$  & $4.0 \times 10^{-4}$ & $-0.0042$ & 0/3 \\
Grid        & 50 & $3.65$        & $16.0$  & $2.7 \times 10^{-4}$ & $-0.0071$ & 1/3 \\
Small-World & 50 & $4.16\pm0.05$ & $19.2$  & $4.5 \times 10^{-4}$ & $+0.0008$ & 1/3 \\
Scale-Free  & 50 & $5.94\pm0.15$ & $24.3$  & $2.9 \times 10^{-4}$ & $+0.0005$ & 3/3 \\
Star        & 50 & $7.00$        & $29.4$  & $1.9 \times 10^{-4}$ & $-0.0039$ & 0/3 \\
Complete    & 50 & $49.00$       & $224.9$ & $1.7 \times 10^{-4}$ & $0.0000$  & 0/3 \\
\hline
\bottomrule
\end{tabular}
\end{table*}

The seven topology families span a broad spectral range, from $\rho(A)=2.0$ for chain graphs to $\rho(A)=49$ for complete graphs. They cover three structural regimes: low-connectivity or tree-like graphs (chain, tree), lattice or clustered graphs (grid, small-world), and hub-dominated graphs (scale-free, star, complete). Scale-free graphs are generated by preferential attachment with $m=2$ edges per new node~\cite{barabasi1999emergence}. Small-world graphs use the Watts--Strogatz model with $k=4$ and rewiring probability $p=0.3$~\cite{watts1998collective}. All other topologies are deterministic for fixed $N$. Each instance stores $A$, initial node features $X_0\in\mathbb{R}^{N\times 8}$, and the rollout trajectory $\{X_t,A_t\}_{t=1}^{T}$.
\subsection{Simulators}
\label{sec:data_sim}

\paragraph{Fixed-Edge (FE) simulator.}
In the FE simulator, the adjacency matrix $A$ is fixed throughout rollout, matching Section~\ref{sec:t1_fixed_edge}. Node features evolve according to
\begin{equation}
X_t=f_{\rm true}(X_{t-1},A)+\varepsilon_t,\qquad \varepsilon_t\sim\mathcal{N}(0,\sigma^2 I).
\label{eq:fe_simulator}
\end{equation}
Here $\sigma=0.01$. Node features $X\in\mathbb{R}^{N\times 8}$ encode state variables such as congestion or utilization. Rollout follows Equation~\eqref{eq:gwm_rollout}. The rollout horizon is $T=50$, and models are evaluated at $H\in\{1,2,4,8,16,32\}$.

\paragraph{Dynamic-Edge (DE) simulator.}
In the DE simulator, both node features and edges evolve during rollout, matching Section~\ref{sec:t2_dynamic_edge}. The edge head follows the dynamic-edge prediction form in Section~\ref{sec:instantiations}:
\begin{equation}
\hat{A}_t=\sigma(H_tQH_t^\top).
\label{eq:de_simulator_edge_head}
\end{equation}
We use $\|Q\|_2=1.0$ by default. The joint node-edge error follows the operator recursion in Equation~\eqref{eq:t2_recursion}, with the spectral radius given by Equation~\eqref{eq:joint_operator_spectral_radius}. Empirically, this coupling is activated in the DE regime: $\rho(B)=132$--$368$, compared with FE values in the range $2$--$50$. The rollout horizon is $T=32$.

\paragraph{Agent calling-tree testbed.}
The agent calling-tree testbed evaluates GWM rollouts on a heterogeneous task-execution graph. Each instance is a directed acyclic graph with $N\approx22$--$30$ nodes, 9 node types, and 6 edge types. Node types include \textit{planner}, \textit{validator}, \textit{executor}, \textit{checker}, \textit{aggregator}, \textit{reporter}, \textit{logger}, \textit{error\_handler}, and \textit{final\_answer}. Edge types include \textit{calls}, \textit{validates}, \textit{reports}, \textit{errors}, \textit{triggers}, and \textit{logs}. Each node has an 8-dimensional execution-status vector. The primary metric is sink-node success rate, $\mathrm{sr}_{\rm sink}$, which measures whether the final-answer node reaches a successful terminal state. As calibration, a random policy gives $\mathrm{sr}_{\rm sink}=0.262$, while an oracle policy exceeds $0.40$.

\paragraph{Platform skill-graph testbed.}
The platform skill-graph testbed evaluates GWM rollouts on a heterogeneous skill-management graph. Each instance contains $N=40$ nodes across 8 node types: \textit{skill}, \textit{validator}, \textit{adapter}, \textit{artifact}, \textit{failure\_case}, \textit{patch}, \textit{task}, and \textit{tool}. The graph models dependencies among skills, validators, adapters, artifacts, and failure-recovery components, following the broader trend in tool/API and skill-based agent evaluation~\cite{li2023apibank,qin2024toolllm,liu2024agentbench,trivedi2024appworld,huang2025asg,cascade2025,xia2026graspgraphstructuredskillcompositions}. The primary metric is skill success rate, $\mathrm{sr}_{\rm skill}$. Under a no-op policy, $\mathrm{sr}_{\rm skill}$ decays from $1.0$ to about $0.317$ at $T=10$, showing that skill quality degrades without maintenance actions. R-GCN calibration~\cite{schlichtkrull2018modeling} against the ground-truth simulator gives Pearson $r=0.231$ and MAE $=0.029$.

\section{Error-Aware GWM}
\label{sec:methods}

\paragraph{Graph world model setup.}
We study GWMs that predict graph-structured states using the transition model in Equation~\eqref{eq:gwm_transition} and the rollout procedure in Equation~\eqref{eq:gwm_rollout}. In the fixed-edge setting, the node-state update follows the GCN-style transition:
\begin{equation}
\begin{aligned}
\hat X_{k+1} &=\tanh(\tilde A\hat X_kW_1+Ua_k)W_2+\xi_k,\\
\tilde A &=D^{-1/2}AD^{-1/2},
\end{aligned}
\label{eq:method_fe_update}
\end{equation}
where $\tilde A$ is the normalized adjacency matrix and $\xi_k$ is i.i.d. rollout noise. This update follows the ActionNode-style GWM formulation of Feng et al.~\cite{feng2025gwm}, where actions are injected into the graph state during prediction.

\paragraph{Error-aware training objective.}
Our Error-Aware GWM augments the standard prediction objective with three stability terms:
\begin{equation}
\mathcal{L}_{\rm EA}=\mathcal{L}_{\rm pred}+\lambda_{\rm spec}R_{\rm spec}+\lambda_{\rm roll}R_{\rm roll}+\lambda_{\rm crit}R_{\rm crit}.
\label{eq:error_aware_objective}
\end{equation}
The first term, $\mathcal{L}_{\rm pred}$, is the supervised one-step prediction loss for node states and, in the dynamic-edge setting, edge states. The remaining terms reduce long-horizon rollout instability.

\paragraph{Spectral regularization.}
The spectral regularizer controls the model-dependent part of graph error amplification. Spectral norm control is a standard way to limit layer-wise Lipschitz constants in neural networks~\cite{miyato2018spectral}. Motivated by the FE bound in Lemma~\ref{lem:t1_fixed_edge}, we penalize large transition-layer spectral norms:
\begin{equation}
R_{\rm spec}=\sum_{\ell}\|W_\ell\|_2^2.
\label{eq:r_spec}
\end{equation}
This term discourages large values of $\prod_\ell\|W_\ell\|_2$, reducing the model factor in $\GEAF(G)=\rho(A)\prod_\ell\|W_\ell\|_2$. It is especially important on high-$\rho(A)$ graphs, where topology already amplifies rollout errors.

\paragraph{Rollout consistency regularization.}
The rollout consistency term directly penalizes multi-step prediction drift. It plays a role analogous to training autoregressive models under their own generated histories~\cite{bengio2015scheduled} and to short-rollout objectives in model-based policy optimization~\cite{janner2019trust}. Given a rollout horizon set $\mathcal{H}$, we define
\begin{equation}
R_{\rm roll}=\sum_{h\in\mathcal{H}}\|\hat X_{t+h}^{(h)}-X_{t+h}\|_F^2,
\label{eq:r_roll}
\end{equation}
where $\hat X_{t+h}^{(h)}$ is the $h$-step autoregressive prediction produced by repeatedly applying the learned transition model. Unlike one-step training, this term exposes the model to its own rollout distribution and reduces compounding error over long horizons.

\paragraph{Critical-node weighting.}
The critical-node term gives larger weight to errors on structurally important nodes. Let $w_v$ be a normalized centrality-based weight for node $v$, such as degree, PageRank, or betweenness centrality. We define
\begin{equation}
R_{\rm crit}=\sum_{v\in V}w_v\|\hat X_{t+1,v}-X_{t+1,v}\|_2^2.
\label{eq:r_crit}
\end{equation}
This term reflects that errors on hubs or high-centrality nodes can propagate to many downstream nodes during message passing. By emphasizing critical nodes, Error-Aware GWM reduces the risk that local errors trigger global rollout instability.

\paragraph{Dynamic-edge extension.}
For dynamic-edge models, we add the bilinear edge-prediction head from Equation~\eqref{eq:de_simulator_edge_head}. The edge loss is binary cross-entropy with weight $\lambda_e=1.0$, and the graph regularization term is
\begin{equation}
L_g=\frac{1}{2}\|W\|_2\|Q\|_2.
\label{eq:de_graph_regularizer}
\end{equation}
This term controls the interaction between node-transition weights $W$ and edge-prediction weights $Q$, which is important because DE rollouts activate the coupled node-edge operator in Equation~\eqref{eq:t2_recursion}.

\section{Experiments}
\label{sec:experiments}

\subsection{Experimental Setup}
\label{sec:experimental_setup}

\paragraph{Baselines.}
We compare six world-model baselines spanning non-graph dynamics, message passing, graph transformers, and action-node GWMs. B1 is an MLP world model (MLP-WM) that ignores graph structure. B2 is a vanilla GCN world model based on graph convolution~\cite{kipf2017semi}. B3 is an MPNN world model based on neural message passing~\cite{gilmer2017neural}. B4 is a GPS-style graph transformer~\cite{rampasek2022recipe}. B5 is an ActionNode GWM based on the fixed-edge architecture of Feng et al.~\cite{feng2025gwm}. B6 is our Error-Aware GWM, which adds spectral control, rollout consistency, and critical-node weighting to the same rollout interface.

\paragraph{Topologies.}
We evaluate all baselines on seven synthetic graph families: chain, tree, grid, small-world, scale-free, star, and complete graphs. Each graph has $N=50$ nodes. Scale-free graphs follow the preferential-attachment model~\cite{barabasi1999emergence}, and small-world graphs follow the Watts--Strogatz model~\cite{watts1998collective}. These topologies span a broad spectral range, from $\rho(A)=2.0$ for chain graphs to $\rho(A)=49.0$ for complete graphs, as summarized in Table~\ref{tab:topo_stats}.

\paragraph{Fixed-edge training.}
In the fixed-edge setting, the graph topology is fixed during rollout and each model predicts node states only. We train all six baselines on the FE simulator described in Section~\ref{sec:data_sim}. Evaluation uses rollout horizons $H\in\{1,2,4,8,16,32\}$, allowing us to measure both short-horizon prediction quality and long-horizon error accumulation, following the standard concern of compounding error in model-based rollout~\cite{asadi2018lipschitz,janner2019trust}.

\paragraph{Dynamic-edge training.}
In the dynamic-edge setting, each baseline is extended with the bilinear edge-prediction head in Equation~\eqref{eq:de_simulator_edge_head}. The model predicts both node features and edges, and is trained with the node loss, edge BCE loss, and the graph regularizer in Equation~\eqref{eq:de_graph_regularizer}. This setting is related to dynamic graph representation learning and temporal graph networks~\cite{rossi2020temporal,roland2022}, but differs in that the predicted graph is fed back as an autoregressive world-model state for rollout prediction. All six baselines are retrained in Stream A across the six DE topologies and three seeds, yielding $6\times6\times3=108$ runs.

\paragraph{Heterogeneous agent testbeds.}
For the agent calling-tree and platform skill-graph testbeds, we evaluate six R-GCN variants: small, medium, deep, wide, action-node, and critical-node variants. These models handle typed nodes and typed edges in heterogeneous graphs~\cite{schlichtkrull2018modeling}, following the general message-passing view of graph networks~\cite{battaglia2018relational}. The testbeds are motivated by API/tool-use and interactive-agent benchmarks~\cite{li2023apibank,qin2024toolllm,liu2024agentbench,trivedi2024appworld}, but evaluate a different question: whether a graph world model can roll out the evolving execution graph. Each model is trained for 50 epochs with T-batched forward passes.

\subsection{RQ1: Does Topology Predict Rollout-Error Growth?}
\label{sec:rq1_topology_growth}

We first ask whether topology explains the \emph{rate} at which rollout error grows. The theory predicts that $\rho(A)$ and $\GEAF$ should appear most clearly in temporal growth, not necessarily in the final error level after optimization and error floors intervene. We therefore test whether higher-GEAF graphs have faster GrowthSlope, and whether this behavior differs from standard GNN over-smoothing.

\subsubsection{GEAF tracks error-growth slope}
\label{sec:h1}

Figure~\ref{fig:h1} relates $\log\GEAF$ to GrowthSlope$_{4\to32}$, the slope of the log-NodeMSE curve from horizon $4$ to horizon $32$. The trend is positive: runs with larger $\log\GEAF$ tend to accumulate error faster. This matches Corollary~\ref{cor:slope_vs_level}, which predicts that topology-dependent amplification should be more visible in the log-error slope than in a single terminal NodeMSE value. The takeaway is not that GEAF alone determines absolute error, but that it tracks the temporal failure mode: high-GEAF graphs are more prone to long-horizon error growth.

\begin{figure}[t]
  \centering
  \includegraphics[width=\linewidth]{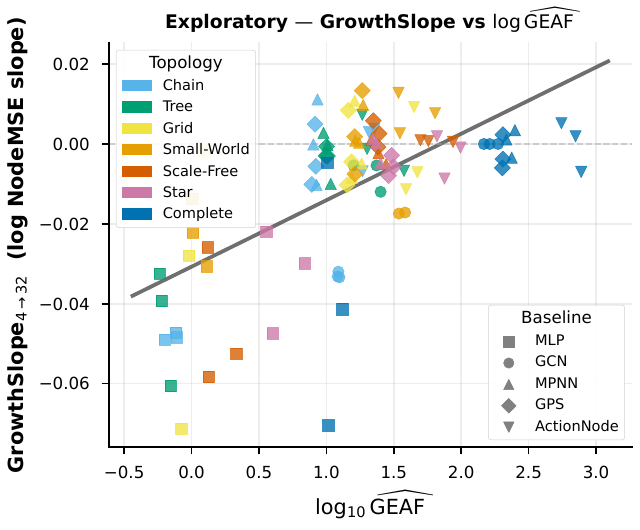}
  \caption{
    \textbf{GEAF correlates with rollout-error growth.}
    Each point is a graph-model run. The horizontal axis is $\log\GEAF$, and the vertical axis is GrowthSlope$_{4\to32}$, the slope of the log-NodeMSE curve from horizon $4$ to horizon $32$. The positive trend supports the interpretation that GEAF is a growth-rate diagnostic rather than a terminal-error predictor.}
  \label{fig:h1}
\end{figure}

\subsubsection{Rollout amplification is distinct from over-smoothing}
\label{app:oversmoothing_vs_geaf}

We next compare GWM rollout amplification with classical GNN over-smoothing. The two effects use different operators. Over-smoothing is governed by repeated normalized propagation, which pushes node representations toward a low-variation domain. GWM rollout error is governed by the autoregressive amplification factor $\widehat{\mathrm{GEAF}}(\ell)=\rho(A)^\ell c^\ell$, where depth and topology jointly affect error growth.

Figure~\ref{fig:app_oversmoothing_geaf} shows this distinction. The over-smoothing operator remains near the normalized propagation domain, while $\widehat{\mathrm{GEAF}}(\ell)$ grows rapidly with message-passing depth and graph spectral radius. Thus, GWM error amplification is not a restatement of standard over-smoothing; it describes a separate rollout-error mechanism.

\begin{figure}[t]
  \centering
  \includegraphics[width=\linewidth]{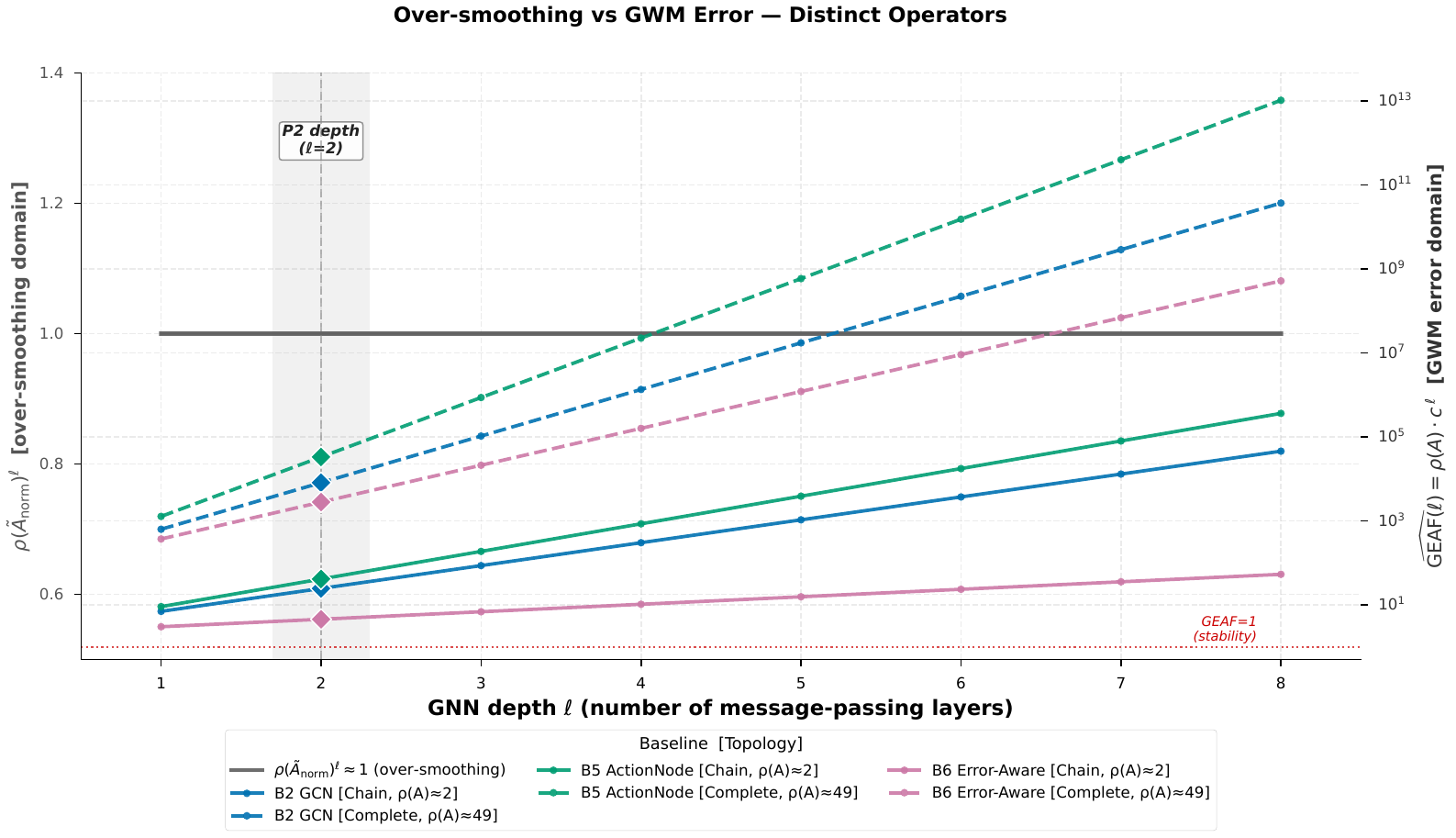}
  \caption{
  \textbf{Over-smoothing and GWM error amplification use different operators.}
  The over-smoothing operator remains near the normalized propagation domain, whereas $\widehat{\mathrm{GEAF}}(\ell)$ grows with message-passing depth and graph spectral radius. The comparison separates representation collapse from autoregressive rollout amplification.
  }
  \label{fig:app_oversmoothing_geaf}
\end{figure}

\subsubsection{Global spectral amplification differs from local hub failure}
\label{app:global_local_fe_diagnostics}

We further separate global spectral amplification from local hub-mediated failure. Figure~\ref{fig:app_global_local} shows that complete graphs have the largest $\widehat{\mathrm{GEAF}}$, consistent with a $\rho(A)$-dominated global error ceiling. However, star graphs show the largest affected-node fraction under hub injection, despite having lower GEAF than complete graphs. This indicates that global rollout amplification and local failure spread are related but distinct mechanisms: $\rho(A)$ controls the global ceiling, while hub concentration controls localized propagation.

\begin{figure}[t]
  \centering
  \includegraphics[width=\linewidth]{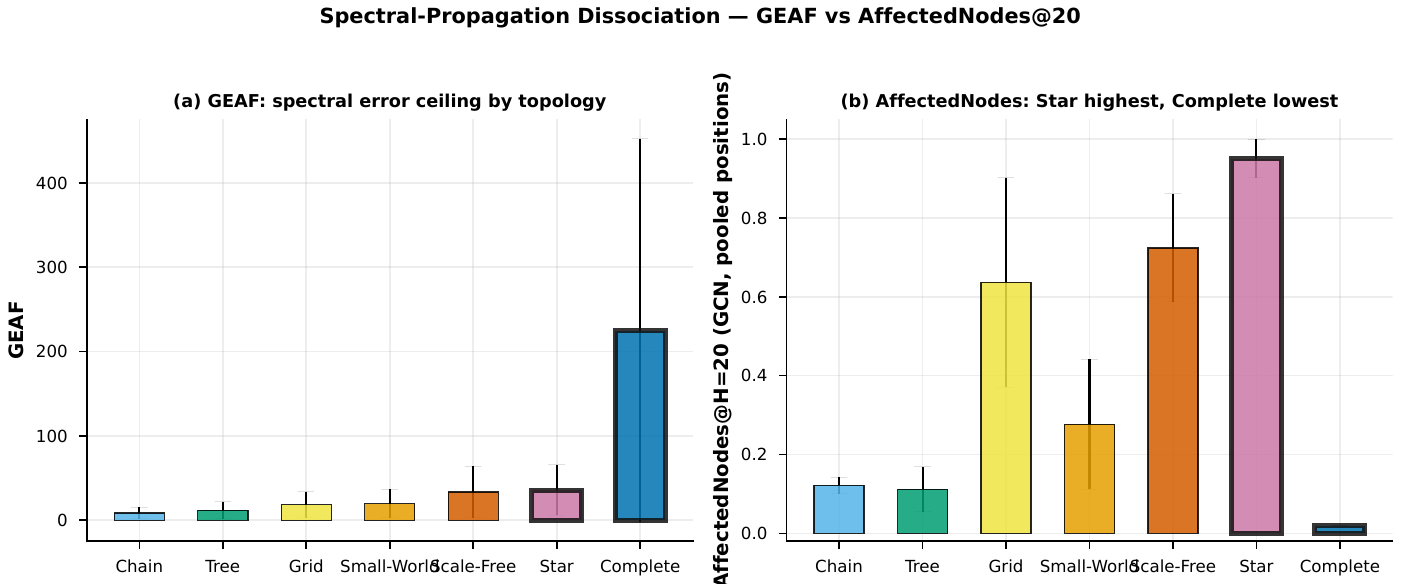}
  \caption{
  \textbf{Global GEAF and local hub failure are distinct.}
  Left: complete graphs have the largest $\widehat{\mathrm{GEAF}}$, consistent with a $\rho(A)$-dominated global error ceiling.
  Right: star graphs show the largest affected-node fraction under hub injection, showing that local hub-mediated spread is not identical to global spectral amplification.
  }
  \label{fig:app_global_local}
\end{figure}

\subsubsection{Topology effects strengthen with rollout horizon and edge density}
\label{app:exp17_rollout_length}

We study how rollout error changes with horizon. The experiment evaluates $H\in\{1,2,4,8,16,32,48\}$ across six baselines, seven topologies, and three seeds. Figure~\ref{fig:sup_exp17} reports NodeMSE@$H$ as a function of rollout horizon.

Low-risk topologies such as chain and tree remain stable over long horizons. High-risk topologies such as scale-free, star, and grid show sharp error growth at larger horizons. This complements the regret analysis: long-horizon behavior is strongly topology-dependent, and short-horizon performance does not fully characterize rollout stability.

\begin{figure}[t]
  \centering
  \includegraphics[width=\linewidth]{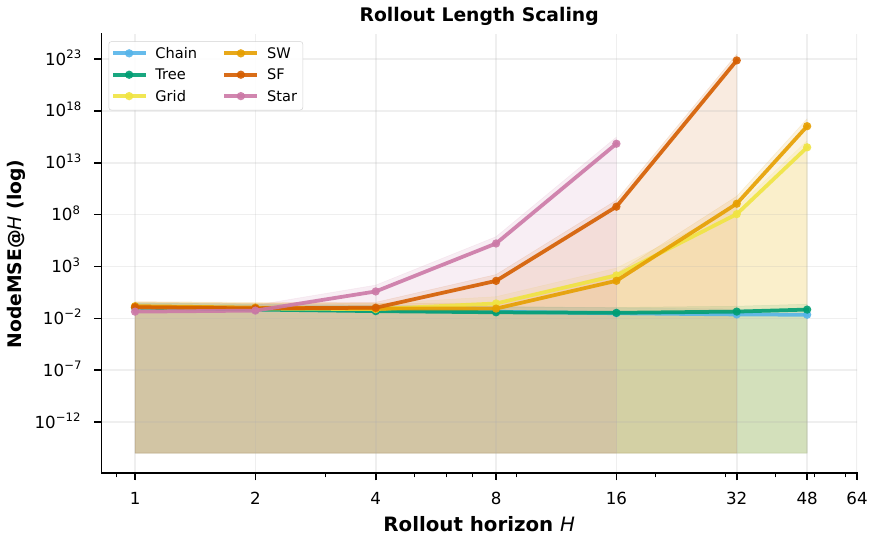}
  \caption{
  \textbf{Rollout length scaling.}
  NodeMSE@$H$ is plotted against rollout horizon across graph topologies, baselines, and seeds.
  Longer horizons expose topology-dependent instability that is not visible at one-step evaluation.
  }
  \label{fig:sup_exp17}
\end{figure}

We next vary edge density using Erd\H{o}s--R\'enyi graphs with edge probability $p\in\{0.02,0.05,0.10,0.20\}$. The experiment compares the vanilla GCN world model, ActionNode GWM, and Error-Aware GWM over three seeds. Figure~\ref{fig:sup_exp22} reports the resulting spectral radius and NodeMSE@32.

As edge density increases, $\rho(A)$ increases. The vanilla GCN world model shows rapidly increasing NodeMSE at high densities, while ActionNode GWM and Error-Aware GWM remain near the low-error floor. This supports the interpretation that edge density affects rollout difficulty through the topology-dependent amplification factor.

\begin{figure}[t]
  \centering
  \includegraphics[width=\linewidth]{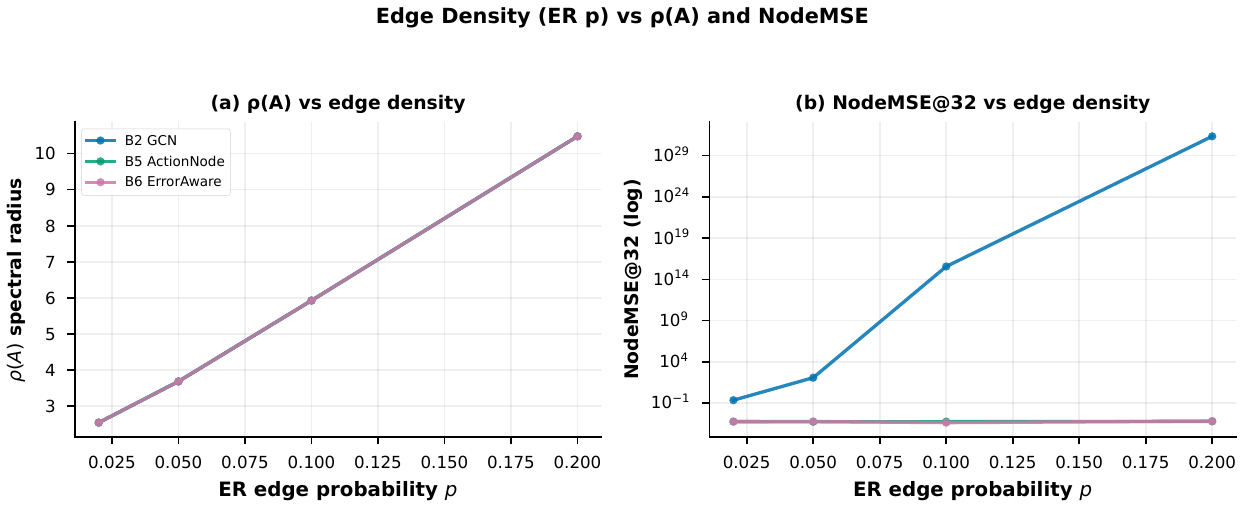}
  \caption{
  \textbf{Edge density sweep.}
  Left: adjacency spectral radius $\rho(A)$ as a function of Erd\H{o}s--R\'enyi edge probability $p$.
  Right: NodeMSE@32 as a function of edge probability $p$.
  Results are averaged over three seeds and show that increasing density raises the spectral radius and makes vanilla GCN rollouts less stable.
  }
  \label{fig:sup_exp22}
\end{figure}

\subsection{RQ2: Does Rollout Error Affect Planning Regret?}
\label{sec:rq2_planning_regret}

We next test whether prediction error affects downstream planning. GWMs select actions through imagined graph rollouts, so small rollout errors become harmful when they change predicted returns. We measure whether planning regret grows with horizon and whether high-GEAF settings lead to faster regret accumulation.


Figure~\ref{fig:h4} connects rollout error to downstream planning. Runs with larger $\GEAF$ tend to have larger regret GrowthSlope from horizon $4$ to horizon $32$, consistent with Lemma~\ref{lem:t4_regret}: when the effective graph error operator is larger, imagined trajectories lose reliability faster. The horizon-ratio analysis tells the same story from another angle. Planning regret at horizon $32$ is substantially larger than one-step regret across every topology, showing that the planning failure is a long-horizon rollout effect rather than a one-step artifact.

\begin{figure}[t]
  \centering
  \includegraphics[width=\linewidth]{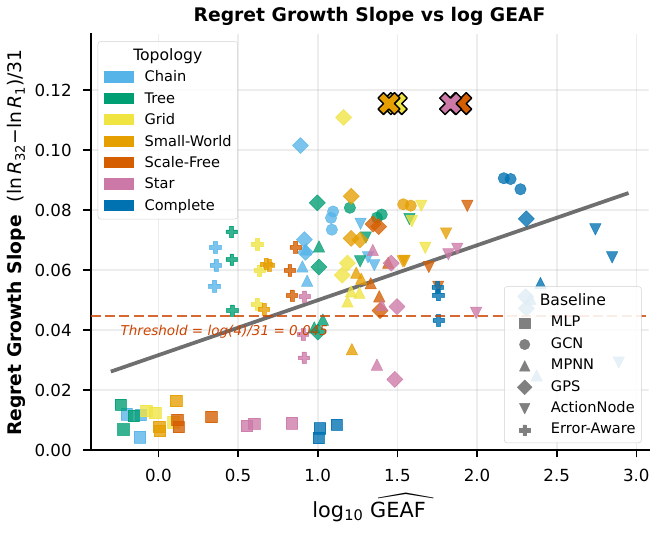}
  \caption{
    \textbf{Planning-regret growth increases with GEAF.}
    Regret GrowthSlope from horizon $4$ to horizon $32$ is plotted against $\log_{10}\GEAF$. Higher-GEAF runs tend to accumulate planning regret faster, linking the rollout-error operator to downstream decision quality rather than only prediction error.
  }
  \label{fig:h4}
\end{figure}

\subsection{RQ3: Does Error-Aware GWM Improve Long-Horizon Stability?}
\label{sec:rq3_error_aware}

We evaluate whether Error-Aware GWM stabilizes long-horizon rollout without sacrificing accuracy. This RQ compares the full objective with vanilla GCN rollouts and simpler stabilizers such as gradient clipping, weight decay, and spectral-norm projection. We also test robustness under graph rewiring, noisy observations, and graph-size scaling.

\subsubsection{Error-Aware GWM prevents divergence while preserving accuracy}
Figure~\ref{fig:h5} gives the central FE comparison. On high-risk scale-free and star topologies, the vanilla GCN world model enters an exploding-error regime as the horizon grows, while Error-Aware GWM stays near the low-error floor. Across all paired topology-seed cells, the full objective eliminates the divergences observed in the vanilla GCN setting. The result supports the design principle behind Error-Aware GWM: spectral control is most useful when paired with rollout consistency and critical-node weighting, so the model remains both stable and predictive.

\begin{figure}[t]
  \centering
  \includegraphics[width=\linewidth]{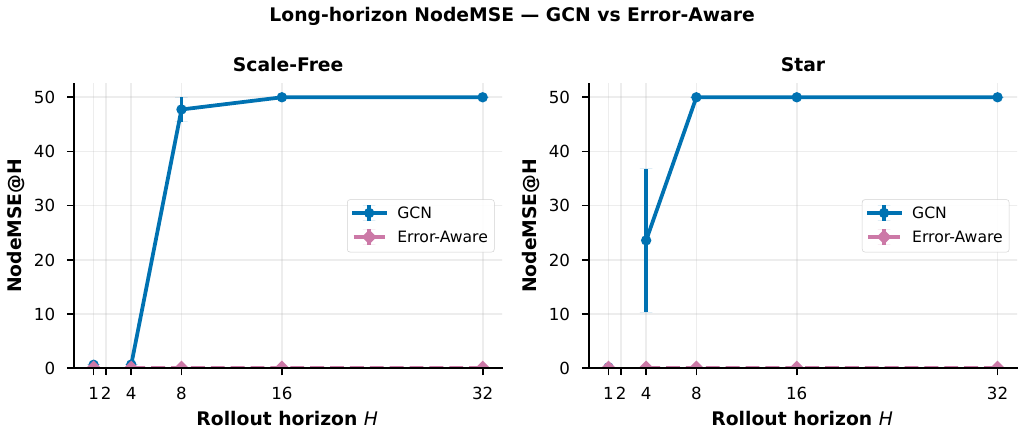}
  \caption{
    \textbf{Error-Aware GWM prevents long-horizon divergence.}
    NodeMSE@$H$ is shown for the vanilla GCN world model and Error-Aware GWM on scale-free and star topologies. Error-Aware GWM stays near the low-error floor across horizons, while the vanilla model exhibits horizon-dependent error explosion.
  }
  \label{fig:h5}
\end{figure}

\subsubsection{Ablation: Stability alone is not enough}
Table~\ref{tab:b6_ablation} shows that simple stabilizers reduce divergence, but they neither eliminate it nor preserve prediction accuracy. Weight decay and spectral-norm projection lower divergence to 4/21 cells, yet their median NodeMSE@32 remains orders of magnitude larger than Error-Aware GWM. Spectral-norm projection is $1690\times$ worse than Error-Aware GWM, while weight decay collapses toward a near-constant predictor. The advantage of Error-Aware GWM is therefore not only divergence reduction; it is the combination of stability and predictive accuracy.

\begin{table*}[t]
\centering
\caption{
Regularization ablation over 21 FE cells. Simple stabilizers reduce divergence but leave large long-horizon prediction error; Error-Aware GWM is the only variant that combines zero divergence with a low NodeMSE@32 floor, showing that stability must be paired with predictive fidelity.
}
\label{tab:b6_ablation}
\small
\resizebox{\textwidth}{!}{
\begin{tabular}{llcrl}
\toprule
\hline
Variant & Regularizer & Diverged & Median NodeMSE@32 & Ratio vs.\ Error-Aware GWM \\
\midrule
Vanilla GCN world model & none & 10/21 & 0.995 & $6934\times$ \\
GCN + gradient clipping & gradient clipping & 8/21 & 0.987 & $5790\times$ \\
GCN + weight decay & weight decay & 4/21 & $1.091$ & $4172\times$ \\
GCN + spectral-norm projection & spectral norm projection & 4/21 & 0.364 & $1690\times$ \\
\midrule
\textbf{Error-Aware GWM} & $R_\text{spec}+R_\text{roll}+R_\text{crit}$ & \textbf{0/21} & $\mathbf{2.67 \times 10^{-4}}$ & \textbf{1$\times$} \\
\hline
\bottomrule
\end{tabular}
}
\end{table*}

\subsubsection{Robustness to topology perturbation, noise, and graph size.}

\paragraph{Graph Rewiring Robustness.}
We evaluate how topology rewiring affects long-horizon rollout error on the scale-free topology. We compare six rewiring operations: identity, PageRank-based edge removal, random edge addition, random edge flipping, sparsification, and densification. The evaluation covers the vanilla GCN world model, ActionNode GWM, and Error-Aware GWM over three seeds.

Figure~\ref{fig:sup_exp11} shows that the vanilla GCN world model remains in a high-error regime across all rewiring methods. ActionNode GWM is stable under most rewiring operations but shows higher error under densification and random edge addition. Error-Aware GWM remains stable across all rewiring types, with NodeMSE@$H=20$ in the range of approximately $2.0$--$5.0\times10^{-4}$. This supports the role of spectral regularization in controlling topology-induced rollout error under graph perturbations.

\begin{figure}[t]
  \centering
  \includegraphics[width=\linewidth]{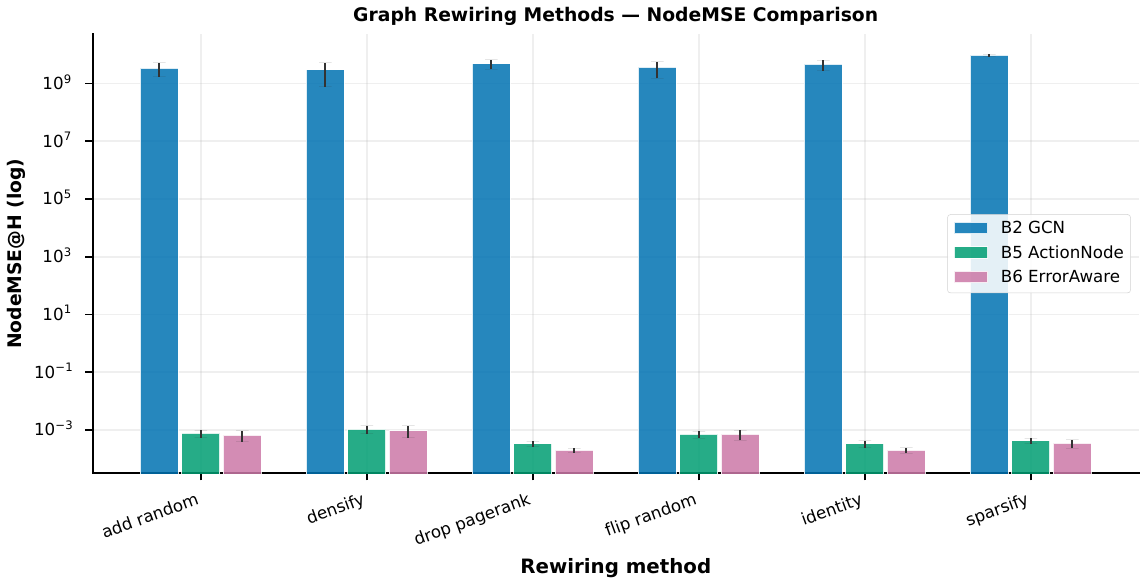}
  \caption{
  \textbf{Graph rewiring comparison.}
  NodeMSE@$H=20$ is reported for six rewiring methods on the scale-free topology across three world-model variants and three seeds. Identity denotes the original graph without rewiring. Error-Aware GWM remains stable across rewiring operations, supporting the role of spectral regularization under topology perturbation.
  }
  \label{fig:sup_exp11}
\end{figure}

\paragraph{Robustness to Noisy Graph Observations.}
We test whether observation noise is the main source of rollout instability. Gaussian noise with $\sigma\in\{0,10^{-3},10^{-2},0.05,0.1,0.2\}$ is added to graph observations before rollout evaluation. Figure~\ref{fig:sup_exp20} reports NodeMSE@$H=20$ across baselines.

Most baselines remain nearly unchanged as observation noise increases, while the vanilla GCN world model is already in a high-error regime at $\sigma=0$. Thus, the observed rollout instability is mainly driven by model-topology dynamics rather than small Gaussian observation noise.

\begin{figure}[t]
  \centering
  \includegraphics[width=\linewidth]{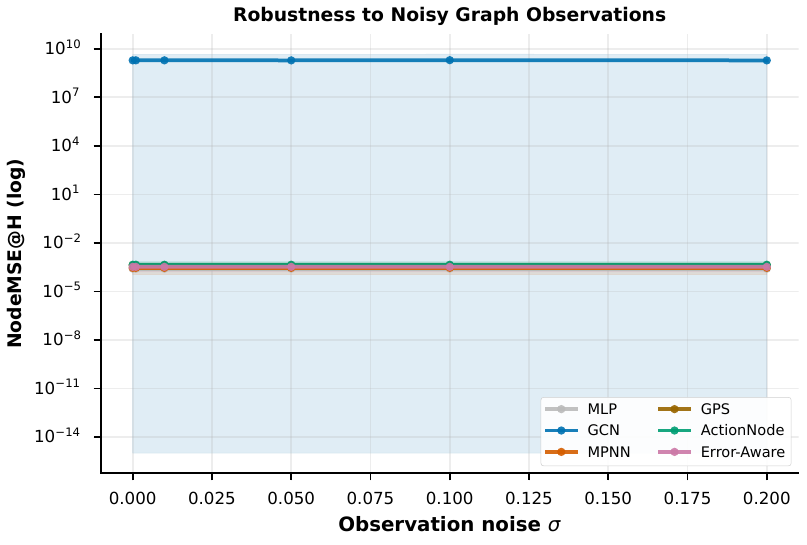}
  \caption{
  \textbf{Robustness to noisy graph observations.}
  NodeMSE@$H=20$ is plotted as a function of Gaussian observation noise level $\sigma$ across baselines. The weak dependence on $\sigma$ indicates that the main instability is model-topology amplification rather than additive observation noise.
  }
  \label{fig:sup_exp20}
\end{figure}

\paragraph{Graph Size Scaling.}
We evaluate graph-size scaling over $N\in\{20,50,100,200,500\}$ across five topologies, six baselines, and three seeds. Figure~\ref{fig:sup_exp21} reports $\widehat{\mathrm{GEAF}}$ and NodeMSE@32 as functions of graph size.

Across graph sizes, ActionNode GWM, GPS, and Error-Aware GWM remain near the low-error regime, while the vanilla GCN world model exhibits large rollout errors on high-risk cells. MPNN runs encounter out-of-memory failures at larger graph sizes due to dense message tensors. These results show that the stability patterns observed at $N=50$ persist across a broader graph-size range.

\begin{figure*}[t]
  \centering
  \includegraphics[width=\linewidth]{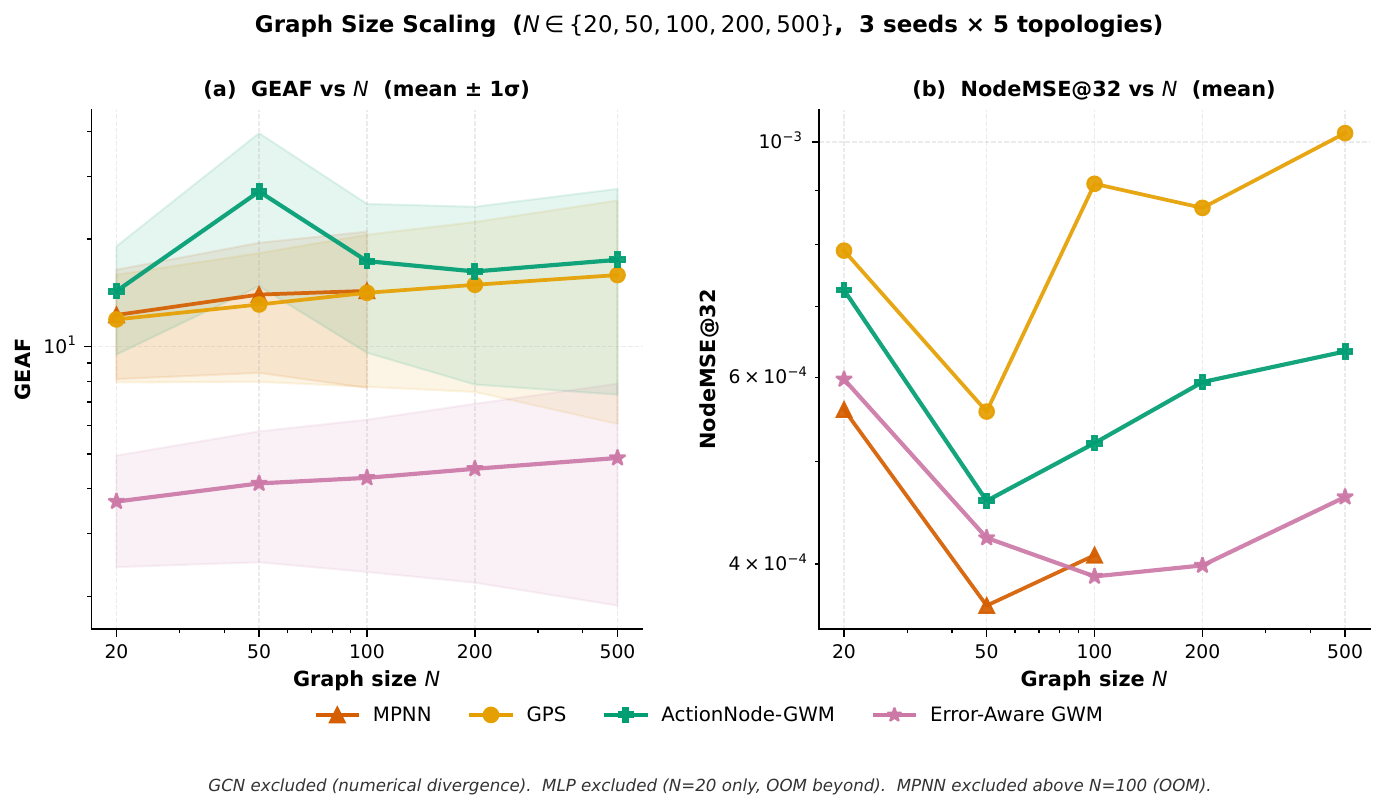}
  \caption{
  \textbf{Graph size scaling.}
  Left: $\widehat{\mathrm{GEAF}}$ as a function of graph size $N$.
  Right: NodeMSE@32 as a function of graph size $N$.
  Curves report means across seeds; shaded regions indicate variation across seeds or runs. The low-error regime of ActionNode GWM, GPS, and Error-Aware GWM persists as graph size grows, while the vanilla GCN baseline remains vulnerable on high-risk cells.
  }
  \label{fig:sup_exp21}
\end{figure*}

\subsection{RQ4: When Does Dynamic-Edge Coupling Matter?}
\label{sec:rq4_dynamic_edge}

We study when node-edge coupling matters. In FE models, edges are fixed and the coupling term is dormant. In DE models, edge prediction allows node and edge errors to affect each other. We test whether DE training improves DE rollout accuracy and whether the spectral witness $\rho(B)>\max(L_X,M_A)$ appears in trained models.

\subsubsection{DE training improves DE rollout accuracy}
\label{sec:h6}

We evaluate whether training on dynamic-edge trajectories improves rollout accuracy in DE environments. The paired comparison uses $n=36$ paired cells: ActionNode GWM and Error-Aware GWM across six topologies and three seeds. Each pair compares a model trained only on fixed-edge trajectories but evaluated on DE rollouts against the same architecture trained directly on DE trajectories.

Dynamic-edge training substantially improves DE rollout accuracy. The improvement holds for both ActionNode GWM and Error-Aware GWM, and every paired comparison favors DE training. This supports Lemma~\ref{lem:t2_joint_operator}: in DE environments, the model must learn node dynamics and the edge-prediction pathway through which node and edge errors interact. FE-trained models lack this pathway and are systematically mismatched to evolving graph structure.

\begin{table*}[t]
\centering
\caption{
Dynamic-edge training improves DE rollout accuracy. The ratio reports FE-trained NodeMSE@32 divided by DE-trained NodeMSE@32, so larger values indicate stronger gains from DE training. The paired comparison covers ActionNode GWM and Error-Aware GWM across six topologies and three seeds; the bottom row reports the spectral witness that learned DE models activate non-degenerate node-edge coupling.
}
\label{tab:stream_a}
\small
\resizebox{\textwidth}{!}{
\begin{tabular}{lrcc}
\toprule
\hline
Test / Architecture & $n$ & FE-trained / DE-trained NodeMSE & Wilcoxon $p$ \\
\midrule
\multicolumn{4}{l}{\textit{DE-trained vs.\ FE-trained on DE rollouts}} \\
ActionNode GWM & 18 & $9.85\times$ & $3.8 \times 10^{-6}$ \\
Error-Aware GWM & 18 & $12.29\times$ & $3.8 \times 10^{-6}$ \\
\textbf{Combined} & \textbf{36} & $\mathbf{11.0\times}$ [9.4, 12.8] & $\mathbf{1.5 \times 10^{-11}}$ \\
\midrule
\multicolumn{4}{l}{\textit{Spectral witness in DE regime}} \\
All DE-trained models & 108 & $\rho(B)/\max(L_X,M_A)>1$ in 108/108; mean excess $\approx 155$ & $<10^{-30}$ \\
\hline
\bottomrule
\end{tabular}}
\end{table*}

\subsubsection{DE models activate node-edge coupling}

We test whether the joint node-edge operator in Lemma~\ref{lem:t2_joint_operator} becomes active when the model is trained with an edge-prediction head. In the DE regime, the edge predictor creates a node-to-edge pathway with $M_X>0$, while message passing provides an edge-to-node pathway with $L_A>0$. Therefore, the lemma predicts the spectral witness
\begin{equation}
\rho(B)>\max(L_X,M_A)
\label{eq:de_spectral_witness}
\end{equation}
whenever the coupling is non-degenerate.

Across all 108 DE-trained cells, the witness holds in 108/108 cases, with sign-test $p<10^{-30}$ and mean excess approximately $155$ (Table~\ref{tab:stream_a}). Since the inequality follows mathematically once $L_AM_X>0$, the empirical content is the magnitude rather than only the direction. The large excess shows that learned DE models do not collapse to a nearly fixed-edge solution; they learn substantive node-edge coupling.

\subsubsection{FE models show dormant coupling and perturbation contraction}
In the FE regime, the edge-prediction head is absent, so $M_X=0$ and the joint operator reduces to node-only dynamics. The super-additive witness in Equation~\eqref{eq:de_spectral_witness} therefore cannot activate. Figure~\ref{fig:h6} supports this distinction. The left panel shows injected perturbations decaying rapidly to the numerical floor, indicating contractive FE dynamics. The right panel shows that Error-Aware GWM substantially lowers GEAF compared with ActionNode GWM, consistent with spectral regularization reducing the rollout amplification ceiling.

\begin{figure}[t]
  \centering
  \includegraphics[width=\linewidth]{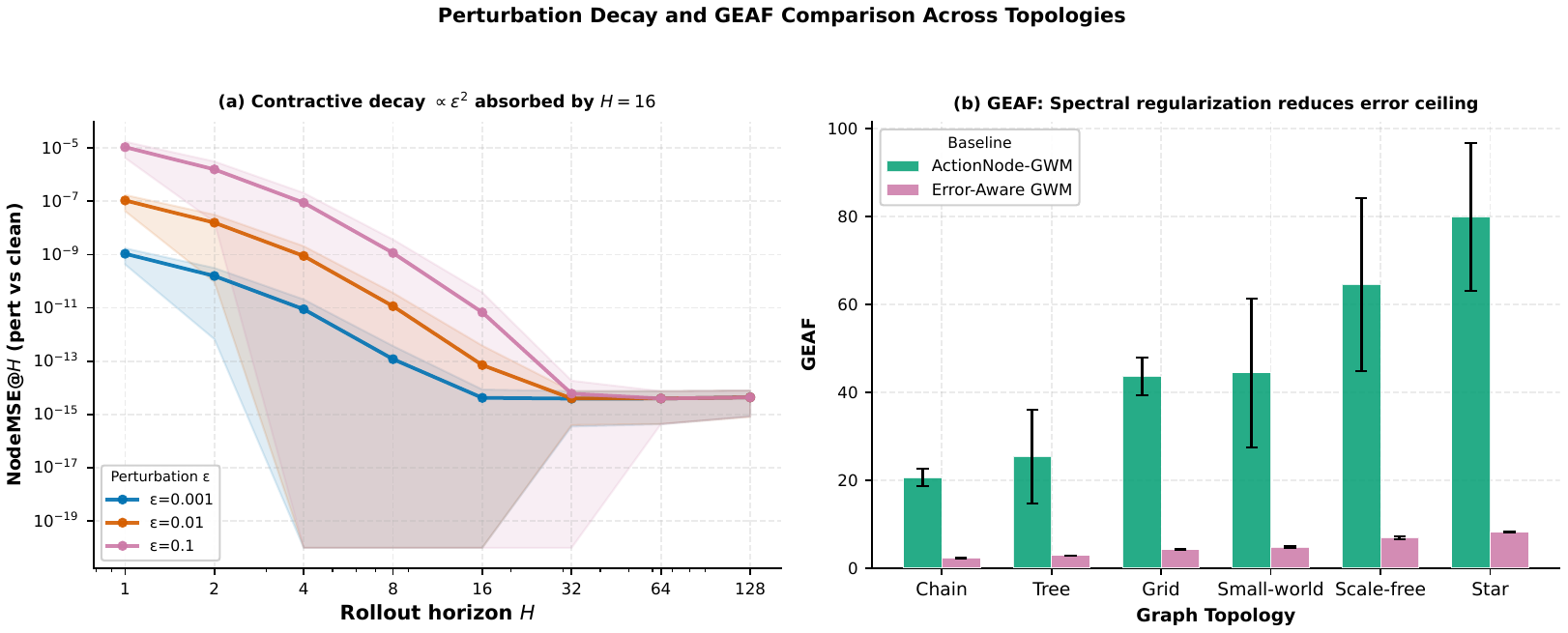}
  \caption{
    \textbf{FE-regime contraction and GEAF reduction.}
    Left: injected perturbations decay rapidly to the numerical floor in the FE regime, indicating dormant node-edge coupling. Right: Error-Aware GWM achieves lower GEAF than ActionNode GWM across topologies, showing how spectral regularization reduces the amplification ceiling.
  }
  \label{fig:h6}
\end{figure}

Figure~\ref{fig:app_fe_perturbation} decomposes FE-regime GrowthSlope under node-only, edge-only, and joint perturbation injection. Node-only perturbations have consistently negative slopes, indicating self-correction under contractive FE dynamics. Edge-only slopes are close to zero, and joint slopes closely follow edge-only slopes. This is consistent with the FE structure $M_X=0$, where the joint node-edge operator does not activate cross-coupling.

\begin{figure}[t]
  \centering
  \includegraphics[width=\linewidth]{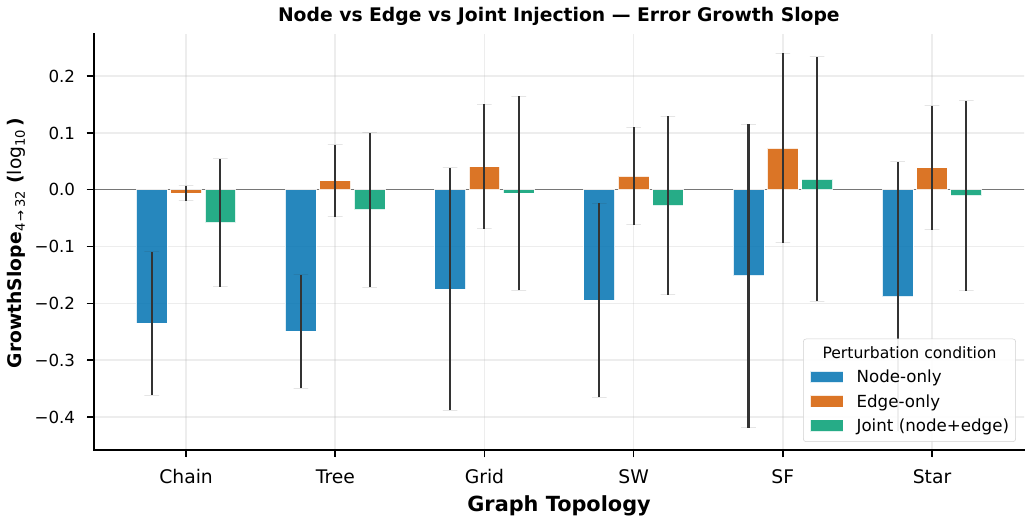}
  \caption{
  \textbf{Perturbation decomposition in the FE regime.}
  Node-only perturbations are absorbed by contractive dynamics, edge-only perturbations remain near zero growth, and joint perturbations closely follow edge-only behavior.
  This supports the interpretation that node-edge cross-coupling is dormant in the FE regime where $M_X=0$.
  }
  \label{fig:app_fe_perturbation}
\end{figure}

\subsubsection{OOD perturbation confirms the FE contraction boundary}
\label{app:h3_ood_a4}

We perform an OOD amplitude retest by training on chain graphs and evaluating on scale-free graphs. Perturbation amplitudes are swept over $\delta\in\{0.1,0.5,1.0,2.0\}$, and injections are applied to six positions. Figure~\ref{fig:sup_h3ood} reports NodeMSE@$H=20$ for the vanilla GCN world model, which is the only baseline with a measurable nonzero signal in this setting.

NodeMSE increases with perturbation amplitude across all injection positions. However, the pre-registered high-risk position tests do not reject after Holm correction. For the other baselines, perturbations remain absorbed with NodeMSE below the measurement floor. These results support the scope interpretation that FE dynamics are strongly contractive for most architectures, while measurable OOD sensitivity appears mainly in the vanilla GCN world model.

\begin{figure}[t]
  \centering
  \includegraphics[width=\linewidth]{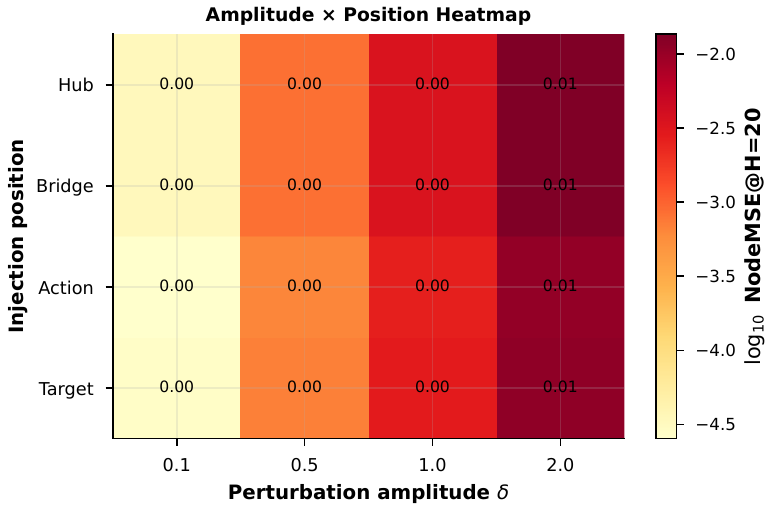}
  \caption{
  \textbf{OOD amplitude and injection-position sweep.}
  The heatmap reports $\log_{10}$ NodeMSE@$H=20$ for the vanilla GCN world model under different perturbation amplitudes and injection positions.
  Rows correspond to injection positions and columns correspond to perturbation amplitudes; larger values indicate stronger OOD sensitivity.
  }
  \label{fig:sup_h3ood}
\end{figure}

\subsubsection{Regime interpretation}
\label{sec:h7redemption}

\paragraph{Candidate mechanisms.}
The regime-conditional behavior of the joint operator can be explained by four mechanisms. First, in the FE regime, the edge-prediction head is absent, so $M_X=0$ by construction and the coupling product $L_AM_X$ is structurally zero. Second, Error-Aware GWM reduces the effective node-transition gain by spectral regularization, making FE perturbations more likely to contract. Third, if the coupling product is positive but small, longer horizons may be required for the asymptotic rate to appear; this explanation does not apply when $M_X=0$. Fourth, small perturbations may remain in a local linear regime whose Jacobian is much smaller than the global Lipschitz bound. The multi-axis sweep over perturbation size and horizon separates these cases and supports the FE contraction interpretation.

\paragraph{Regime interpretation.}
These results support a regime-conditional view of graph rollout error. Fixed-edge GWM corresponds to the $M_X=0$ corner of the joint-operator framework, where node-edge cross-coupling is dormant. Dynamic-edge training activates the off-diagonal coupling terms and changes the effective rollout-error operator. The framework therefore does not replace fixed-edge GWM; it explains when fixed-edge dynamics are sufficient and when dynamic-edge training is required.

\subsection{RQ5: How Do GWMs Behave on Heterogeneous Agent Graphs?}
\label{sec:rq5_agent_graphs}

We evaluate GWMs on heterogeneous agent and skill graphs, where nodes have roles such as planner, executor, validator, tool, and failure handler. This RQ tests whether rollout failures depend on role structure, whether R-GCN rollouts are calibrated against simulators, and whether correction policies can reduce agent-level errors.

\subsubsection{Calibration against heterogeneous simulators}
\label{sec:rgcn_calibration}

We calibrate the R-GCN rollout models against the ground-truth heterogeneous simulators on the agent calling-tree and platform skill-graph testbeds. Figure~\ref{fig:f8} compares R-GCN predictions with simulator outputs after the sigmoid-head fix. On the agent calling-tree testbed, the R-GCN mean is $0.442$ versus ground-truth mean $0.437$, with Pearson $r=0.311$ and MAE $=0.051$. On the skill-graph testbed, the R-GCN mean is $0.387$ versus ground-truth mean $0.406$, with Pearson $r=0.231$ and MAE $=0.029$.

These results show that the R-GCN models are reasonably calibrated in mean and absolute error, but their instance-level correlations are weak to moderate. We therefore use the heterogeneous R-GCN results as agent-graph stress tests and calibration evidence, not as a high-precision replacement for the ground-truth simulator.

\begin{figure}[t]
  \centering
  \includegraphics[width=\linewidth]{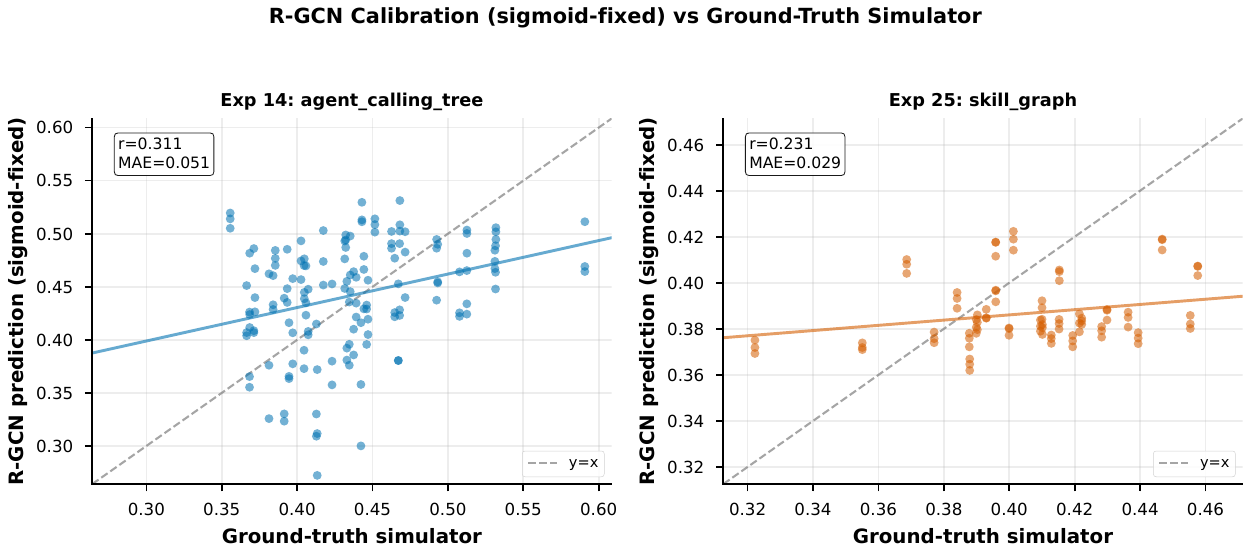}
  \caption{
    \textbf{R-GCN calibration against heterogeneous simulators.}
    Left: agent calling-tree testbed. Right: platform skill-graph testbed. Predictions are calibrated in mean and absolute error, but instance-level correlations remain modest, so these results are used as stress-test evidence rather than high-precision simulator replacement.
  }
  \label{fig:f8}
\end{figure}

\subsubsection{Role-dependent failure sensitivity}
\label{app:exp9_subgraph_masking}

We evaluate whether different functional subgraphs contribute differently to agent-task success. We mask role-specific subgraphs in the agent calling-tree testbed and measure both success-rate drop and trajectory-level NodeMSE. Figure~\ref{fig:sup_exp9} shows that full-graph masking gives the largest degradation, as expected. Among partial masks, hub and validator-related components produce larger changes than leaf masking, while leaf masking has almost no effect.

These results support role-dependent error amplification. Not all nodes contribute equally to rollout quality: masking structurally or functionally central components has a larger effect than masking peripheral leaves.

\begin{figure}[t]
  \centering
  \includegraphics[width=\linewidth]{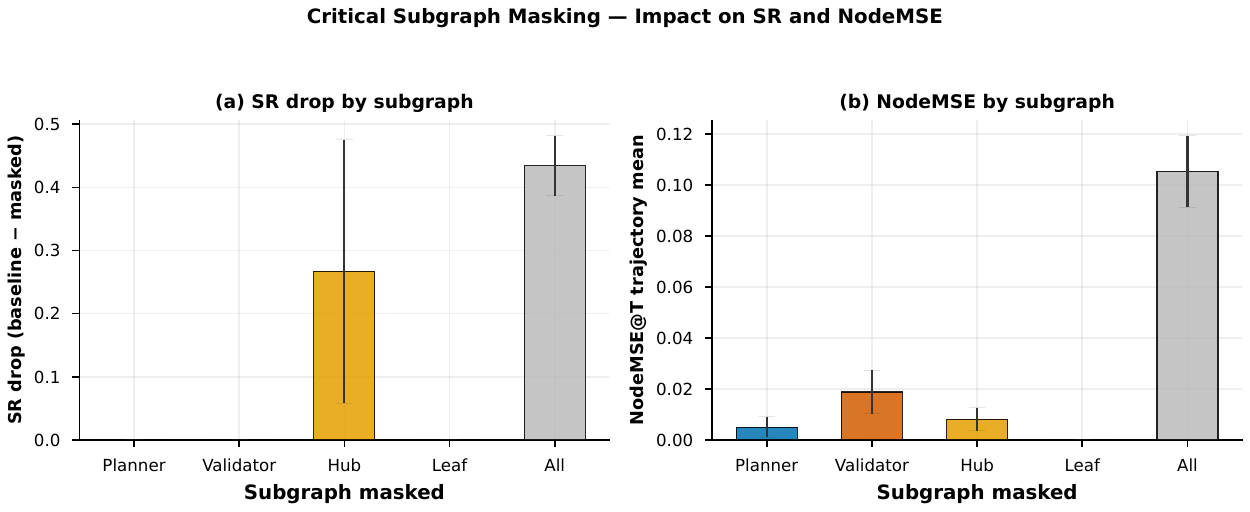}
  \caption{
  \textbf{Critical subgraph masking on the agent calling-tree testbed.}
  Left: success-rate drop after masking different subgraphs.
  Right: NodeMSE@$T$ under the same masking conditions.
  Full masking gives the largest degradation, while leaf masking has minimal effect, indicating that rollout quality is role-dependent.
  }
  \label{fig:sup_exp9}
\end{figure}

We evaluate how injected errors propagate across different functional roles in the agent calling-tree testbed. We inject perturbations into planner, validator, hub, bridge, and random nodes across 20 instances. Figure~\ref{fig:sup_exp15} reports failure propagation depth (FPD) and the number of error nodes at the final timestep.

Planner and validator injections produce median FPD around 3, while bridge and hub injections have smaller median FPD. Random injection has the largest median FPD at 4. The final error-node counts are broadly similar across injection types, but the distributions vary by role. These results show that failure propagation depends on functional role, not only graph position.

\begin{figure}[t]
  \centering
  \includegraphics[width=\linewidth]{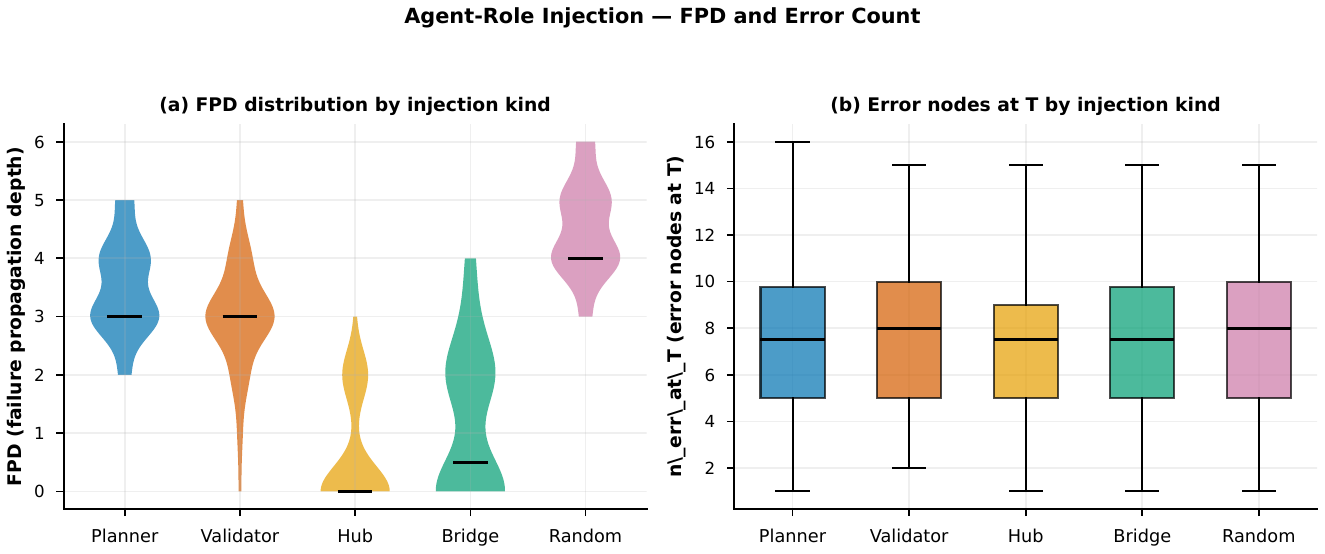}
  \caption{
  \textbf{Agent-role injection analysis.}
  Left: failure propagation depth distributions by injection kind.
  Right: number of error nodes at the final timestep by injection kind. The distributions show that functional role changes how injected errors propagate through the calling tree.
  }
  \label{fig:sup_exp15}
\end{figure}

\subsubsection{Correction policies}
We also test whether a local GEAF proxy can guide node-level correction policies in heterogeneous agent graphs. In the current FE formulation,
\begin{equation}
\GEAF_{\rm local}(v)=\mathrm{degree}(v)\prod_\ell\|W_\ell\|_2.
\label{eq:geaf_local_degree}
\end{equation}
Since $\prod_\ell\|W_\ell\|_2$ is global for a fixed model, $\GEAF_{\rm local}(v)$ induces the same node ranking as degree centrality. Figure~\ref{fig:f7} confirms this equivalence: the GEAF-proxy policy and the degree policy achieve the same mean error reduction of 12.9\%, outperforming random correction but remaining below the oracle policy at 22.4\%. Thus, local GEAF is useful as a non-random structural heuristic, but in this FE setting it does not provide a distinct correction signal beyond degree centrality.

\begin{figure}[t]
  \centering
  \includegraphics[width=\linewidth]{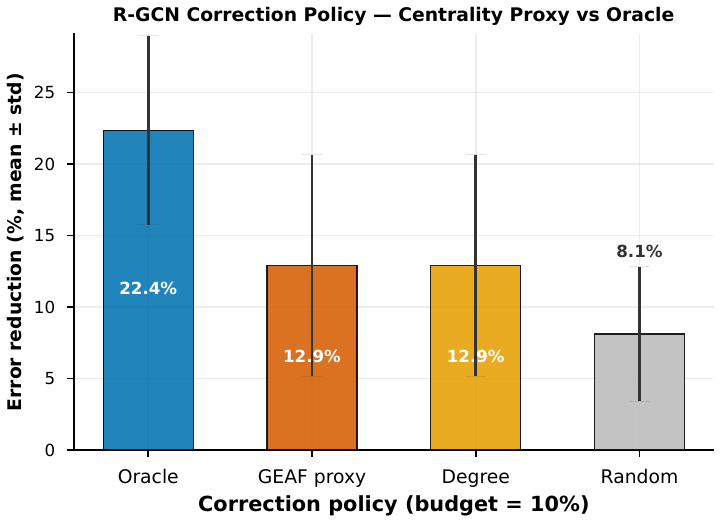}
  \caption{
    \textbf{Local GEAF correction reduces to degree centrality in the FE setting.}
    Correction policies are compared on the heterogeneous agent calling-tree testbed under a 10\% node-correction budget. GEAF-proxy and degree policies coincide because the FE local-GEAF score is degree times a model-global norm product.
  }
  \label{fig:f7}
\end{figure}

We next compare node correction policies under a 10\% correction budget. Figure~\ref{fig:sup_exp10} reports NodeMSE reduction across six baselines, six topologies, and three seeds. Oracle correction gives the strongest upper-bound performance. Among practical policies, uncertainty-based correction gives the largest average gain over random correction. GEAF-proxy correction improves over random but is indistinguishable from degree-based correction.

This result is consistent with the FE local-GEAF identity. When model weights are fixed, the local GEAF score reduces to a degree-proportional ranking:
\begin{equation}
\GEAF_{\rm local}(v)=\mathrm{degree}(v)\prod_\ell\|W_\ell\|_2.
\label{eq:app_geaf_degree_equiv}
\end{equation}
Thus, GEAF is useful as a non-random structural heuristic, but in this FE correction setting it does not provide a distinct node-ranking signal beyond degree centrality.

\begin{figure}[t]
  \centering
  \includegraphics[width=\linewidth]{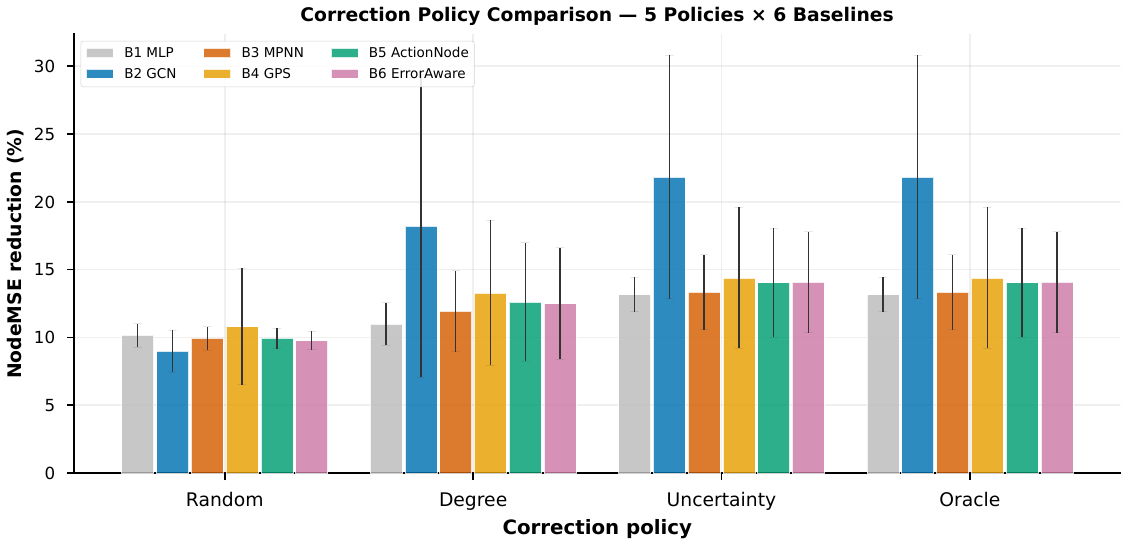}
  \caption{
  \textbf{Node correction policy comparison.}
  NodeMSE reduction is reported under a 10\% correction budget.
  Oracle correction provides the upper bound, uncertainty gives the strongest practical improvement, and GEAF-proxy correction matches degree-based correction in the FE regime.
  }
  \label{fig:sup_exp10}
\end{figure}

We also compare correction policies on the agent calling-tree testbed under a 10\% node-correction budget. The policies include random, degree, GEAF-proxy, and oracle correction. Figure~\ref{fig:sup_exp16} reports the reduction in error-flag mean.

The oracle policy gives the largest reduction, while GEAF-proxy and degree correction are tied because they induce the same node ranking in this FE local formulation. This supports the scope statement that local GEAF is a useful structural heuristic but does not provide a distinct correction signal beyond degree centrality in the FE agent-graph setting.

\begin{figure}[t]
  \centering
  \includegraphics[width=\linewidth]{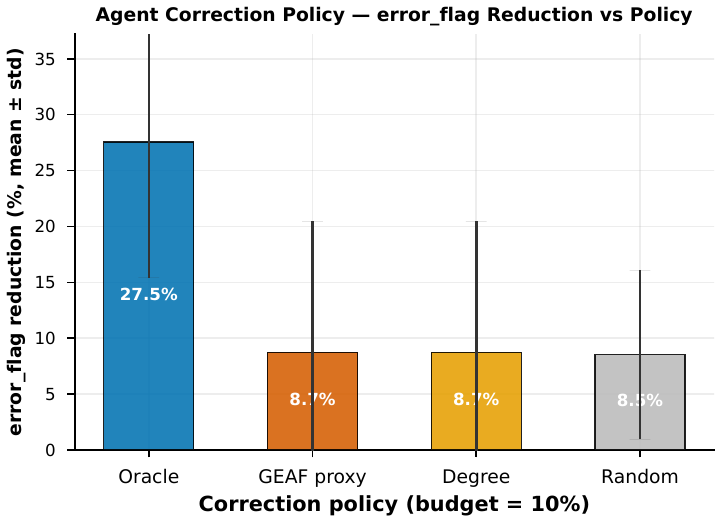}
  \caption{
  \textbf{Agent-level correction policy comparison.}
  Error-flag reduction is reported for random, degree, GEAF-proxy, and oracle correction under a 10\% node-correction budget. Oracle gives the ceiling, while GEAF-proxy and degree remain tied because they induce the same FE ranking.
  }
  \label{fig:sup_exp16}
\end{figure}

\subsubsection{Long-horizon agent task completion}
\label{sec:rw_agent}

We evaluate agent task completion on the agent calling-tree and platform skill-graph testbeds. We pool 160 instances per testbed and report task success rate (SR) at horizons $H\in\{1,5,10,20\}$. Results are averaged over 6 testbed-seed cells for each baseline and horizon.

\begin{table}[t]
\centering
\caption{
Agent task completion across rollout horizons. All methods decline from approximately 0.49 at $H=1$ to approximately 0.42 for $H\ge5$, indicating that heterogeneous agent-graph planning remains difficult at longer horizons even when the world model uses graph structure.
}
\label{tab:exp_C1_agent}
\small
\resizebox{\columnwidth}{!}{
\begin{tabular}{lcccc}
\toprule
\hline
Baseline & SR@1 & SR@5 & SR@10 & SR@20 \\
\midrule
MLP world model & 0.477 $\pm$ 0.039 & 0.420 $\pm$ 0.029 & 0.422 $\pm$ 0.033 & 0.423 $\pm$ 0.033 \\
GCN world model & 0.487 $\pm$ 0.069 & 0.415 $\pm$ 0.025 & 0.415 $\pm$ 0.025 & 0.416 $\pm$ 0.025 \\
MPNN world model & 0.492 $\pm$ 0.084 & 0.411 $\pm$ 0.028 & 0.411 $\pm$ 0.026 & 0.413 $\pm$ 0.023 \\
Error-Aware GWM (ours) & 0.491 $\pm$ 0.080 & 0.420 $\pm$ 0.030 & 0.417 $\pm$ 0.028 & 0.417 $\pm$ 0.025 \\
\hline
\bottomrule
\end{tabular}}
\end{table}

\begin{figure}[t]
\centering
\includegraphics[width=\columnwidth]{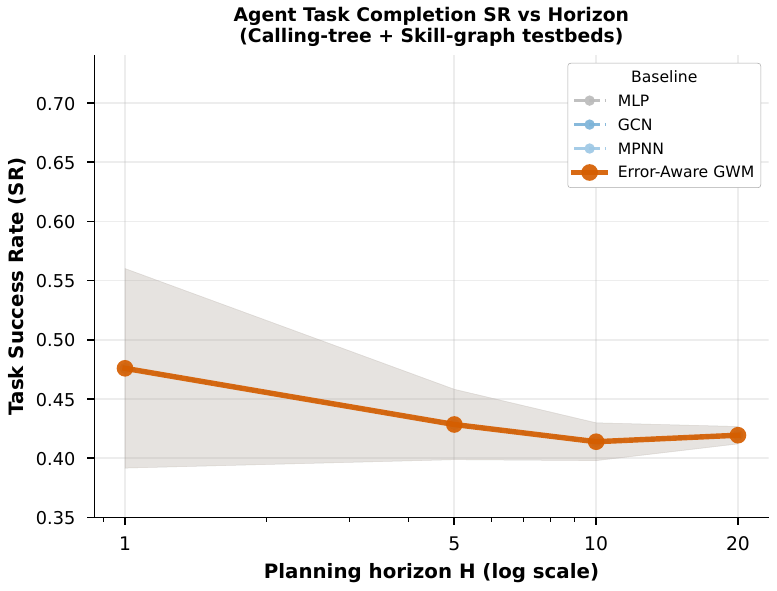}
\caption{
\textbf{Agent task completion across planning horizons.}
Error-Aware GWM is competitive at $H=1$, but all baselines converge to SR$\approx0.42$ for $H\ge5$.
}
\label{fig:exp_c_agent_planning}
\end{figure}

Table~\ref{tab:exp_C1_agent} and Figure~\ref{fig:exp_c_agent_planning} show that architecture choice has limited impact on heterogeneous agent-task completion. At $H=1$, all methods cluster around SR$\approx0.49$. For $H\ge5$, all methods decline toward SR$\approx0.42$. Thus, the long-horizon planning challenge persists even with graph-structured world models.

These real-world evaluations clarify the boundary of GWMs. Specialized graph classifiers remain stronger on static node classification, GWM variants are competitive but not dominant on temporal link prediction, and all methods struggle on long-horizon heterogeneous agent planning. GWMs are therefore best viewed as a framework for dynamic graph rollouts and agent planning, not as a replacement for task-specific graph learning methods. Error-Aware regularization provides its clearest benefit in controlled rollout regimes where long-horizon error accumulation is the main failure mode, and gives smaller, task-dependent gains on standard real-world graph benchmarks.

\subsection{RQ6: What Is the Real-World Scope of GWMs?}
\label{sec:rq6_realworld}

Finally, we test GWMs on standard real-world graph benchmarks to clarify their practical boundary. These experiments compare GWM variants with specialized graph baselines on static node classification and temporal link prediction. The goal is not to claim state-of-the-art performance, but to identify where GWMs remain competitive and where task-specific graph models are preferable.

\subsubsection{Static Node Classification on Cora and Citeseer}
\label{sec:realworld}

We first evaluate static node classification on Cora and Citeseer~\cite{yang2016revisiting}. Cora contains 2,708 nodes, 5,278 edges, and 7 classes; Citeseer contains 3,327 nodes, 4,732 edges, and 6 classes. We follow the standard public train/validation/test split. The baselines are GCN~\cite{kipf2017semi}, GAT~\cite{velivckovic2017graph}, and GraphSAGE~\cite{hamilton2017graphsage}. All models are trained for 200 epochs with Adam, learning rate $10^{-3}$, weight decay $5\times10^{-4}$, dropout $0.5$, and 3 random seeds.

\begin{table}[t]
\centering
\caption{Static node classification test accuracy on Cora and Citeseer. Results are mean $\pm$ standard deviation over three seeds. Specialized graph classifiers remain strongest on Cora, while Error-Aware GWM improves over FE-GWM on both datasets.}
\label{tab:exp_A_node_class}
\small
\resizebox{\columnwidth}{!}{
\begin{tabular}{lcc}
\toprule
\hline
Baseline & Cora & Citeseer \\
\midrule
GCN~\cite{kipf2017semi} & \textbf{80.6 $\pm$ 1.0} & 65.6 $\pm$ 2.4 \\
GAT~\cite{velivckovic2017graph} & 79.3 $\pm$ 0.4 & \textbf{68.1 $\pm$ 0.8} \\
GraphSAGE~\cite{hamilton2017graphsage} & 78.8 $\pm$ 0.6 & 66.5 $\pm$ 1.1 \\
FE-GWM (ours) & 73.5 $\pm$ 1.6 & 66.2 $\pm$ 2.7 \\
Error-Aware GWM (ours) & 75.0 $\pm$ 1.6 & 66.9 $\pm$ 1.8 \\
\hline
\bottomrule
\end{tabular}}
\end{table}

Table~\ref{tab:exp_A_node_class} and Figure~\ref{fig:exp_a_node_cls} show that specialized static graph classifiers remain stronger on Cora. GCN achieves the best Cora accuracy at 80.6\%, while FE-GWM and Error-Aware GWM achieve 73.5\% and 75.0\%, respectively. On Citeseer, the gap is smaller: all methods fall within an approximately 3 pp range, with GAT best at 68.1\% and Error-Aware GWM reaching 66.9\%. Error-Aware GWM consistently improves over FE-GWM by 1--1.5 pp, suggesting that spectral regularization is mildly beneficial even in a static regime.

\begin{figure}[t]
\centering
\includegraphics[width=\columnwidth]{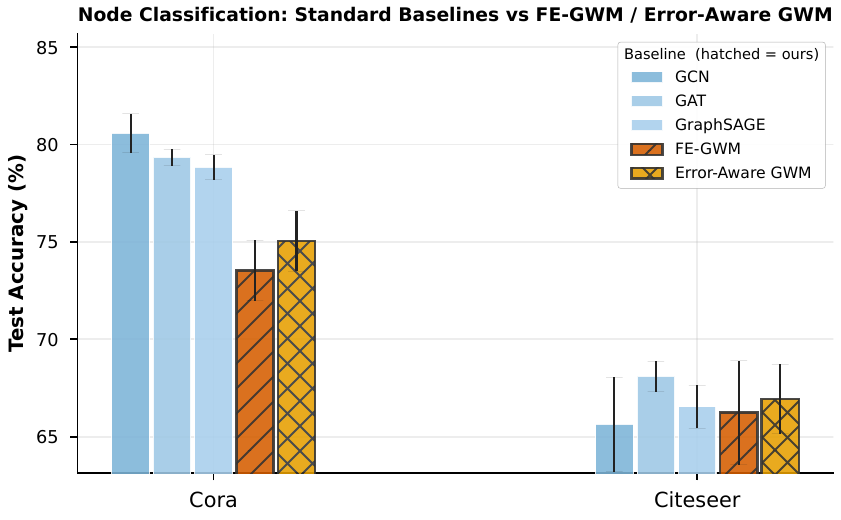}
\caption{
\textbf{Static node classification on Cora and Citeseer.}
GCN, GAT, and GraphSAGE are standard static graph baselines, while FE-GWM and Error-Aware GWM are GWM variants.
Standard graph classifiers dominate Cora by 4--7 pp; on Citeseer, all methods are within an approximately 3 pp range.
}
\label{fig:exp_a_node_cls}
\end{figure}

\subsubsection{Temporal Link Prediction on Bitcoin-Alpha}
\label{sec:rw_temporal}

We evaluate temporal link prediction on Bitcoin-Alpha~\citep{kumar2016edge}, a sparse temporal trust network with 3,783 nodes and 24,186 temporal edges. We construct 60-day windows and compare GWM variants with a JODIE-style temporal interaction model~\citep{kumar2019predicting} and a static GCN baseline~\citep{kipf2017semi}. We report AP, AUC, and MRR over 3 seeds.

\begin{table}[t]
\centering
\caption{
Temporal link prediction on Bitcoin-Alpha.
Results are mean $\pm$ standard deviation over three seeds. No single method dominates all metrics: FE-GWM has the best AP, Static GCN has the best AUC, and JODIE-style has the best MRR.
}
\label{tab:exp_B_temporal}
\small
\resizebox{\columnwidth}{!}{
\begin{tabular}{lccc}
\toprule
\hline
Baseline & AP & AUC & MRR \\
\midrule
JODIE-style~\citep{kumar2019predicting} & 0.772 $\pm$ 0.009 & 0.646 $\pm$ 0.009 & \textbf{0.261 $\pm$ 0.104} \\
Static GCN~\citep{kipf2017semi} & 0.815 $\pm$ 0.014 & \textbf{0.774 $\pm$ 0.030} & 0.109 $\pm$ 0.034 \\
FE-GWM (ours) & \textbf{0.820 $\pm$ 0.016} & 0.740 $\pm$ 0.030 & 0.166 $\pm$ 0.056 \\
DE-GWM (ours) & 0.805 $\pm$ 0.004 & 0.717 $\pm$ 0.009 & 0.122 $\pm$ 0.037 \\
Error-Aware DE-GWM (ours) & 0.805 $\pm$ 0.006 & 0.713 $\pm$ 0.009 & 0.134 $\pm$ 0.041 \\
\hline
\bottomrule
\end{tabular}}
\end{table}

\begin{figure}[t]
\centering
\includegraphics[width=\columnwidth]{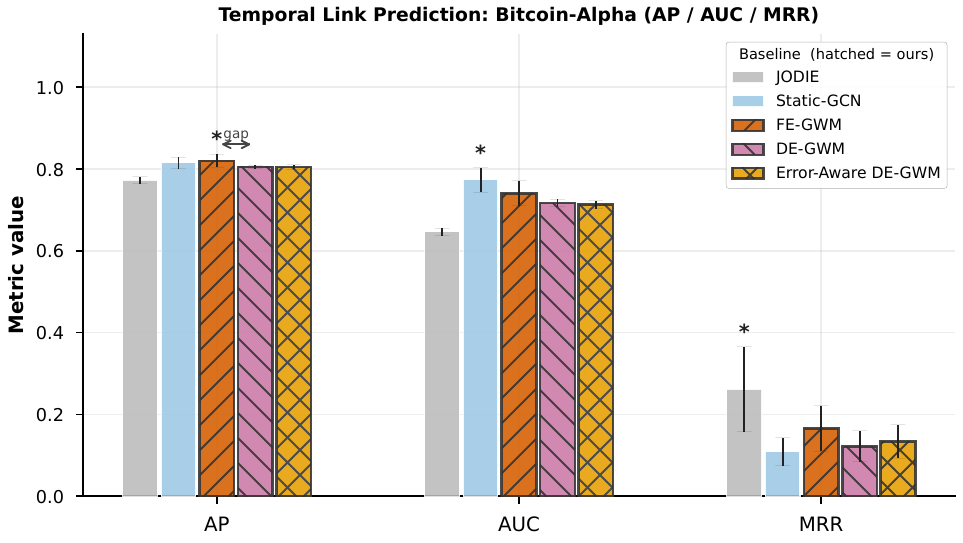}
\caption{
\textbf{Temporal link prediction on Bitcoin-Alpha.}
FE-GWM obtains the best AP, Static GCN obtains the best AUC, and JODIE-style obtains the best MRR.
DE-GWM does not improve over FE-GWM on this sparse real-world temporal graph, indicating that dynamic-edge modeling is most useful when evolving edges provide reliable rollout signal.
}
\label{fig:exp_b_temporal}
\end{figure}

Table~\ref{tab:exp_B_temporal} and Figure~\ref{fig:exp_b_temporal} show that GWM variants are competitive but do not dominate specialized baselines. FE-GWM achieves the best AP, while Static GCN has the best AUC and JODIE-style has the best MRR. DE-GWM and Error-Aware DE-GWM do not consistently improve over FE-GWM, suggesting that dynamic-edge modeling is most useful when edge dynamics provide stable predictive signal over the rollout horizon.

\section{Additional Experiments}
\label{app:all_exps}

This section provides concise result summaries for all other experiments.

\subsection{Seed Stability of Error Metrics}
\label{app:seed_stability}

We first check whether the observed error patterns are artifacts of random seed choice. Figure~\ref{fig:app_seed_stability} reports seed-level distributions of NodeMSE@32 and GrowthSlope across the seven topology families. The distributions are stable across seeds, and deterministic topologies show limited variation. This supports the use of three outer seeds in the main experiments and suggests that the observed GEAF-related trends are not driven by a small number of random graph instances.

\begin{figure}[t]
  \centering
  \includegraphics[width=\linewidth]{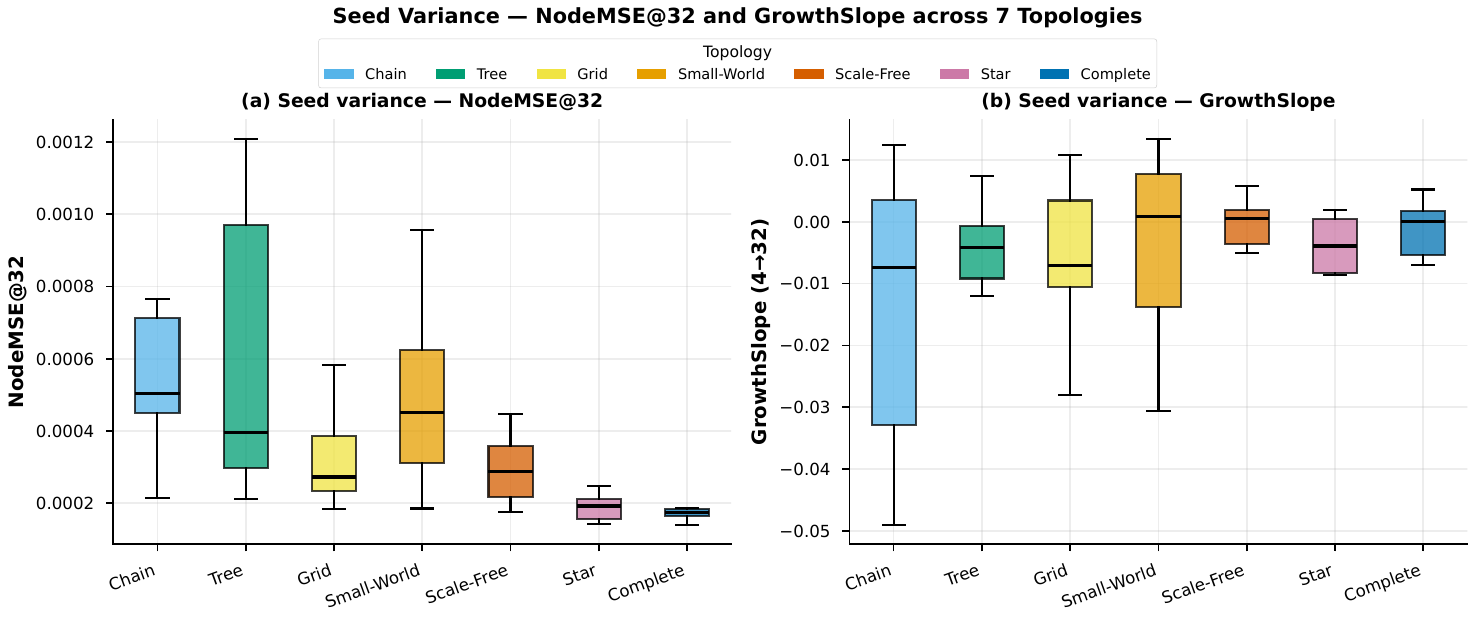}
  \caption{
  \textbf{Seed stability across topologies.}
  Left: seed-level distributions of NodeMSE@32.
  Right: seed-level distributions of GrowthSlope. The distributions are stable enough to support the three-seed protocol used in the main experiments.
  }
  \label{fig:app_seed_stability}
\end{figure}

\subsection{Multi-Hop Error Propagation Is Limited Under Contractive FE Dynamics}
\label{app:exp7_multihop}

We test whether local perturbations propagate across multiple graph hops. We inject perturbations at depths $\{1,2,4,6,8,10\}$ on chain and scale-free graphs across six baselines and three seeds. Figure~\ref{fig:sup_exp7} shows that multi-hop error propagation is limited for all non-diverged models. On chain graphs, NodeMSE@$H=20$ remains below $10^{-12}$ across all injection depths, with failure propagation depth close to 1. On scale-free graphs, all non-diverged models remain below $10^{-13}$, while the diverged GCN baseline reaches NodeMSE@$H=20$ around $10^{10}$.

This result supports a regime-specific interpretation: FE dynamics are often contractive and absorb local perturbations, while severe multi-hop amplification appears mainly when the underlying baseline has already entered a divergent rollout regime.

\begin{figure}[t]
  \centering
  \includegraphics[width=\linewidth]{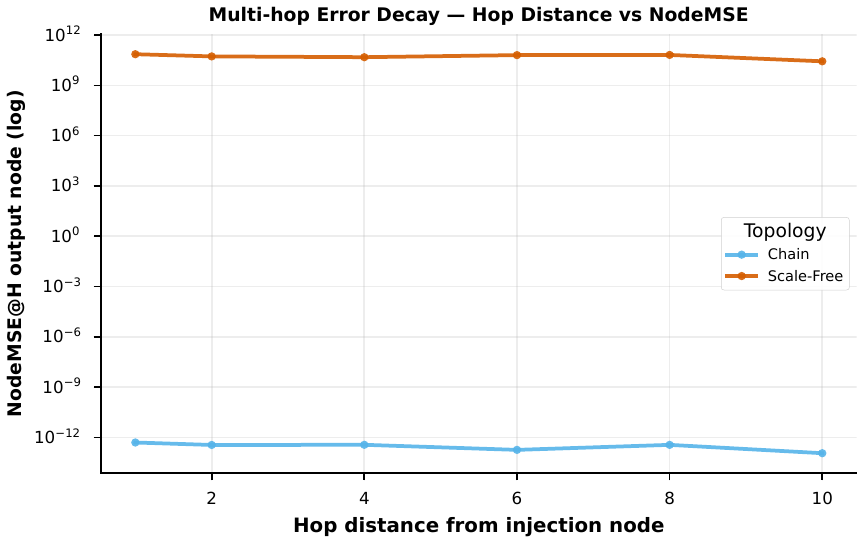}
  \caption{
  \textbf{Multi-hop error propagation under perturbation injection.}
  NodeMSE@$H=20$ is plotted as a function of hop distance from the injection node.
  Chain graphs remain contractive with near-zero error at all hop distances, while severe propagation appears mainly in already-diverged GCN runs.
  }
  \label{fig:sup_exp7}
\end{figure}

\subsection{Zero-Shot Topology Shift}
\label{app:exp12_ood_topology}

We test zero-shot generalization under topology shift by training on one topology and evaluating on another. The evaluated train--test pairs are chain$\to$grid, grid$\to$small-world, small-world$\to$scale-free, scale-free$\to$star, and tree$\to$scale-free. The experiment covers six baselines and three seeds.

Figure~\ref{fig:sup_exp12} shows that cross-topology transfer causes a large increase in NodeMSE@$H=20$. Excluding diverged vanilla GCN cells, the median NodeMSE@$H=20$ across baselines is $0.196$, compared with an in-distribution level of approximately $3\times10^{-4}$. Error-Aware GWM obtains a median of $0.183$, which is slightly better than the overall median but still far above the in-distribution floor. These results indicate that zero-shot topology shift is a structural difficulty for graph rollouts, not merely an architecture-specific failure.

\begin{figure}[t]
  \centering
  \includegraphics[width=\linewidth]{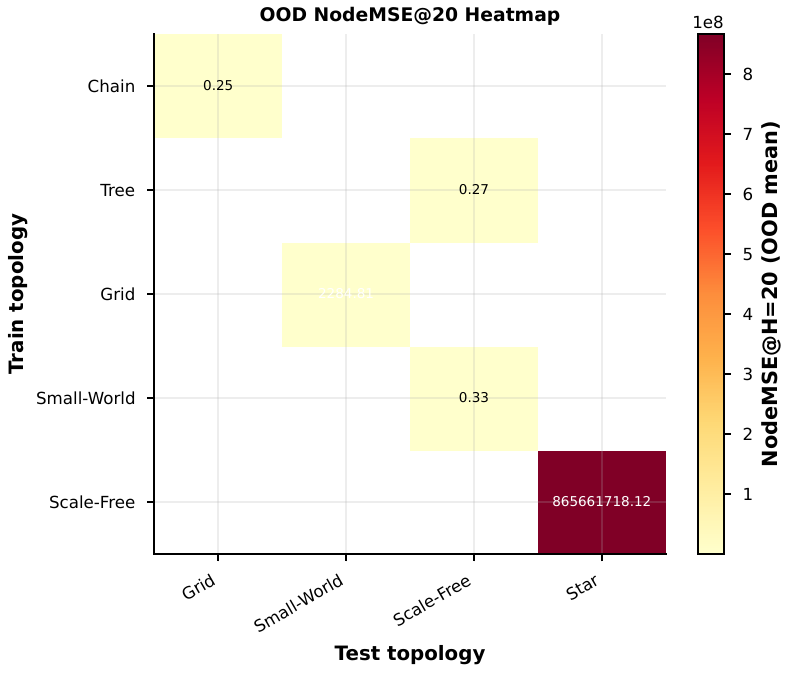}
  \caption{
  \textbf{Zero-shot topology shift.}
  The heatmap reports NodeMSE@$H=20$ for train--test topology pairs.
  Rows indicate training topologies and columns indicate test topologies. Cross-topology transfer is substantially harder than in-distribution rollout for all baselines.
  }
  \label{fig:sup_exp12}
\end{figure}

\subsection{Sparse and Dense Message-Passing Layers}
\label{app:exp13_sparse_dense_mp}

We compare sparse and dense message-passing behavior on the scale-free topology. The experiment evaluates the vanilla GCN world model, MPNN world model, and GPS-style graph transformer across rollout horizons $H\in\{1,4,8,16,32\}$ and three seeds.

Figure~\ref{fig:sup_exp13} shows that the vanilla GCN world model diverges as the horizon increases, with median NodeMSE@$H=32$ reaching $6.5\times10^9$. In contrast, MPNN and GPS remain near the low-error floor, with median NodeMSE@$H=32$ equal to $2.2\times10^{-4}$ and $4.0\times10^{-4}$, respectively. This suggests that the observed divergence is not explained by message-passing density alone. Instead, it is tied to the interaction between the GCN transition operator and the high-risk scale-free topology.

\begin{figure}[t]
  \centering
  \includegraphics[width=\linewidth]{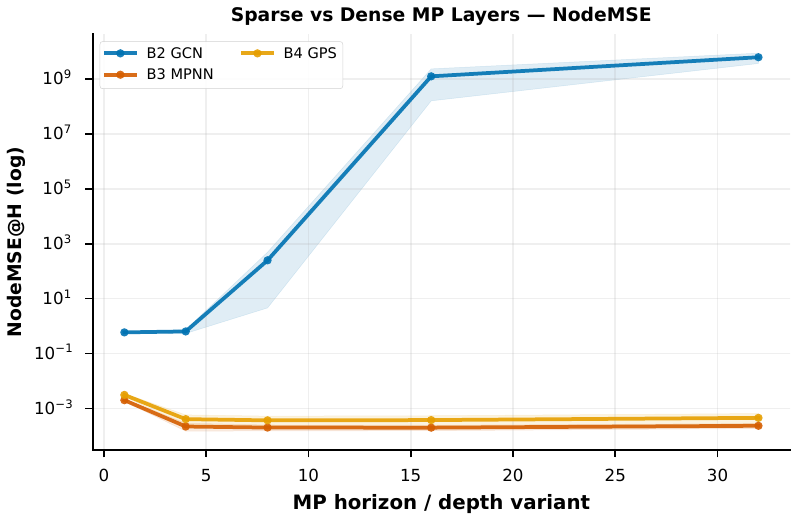}
  \caption{
  \textbf{Sparse and dense message-passing comparison.}
  NodeMSE@$H$ is reported across rollout horizons for the vanilla GCN world model, MPNN world model, and GPS-style graph transformer on the scale-free topology.
  Shaded regions indicate variation across seeds; the divergence pattern is specific to the GCN transition operator rather than message-passing density alone.
  }
  \label{fig:sup_exp13}
\end{figure}

\subsection{Directed and Undirected Graph Variants}
\label{app:exp23_directionality}

We evaluate whether graph directionality affects rollout growth. The experiment uses chain, tree, and scale-free graphs with directed and undirected variants, covering three baselines and three seeds. Figure~\ref{fig:sup_exp23} reports GrowthSlope$_{8\to32}$.

Directionality changes the error-growth distribution, especially on scale-free graphs, where the variance is larger. Chain and tree graphs show smaller directionality effects. These results indicate that directed edges can change the effective rollout operator, but the effect depends strongly on topology.

\begin{figure}[t]
  \centering
  \includegraphics[width=\linewidth]{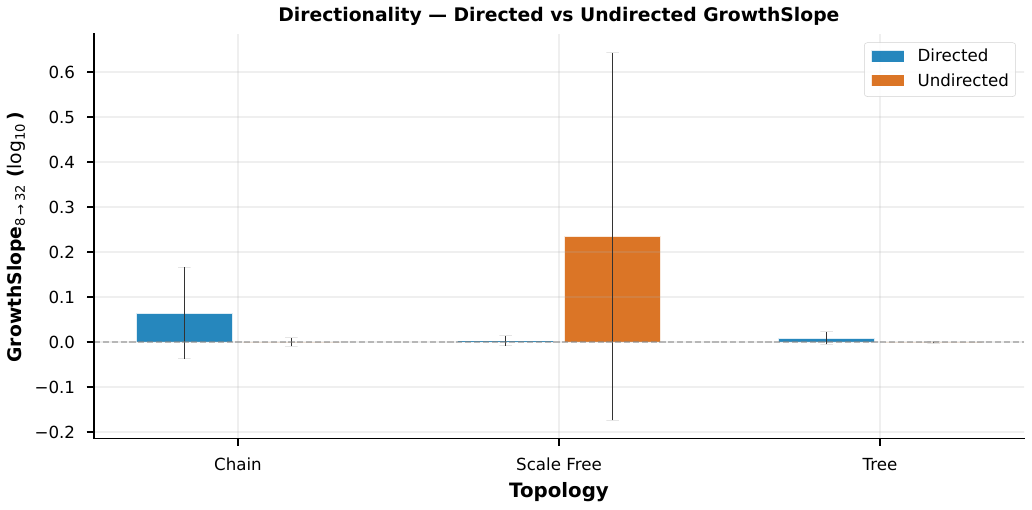}
  \caption{
  \textbf{Directed and undirected graph variants.}
  GrowthSlope$_{8\to32}$ is reported for directed and undirected variants of chain, scale-free, and tree graphs.
  Bars show means and error bars show variation across runs, indicating that directionality changes the effective rollout operator most strongly on scale-free graphs.
  }
  \label{fig:sup_exp23}
\end{figure}

\subsection{Temporal Memory Variants}
\label{app:exp24_memory}

We compare temporal-memory variants for long-horizon rollout prediction. The main ablation evaluates Markov, last-2, and last-4 history variants on scale-free and small-world graphs. The extended experiment compares recurrent, transformer, and retrieval-augmented memory architectures. Figure~\ref{fig:sup_exp24} reports NodeMSE@$H$ across rollout horizons.

Last-$k$ history reduces NodeMSE relative to the Markov variant, with most of the gain appearing from the last-2 context. In the extended comparison, retrieval-augmented memory achieves lower long-horizon NodeMSE than recurrent and transformer variants. These results suggest that adding temporal context can improve long-horizon rollout accuracy, especially when the memory mechanism provides stable retrieval rather than only recurrent propagation.

\begin{figure}[t]
  \centering
  \includegraphics[width=\linewidth]{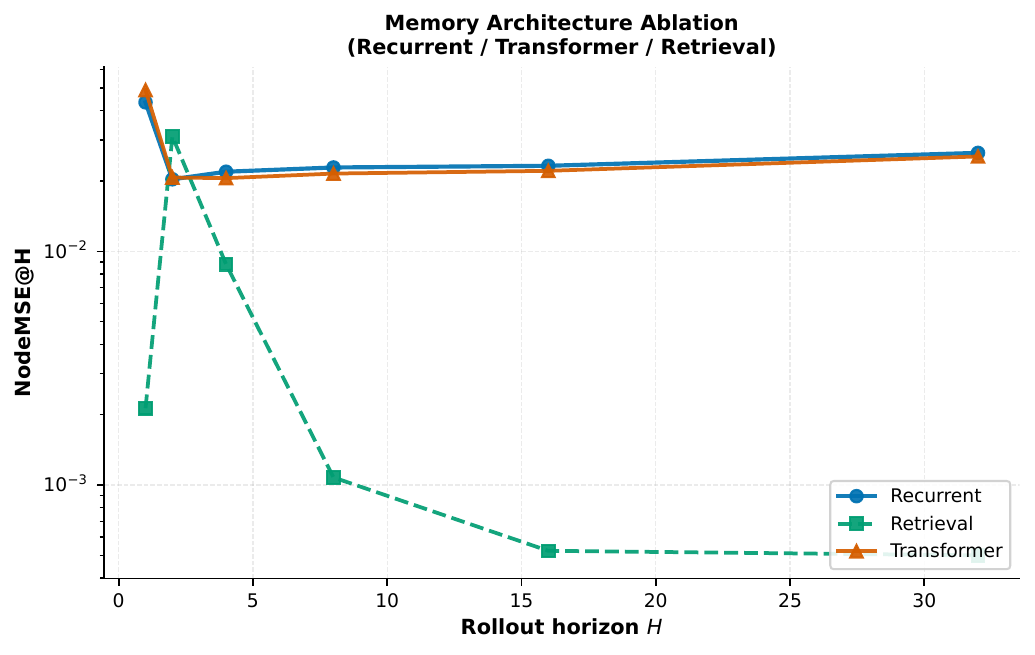}
  \caption{
  \textbf{Temporal memory variants.}
  NodeMSE@$H$ is reported across rollout horizons for recurrent, transformer, and retrieval-augmented memory variants.
  Shaded regions indicate variation across topologies and seeds; retrieval-augmented memory provides the most stable long-horizon reduction in this comparison.
  }
  \label{fig:sup_exp24}
\end{figure}
\section{Conclusion}
\label{sec:conclusion}

We present a unified framework for analyzing and improving Graph World Models in graph-structured planning environments. The framework covers fixed-edge and dynamic-edge regimes, introduces topology-aware rollout-error analysis through GEAF and the joint node-edge operator $B$, and connects rollout error to planning regret. Empirically, long-horizon graph planning is sensitive to topology and regime: regret increases with rollout horizon, dynamic-edge training is needed when graph structure evolves, and Error-Aware GWM improves stability while preserving prediction accuracy. Real-world experiments show that GWMs are best viewed as tools for dynamic graph rollouts and agent planning, not as replacements for specialized graph learning methods. Future work includes applying this framework to LLM-based multi-agent systems, heterogeneous knowledge graphs, and real-time network planning.

\section{Discussion}
\label{sec:discussion}

Our results show that GWM rollout error is regime-dependent. In fixed-edge settings, instability mainly comes from the interaction between topology and the learned node-transition operator. This explains why the vanilla GCN world model fails on high-$\rho(A)$ graphs, while Error-Aware GWM avoids divergence by combining spectral control, rollout consistency, and critical-node weighting. In dynamic-edge settings, edge prediction activates node-edge cross-coupling through the joint operator $B$, and DE training substantially improves DE rollout accuracy. GEAF is therefore best interpreted as a topology-aware indicator of rollout-error growth, not as a universal predictor of final error level. Locally, the FE version of GEAF reduces to degree centrality, so future correction policies should use activation-dependent, uncertainty-aware, or DE-regime coupling terms to provide a distinct node-priority signal.

\section{Limitations and Future Work}
\label{sec:limitations}

The strongest evidence in this work comes from controlled synthetic graph simulators. Real-world benchmarks show more limited transfer: GWMs are not replacements for specialized static graph classifiers, and DE-GWM does not consistently dominate on sparse temporal link prediction. The joint-operator theory is also only partially tested: FE models collapse to the $M_X=0$ case, while larger DE datasets are needed to evaluate the GEAF proxy for $\rho(B)$. Future work should study larger and denser temporal graphs, real agent-system traces, activation-aware local GEAF, uncertainty-guided correction, and memory-augmented GWMs for long-horizon planning.

\bibliography{references}


\clearpage
\appendix

\section{Proofs and mathematical background}\label{sec:proofs}

\subsection{Proof of Lemma~\ref{lem:t1_fixed_edge}}
\label{app:proof_t1_fixed_edge}

\begin{proof}
Since the graph is fixed in the FE regime, we have $A_t\equiv A$ for all rollout steps. Let $F_\theta^X(\cdot,A,a)$ denote the learned node-transition map and $F^{X\star}(\cdot,A,a)$ denote the true node-transition map. By Assumption~\ref{assump:onestep_error}, for any input $(X,A,a)$ in the compact state space,
\begin{equation}
\|F_\theta^X(X,A,a)-F^{X\star}(X,A,a)\|_F\le \epsilon_X.
\label{eq:proof_t1_onestep}
\end{equation}

Let $\hat X_k$ and $X_k$ be the predicted and true node features at rollout step $k$. Then
\begin{equation}
\begin{aligned}
e^X_{k+1} &=\|\hat X_{k+1}-X_{k+1}\|_F\\ &=\|F_\theta^X(\hat X_k,A,a_k)-F^{X\star}(X_k,A,a_k)\|_F.
\end{aligned}
\label{eq:proof_t1_error_start}
\end{equation}
Adding and subtracting $F_\theta^X(X_k,A,a_k)$ and applying the triangle inequality gives
\begin{equation}
\begin{aligned}
e^X_{k+1} &\le \|F_\theta^X(\hat X_k,A,a_k)-F_\theta^X(X_k,A,a_k)\|_F\\&+\|F_\theta^X(X_k,A,a_k)-F^{X\star}(X_k,A,a_k)\|_F.
\end{aligned}
\label{eq:proof_t1_triangle}
\end{equation}
The second term is bounded by $\epsilon_X$ from Equation~\eqref{eq:proof_t1_onestep}. For the first term, Assumption~\ref{assump:message_passing} gives a message-passing node transition with coordinate-wise Lipschitz activation $\sigma$:
\begin{equation}
F_\theta^X(X,A,a)=\sigma(\hat A X W_1+Ua\mathbf{1}^\top)W_2.
\label{eq:proof_t1_message}
\end{equation}
For a multi-layer message-passing model, repeated use of the Lipschitz property of $\sigma$ and submultiplicativity of operator norms gives
\begin{equation}
\begin{aligned}
&\|F_\theta^X(\hat X_k,A,a_k)-F_\theta^X(X_k,A,a_k)\|_F \\ &\le L_\sigma \|A\|_2\prod_\ell\|W_\ell\|_2 \cdot \|\hat X_k-X_k\|_F.
\end{aligned}
\label{eq:proof_t1_lipschitz}
\end{equation}
For symmetric adjacency matrices, $\|A\|_2=\rho(A)$. Hence,
\begin{equation}
\begin{aligned}
&\|F_\theta^X(\hat X_k,A,a_k)-F_\theta^X(X_k,A,a_k)\|_F\le L_X e^X_k,
\\ &L_X=L_\sigma\rho(A)\prod_\ell\|W_\ell\|_2.
\end{aligned}
\label{eq:proof_t1_lx}
\end{equation}
Combining Equations~\eqref{eq:proof_t1_triangle} and~\eqref{eq:proof_t1_lx}, we obtain the affine recursion
\begin{equation}
e^X_{k+1}\le L_X e^X_k+\epsilon_X.
\label{eq:proof_t1_recursion}
\end{equation}
Unrolling Equation~\eqref{eq:proof_t1_recursion} for $k$ steps yields
\begin{equation}
e^X_k\le L_X^k e^X_0+\epsilon_X\sum_{i=0}^{k-1}L_X^i.
\label{eq:proof_t1_unroll}
\end{equation}
If $L_X\neq 1$, the finite geometric sum gives
\begin{equation}
\sum_{i=0}^{k-1}L_X^i=\frac{L_X^k-1}{L_X-1}.
\label{eq:proof_t1_geometric}
\end{equation}
Therefore,
\begin{equation}
e^X_k\le L_X^k e^X_0+\epsilon_X\frac{L_X^k-1}{L_X-1}.
\label{eq:proof_t1_final}
\end{equation}
This proves Lemma~\ref{lem:t1_fixed_edge}. When $L_X=1$, the same recursion gives the limiting form $e^X_k\le e^X_0+k\epsilon_X$.
\end{proof}

\subsection{Proof of Lemma~\ref{lem:t2_joint_operator}}
\label{app:proof_t2_joint_operator}

\begin{proof}
By assumption, the node and edge rollout errors satisfy
\begin{equation}
\begin{aligned}
e^X_{k+1} &\le L_X e^X_k+L_A e^A_k+\epsilon_X,\\
e^A_{k+1} &\le M_X e^X_k+M_A e^A_k+\epsilon_A.
\end{aligned}
\label{eq:proof_t2_scalar}
\end{equation}
Define
\begin{equation}
\begin{aligned}
u_k&=(e^X_k,e^A_k)^\top,\\ B&=\begin{pmatrix}L_X & L_A\\ M_X & M_A\end{pmatrix},\\ \epsilon&=(\epsilon_X,\epsilon_A)^\top.
\end{aligned}
\label{eq:proof_t2_defs}
\end{equation}
Since all constants are nonnegative, Equation~\eqref{eq:proof_t2_scalar} is equivalent to the componentwise vector inequality
\begin{equation}
u_{k+1}\preceq Bu_k+\epsilon.
\label{eq:proof_t2_vector}
\end{equation}
This proves Equation~\eqref{eq:t2_recursion}.

It remains to compute $\rho(B)$. The characteristic polynomial of $B$ is
\begin{equation}
\begin{aligned}
\det(\lambda I-B)
&=\lambda^2-(L_X+M_A)\lambda\\
&\quad +(L_XM_A-L_AM_X).
\end{aligned}
\label{eq:proof_t2_charpoly}
\end{equation}
Thus, the two eigenvalues are
\begin{equation}
\begin{aligned}
\lambda_{\pm}
&=\frac{1}{2}\Big[(L_X+M_A)\\
&\quad\pm\sqrt{(L_X-M_A)^2+4L_AM_X}\Big].
\end{aligned}
\label{eq:proof_t2_eigenvalues}
\end{equation}
Because $B$ is nonnegative, Perron--Frobenius implies that its spectral radius is the larger eigenvalue. Hence,
\begin{equation}
\begin{aligned}
\rho(B) &=\lambda_+\\
&=\frac{1}{2}\Big[(L_X+M_A)\\
&\quad+\sqrt{(L_X-M_A)^2+4L_AM_X}\Big].
\end{aligned}
\label{eq:proof_t2_rho}
\end{equation}
This proves Equation~\eqref{eq:joint_operator_spectral_radius}.

Finally, suppose $L_AM_X>0$. Then
\begin{equation}
\begin{aligned}
(L_X-M_A)^2+4L_AM_X &> (L_X-M_A)^2,\\
\sqrt{(L_X-M_A)^2+4L_AM_X} &> |L_X-M_A|.
\end{aligned}
\label{eq:proof_t2_strict_sqrt}
\end{equation}
Substituting this inequality into Equation~\eqref{eq:proof_t2_rho} gives
\begin{equation}
\begin{aligned}
\rho(B)
&> \frac{1}{2}\left[(L_X+M_A)+|L_X-M_A|\right]\\
&= \max(L_X,M_A).
\end{aligned}
\label{eq:proof_t2_super_final}
\end{equation}
This proves Lemma~\ref{lem:t2_joint_operator}.
\end{proof}

\subsection{Proof of Lemma~\ref{lem:t3_geaf_proxy}}
\label{app:proof_t3_geaf_proxy}

\begin{proof}
Since $B$ is a nonnegative matrix, the Gershgorin circle theorem gives
\begin{equation}
\rho(B)\le \max(L_X+L_A,M_X+M_A).
\label{eq:proof_t3_gershgorin}
\end{equation}
In the signal-dominant regime, the first-row error term is controlled by the topology-dependent message-passing contribution plus a bounded non-topological residual. Thus,
\begin{equation}
\begin{aligned}
L_X+L_A
&\le \GEAF(G)+\GEAF(G)\frac{R_X}{\|A\|_2}\\
&=\GEAF(G)\left(1+\frac{R_X}{\|A\|_2}\right).
\end{aligned}
\label{eq:proof_t3_first_row}
\end{equation}
Under the same signal-dominant condition, the second row is no larger than this bound, so combining Equations~\eqref{eq:proof_t3_gershgorin} and~\eqref{eq:proof_t3_first_row} yields
\begin{equation}
\rho(B)\le \GEAF(G)\left(1+\frac{R_X}{\|A\|_2}\right).
\label{eq:proof_t3_first_bound}
\end{equation}
Since $\|A\|_2\ge R_X$, we have
\begin{equation}
1+\frac{R_X}{\|A\|_2}\le 2.
\label{eq:proof_t3_factor_two}
\end{equation}
Therefore,
\begin{equation}
\begin{aligned}
\rho(B)
&\le \GEAF(G)\left(1+\frac{R_X}{\|A\|_2}\right)\\
&\le 2\GEAF(G).
\end{aligned}
\label{eq:proof_t3_final_bound}
\end{equation}
This proves the bound in Equation~\eqref{eq:geaf_proxy_bound}.

Finally, if model weights are fixed across topologies, then
\begin{equation}
\GEAF(G)=\rho(A)\prod_\ell\|W_\ell\|_2.
\label{eq:proof_t3_geaf_fixed_weights}
\end{equation}
The factor $\prod_\ell\|W_\ell\|_2$ is constant across topologies. Hence, $\GEAF(G)$ is monotone in $\rho(A)$, and the rank order of $\GEAF(G)$ matches the rank order of $\rho(A)$. This proves Lemma~\ref{lem:t3_geaf_proxy}.
\end{proof}

\subsection{Proof of Lemma~\ref{lem:t4_regret}}
\label{app:proof_t4_regret}

\begin{proof}
Let $\pi^*$ be the policy selected by planning with the true dynamics, and let $\hat\pi_\theta$ be the policy selected by planning with the learned GWM. A standard simulation-lemma argument bounds the regret by the accumulated reward error along the planning horizon:
\begin{equation}
\begin{aligned}
J^*(\pi^*)-J^*(\hat\pi_\theta)
&\le 2\sum_{k=0}^{H-1}\gamma^k\\
&\quad \times
\Big|R^\star(G_k,a_k)\\
&\qquad -R_\theta(\hat G_k,a_k)\Big|.
\end{aligned}
\label{eq:proof_t4_sim_start}
\end{equation}
Adding and subtracting $R^\star(\hat G_k,a_k)$ gives
\begin{equation}
\begin{aligned}
&\left|R^\star(G_k,a_k)-R_\theta(\hat G_k,a_k)\right|\\
&\le \left|R^\star(G_k,a_k)-R^\star(\hat G_k,a_k)\right|\\
& + \left|R^\star(\hat G_k,a_k)-R_\theta(\hat G_k,a_k)\right|.
\end{aligned}
\label{eq:proof_t4_reward_split}
\end{equation}
By Assumption~\ref{assump:reward}, the first term is bounded by the Lipschitz reward condition, and the second term is bounded by $\epsilon_R$:
\begin{equation}
\begin{aligned}
&\left|R^\star(G_k,a_k)-R_\theta(\hat G_k,a_k)\right|\\
&\le L_R d_G(G_k,\hat G_k)+\epsilon_R.
\end{aligned}
\label{eq:proof_t4_reward_bound}
\end{equation}
Using Equation~\eqref{eq:t4_graph_distance},
\begin{equation}
d_G(G_k,\hat G_k)=c^\top u_k\le \|c\|_2\|u_k\|_2.
\label{eq:proof_t4_distance_bound}
\end{equation}
From Lemma~\ref{lem:t2_joint_operator}, the joint error recursion satisfies
\begin{equation}
u_{k+1}\preceq Bu_k+\epsilon.
\label{eq:proof_t4_error_recursion}
\end{equation}
Let $\tilde\epsilon$ denote a uniform scale for the one-step error vector. Standard matrix-recursion bounds give
\begin{equation}
\|u_k\|_2\le \kappa(B)\tilde\epsilon\sum_{i=0}^{k-1}\rho(B)^i.
\label{eq:proof_t4_uk_bound}
\end{equation}
Substituting Equations~\eqref{eq:proof_t4_reward_bound}--\eqref{eq:proof_t4_uk_bound} into Equation~\eqref{eq:proof_t4_sim_start} yields
\begin{equation}
\begin{aligned}
J^*(\pi^*)-J^*(\hat\pi_\theta)
&\le 2L_R\|c\|_2\kappa(B)\tilde\epsilon\\
&\quad \times
\sum_{k=0}^{H-1}\gamma^k\sum_{i=0}^{k-1}\rho(B)^i\\
&\quad +2\epsilon_R\sum_{k=0}^{H-1}\gamma^k.
\end{aligned}
\label{eq:proof_t4_double_sum}
\end{equation}
The reward-model error term is
\begin{equation}
\sum_{k=0}^{H-1}\gamma^k=\frac{1-\gamma^H}{1-\gamma}.
\label{eq:proof_t4_discount_sum}
\end{equation}
For the rollout-error term,
\begin{equation}
\begin{aligned}
\sum_{k=0}^{H-1}\gamma^k\sum_{i=0}^{k-1}\rho^i
&=\sum_{k=0}^{H-1}\gamma^k\frac{\rho^k-1}{\rho-1}\\
&=\frac{1}{\rho-1}
\left[
\sum_{k=0}^{H-1}(\gamma\rho)^k
-\sum_{k=0}^{H-1}\gamma^k
\right]\\
&=\frac{1}{\rho-1}
\left[
\frac{1-(\gamma\rho)^H}{1-\gamma\rho}
-\frac{1-\gamma^H}{1-\gamma}
\right].
\end{aligned}
\label{eq:proof_t4_phi_derivation}
\end{equation}
Thus,
\begin{equation}
\begin{aligned}
\Phi_H(\gamma,\rho)
&=\frac{1}{\rho-1}
\left[
\frac{1-(\gamma\rho)^H}{1-\gamma\rho}
-\frac{1-\gamma^H}{1-\gamma}
\right].
\end{aligned}
\label{eq:proof_t4_phi}
\end{equation}
Setting $\rho=\rho(B)$ in Equation~\eqref{eq:proof_t4_double_sum} gives
\begin{equation}
\begin{aligned}
J^*(\pi^*)-J^*(\hat\pi_\theta)
&\le 2L_R\|c\|_2\kappa(B)\tilde\epsilon\\
&\quad \times\Phi_H(\gamma,\rho(B))\\
&\quad +2\epsilon_R\frac{1-\gamma^H}{1-\gamma}.
\end{aligned}
\label{eq:proof_t4_final}
\end{equation}
If $\gamma\rho(B)>1$, the term $(\gamma\rho(B))^H$ appears in $\Phi_H(\gamma,\rho(B))$, so the rollout-error contribution grows faster than linearly in $H$. This proves Lemma~\ref{lem:t4_regret}.
\end{proof}

\subsection{Proof of Corollary~\ref{cor:slope_vs_level}}
\label{app:proof_slope_vs_level}

\begin{proof}
From Lemma~\ref{lem:t1_fixed_edge}, the FE rollout error satisfies
\begin{equation}
e^X_k\le L_X^k e^X_0+\epsilon_X\frac{L_X^k-1}{L_X-1}.
\label{eq:proof_slope_t1}
\end{equation}
When the rollout is not dominated by the constant error floor, the leading dependence on $k$ is proportional to $L_X^k$. Thus, for some constant $C>0$ independent of $k$,
\begin{equation}
e^X_k \approx C L_X^k.
\label{eq:proof_slope_asymptotic}
\end{equation}
Taking logarithms gives
\begin{equation}
\log e^X_k \approx \log C+k\log L_X.
\label{eq:proof_slope_log}
\end{equation}
Therefore,
\begin{equation}
\frac{\partial \log e^X_k}{\partial k}\to \log L_X.
\label{eq:proof_slope_final}
\end{equation}
This proves Corollary~\ref{cor:slope_vs_level}.
\end{proof}

\section{Auxiliary latent world-model bounds}
\label{sec:problem_formulation2}

\subsection{Latent world-model setup}
This appendix records the scalar latent-world-model bounds that underlie the graph-valued results in the main text. They are not a replacement for Lemmas~\ref{lem:t1_fixed_edge}--\ref{lem:t4_regret}; rather, they show how the familiar compounding-error argument for latent dynamics~\cite{hafner2018dreamer,chua2018pets,janner2019trust} becomes the graph-specific operator analysis used in Section~\ref{sec:theory}. We consider a sequential decision-making setting in which an agent interacts with a partially observed environment over discrete time steps. At each time step $t$, the agent receives an observation $o_t \in \mathcal{O}$, selects an action $a_t \in \mathcal{A}$, and receives a scalar reward $r_t \in \mathbb{R}$. Since the true environment state is generally not directly observable, the agent must infer a compact latent state from its interaction history. Let
\begin{equation}
    h_t = (o_1, a_1, o_2, a_2, \ldots, o_t)
\end{equation}
denote the observable history up to time $t$. A latent world model learns an internal state representation
\begin{equation}
    z_t = e_\theta(h_t),
\end{equation}
where $z_t \in \mathcal{Z}$ is a latent state and $e_\theta$ is an encoder parameterized by $\theta$.

The purpose of the latent world model is not only to encode the current observation, but also to predict how the latent state evolves under actions. We define a latent world model as
\begin{equation}
    \begin{aligned}
    \mathcal{W}_\theta =
    (&e_\theta, 
    p_\theta(z_{t+1} \mid z_t, a_t),\\
    &p_\theta(o_t \mid z_t), 
    p_\theta(r_t \mid z_t, a_t)),
    \end{aligned}
\end{equation}
where $e_\theta$ maps observation histories to latent states, $p_\theta(z_{t+1} \mid z_t, a_t)$ is the latent transition model, $p_\theta(o_t \mid z_t)$ is the observation reconstruction or prediction model, and $p_\theta(r_t \mid z_t, a_t)$ is the reward prediction model.

Given a trajectory
\begin{equation}
    \tau = (o_1, a_1, r_1, o_2, a_2, r_2, \ldots, o_T),
\end{equation}
the latent world model induces the following factorization:
\begin{equation}
\begin{aligned}
    &p_\theta(o_{1:T}, r_{1:T}, z_{1:T} \mid a_{1:T})
    =
    p_\theta(z_1 \mid o_1)
    \\
    & \times \prod_{t=1}^{T}  p_\theta(o_t \mid z_t)
    p_\theta(r_t \mid z_t, a_t)
    p_\theta(z_{t+1} \mid z_t, a_t).
    \end{aligned}
\end{equation}
This factorization shows that the model compresses the observable history into latent states while preserving the information needed to predict future observations and rewards.

The learning objective is to fit the latent world model to trajectories collected from the environment. A common training objective is the negative log-likelihood
\begin{equation}
\begin{aligned}
    &\mathcal{L}_{\mathrm{WM}}(\theta)
    =
    - \sum_{t=1}^{T}
    \Big[
    \log p_\theta(o_t \mid z_t)\\
   & +
    \log p_\theta(r_t \mid z_t, a_t)
    +
    \log p_\theta(z_{t+1} \mid z_t, a_t)
    \Big],
    \end{aligned}
\end{equation}
where $z_t = e_\theta(h_t)$. In variational latent world models, the latent transition term can be replaced by an evidence lower bound that includes an inference model $q_\theta(z_t \mid h_t)$ and a prior transition model $p_\theta(z_t \mid z_{t-1}, a_{t-1})$.

After training, the latent world model can be used for multi-step imagination or planning. Starting from an inferred latent state $z_t$, the model recursively predicts future latent states under a candidate action sequence $(a_t, a_{t+1}, \ldots, a_{t+H-1})$:
\begin{equation}
    \hat{z}_{t+k+1}
    \sim
    p_\theta(\hat{z}_{t+k+1} \mid \hat{z}_{t+k}, a_{t+k}),
\end{equation}
    $k = 0, 1, \ldots, H-1$, with $\hat{z}_t = z_t$. The predicted return of this imagined trajectory is
\begin{equation}
    \hat{J}_\theta(a_{t:t+H-1} \mid z_t)
    =
    \mathbb{E}_{\mathcal{W}_\theta}
    \left[
    \sum_{k=0}^{H-1}
    \gamma^k
    \hat{r}_{t+k}
    \right],
\end{equation}
where
\begin{equation}
    \hat{r}_{t+k}
    \sim
    p_\theta(\hat{r}_{t+k} \mid \hat{z}_{t+k}, a_{t+k}).
\end{equation}
The planning problem can then be written as
\begin{equation}
    a_{t:t+H-1}^{\star}
    =
    \arg\max_{a_{t:t+H-1}}
    \hat{J}_\theta(a_{t:t+H-1} \mid z_t).
\end{equation}

Thus, a latent world model is an action-conditioned predictive representation of the environment. It learns a latent state space in which the agent can simulate future trajectories, estimate their rewards, and select actions according to predicted long-term outcomes.

\subsection{Error Propagation in Latent World Models}
\label{sec:error_propagation}

We first study how a one-step latent transition error propagates through multi-step imagination. We consider the deterministic mean dynamics induced by the latent world model. Let
\begin{equation}
    \hat{z}_{t+1} = F_\theta(\hat{z}_t, a_t)
\end{equation}
denote the learned latent transition, and let
\begin{equation}
    z_{t+1} = F^\star(z_t, a_t)
\end{equation}
denote the latent dynamics induced by the true environment under the same latent representation. Here, $F^\star$ is not required to be a physical-state transition; it is the true transition map in the latent space.

\begin{assumption}[Uniform one-step latent transition error]
\label{ass:transition_error}
There exists a constant $\epsilon_F \ge 0$ such that
\begin{equation}
    \sup_{z \in \mathcal{Z}, a \in \mathcal{A}}
    \left\|F_\theta(z,a) - F^\star(z,a)\right\|
    \le \epsilon_F .
\end{equation}
\end{assumption}

\begin{assumption}[Lipschitz continuity of the learned latent transition]
\label{ass:transition_lipschitz}
There exists a constant $L_F \ge 0$ such that, for any $z,z' \in \mathcal{Z}$ and $a \in \mathcal{A}$,
\begin{equation}
    \left\|F_\theta(z,a) - F_\theta(z',a)\right\|
    \le L_F \left\|z-z'\right\|.
\end{equation}
\end{assumption}

Given an action sequence $a_{t:t+H-1}$, define the true latent rollout and the imagined latent rollout as
\begin{equation}
\begin{aligned}
    z_{t+k+1} &= F^\star(z_{t+k}, a_{t+k}),
    \\
    \hat{z}_{t+k+1} &= F_\theta(\hat{z}_{t+k}, a_{t+k}),
    \end{aligned}
\end{equation}
for $k=0,1,\ldots,H-1$. Define the $k$-step rollout error by
\begin{equation}
    e_k = \left\|\hat{z}_{t+k} - z_{t+k}\right\|.
\end{equation}

\begin{theorem}[Multi-step latent transition error propagation]
\label{thm:latent_transition_error}
Suppose Assumptions~\ref{ass:transition_error} and~\ref{ass:transition_lipschitz} hold. Then, for any fixed action sequence $a_{t:t+H-1}$ and any $k=0,1,\ldots,H$,
\begin{equation}
\begin{aligned}
    e_k
    &\le
    L_F^k e_0
    +
    \epsilon_F
    \sum_{i=0}^{k-1} L_F^i .
\end{aligned}
\end{equation}
Equivalently, if $L_F \neq 1$,
\begin{equation}
\begin{aligned}
    e_k
    &\le
    L_F^k e_0
    +
    \epsilon_F
    \frac{L_F^k - 1}{L_F - 1}\\
    &= O(L_F^k e_0 + \epsilon_F L_F^k).
\end{aligned}
\end{equation}
If $L_F=1$, then $e_k\le e_0 + k\epsilon_F = O(e_0+k\epsilon_F)$. In particular, if the initial latent state is exact, namely $e_0=0$, then
\begin{equation}
    e_k
    \le
    \epsilon_F
    \sum_{i=0}^{k-1} L_F^i
    =
    \begin{cases}
    O(\epsilon_F), & 0\le L_F<1,\\
    O(k\epsilon_F), & L_F=1,\\
    O(\epsilon_F L_F^k), & L_F>1.
    \end{cases}
\end{equation}
\end{theorem}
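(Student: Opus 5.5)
The proof reduces to a one-step affine recursion that is then unrolled, in the same way as the fixed-edge argument for Lemma~\ref{lem:t1_fixed_edge}.

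\textbf{One-step recursion.} Fix the action sequence $a_{t:t+H-1}$ and take any $k$ with $0\le k\le H-1$. Write
\[
\hat z_{t+k+1}-z_{t+k+1}=F_\theta(\hat z_{t+k},a_{t+k})-F^\star(z_{t+k},a_{t+k}).
\]
Add and subtract $F_\theta(z_{t+k},a_{t+k})$ and apply the triangle inequality. The first difference is controlled by the Lipschitz bound of Assumption~\ref{ass:transition_lipschitz}, giving at most $L_F e_k$. The second difference is controlled uniformly by Assumption~\ref{ass:transition_error}, giving at most $\epsilon_F$. This second step uses that $z_{t+k}\in\mathcal{Z}$ and $a_{t+k}\in\mathcal{A}$, which holds because $F^\star$ maps into $\mathcal{Z}$. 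Together these give
\[
e_{k+1}\le L_F e_k+\epsilon_F .
\]

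\textbf{Unrolling.} Next I prove $e_k\le L_F^k e_0+\epsilon_F\sum_{i=0}^{k-1}L_F^i$ by induction on $k$.
\begin{itemize}
\item The base case $k=0$ holds trivially, since the sum is empty.
\item For the inductive step, substitute the hypothesis into the one-step recursion. Because $L_F\ge 0$, multiplying the hypothesis by $L_F$ preserves the inequality. This gives $L_F^{k+1}e_0+\epsilon_F\sum_{i=1}^{k}L_F^i+\epsilon_F$, which is the claimed bound at $k+1$.
\end{itemize}

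\textbf{Closed forms and cases.} For $L_F\ne1$, the finite geometric sum gives $\sum_{i=0}^{k-1}L_F^i=(L_F^k-1)/(L_F-1)$. For $L_F=1$, the sum equals $k$, so $e_k\le e_0+k\epsilon_F$. For the $O(\cdot)$ statements I split into three cases, treating $L_F$ as a constant independent of $k$:
\begin{itemize}
\item If $0\le L_F<1$, the sum is at most $1/(1-L_F)$, which yields $O(\epsilon_F)$.
\item If $L_F=1$, the bound is linear in $k$, which yields $O(k\epsilon_F)$.
\item If $L_F>1$, the fraction $(L_F^k-1)/(L_F-1)$ is at most $L_F^k/(L_F-1)$, which yields $O(\epsilon_F L_F^k)$ and also the combined form $O(L_F^ke_0+\epsilon_F L_F^k)$.
\end{itemize}
Setting $e_0=0$ gives the specialized statement.

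\textbf{Main obstacle.} There is no real obstacle. The only point needing care is the $O(\cdot)$ notation in the general equivalent form. For $L_F<1$ the expression $\epsilon_F L_F^k$ decays, so $O(L_F^ke_0+\epsilon_F L_F^k)$ does not bound a quantity that converges to a constant, and I will note that this form is intended for the regime $L_F>1$. The other thing to check is that the uniform error bound is only ever evaluated at true states $z_{t+k}$ in its stated domain, which the add-and-subtract choice ensures.
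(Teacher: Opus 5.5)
Your proof is correct and follows the paper's argument: add and subtract $F_\theta(z_{t+k},a_{t+k})$, bound the two pieces by $L_F e_k$ and $\epsilon_F$, and unroll the resulting affine recursion using the geometric sum. Your caveat about the combined form $O(L_F^k e_0+\epsilon_F L_F^k)$ is a valid correction that the paper's statement glosses over: that form is accurate only for $L_F>1$, since for $L_F<1$ the $\epsilon_F$ term tends to $\epsilon_F/(1-L_F)$ rather than decaying.
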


\begin{proof}
For each $k=0,1,\ldots,H-1$, we have
\begin{equation}
    e_{k+1}
    =
    \left\|
    F_\theta(\hat{z}_{t+k},a_{t+k})
    -
    F^\star(z_{t+k},a_{t+k})
    \right\|.
\end{equation}
Adding and subtracting $F_\theta(z_{t+k},a_{t+k})$ gives
\begin{align}
    &e_{k+1}
    \le
    \left\|
    F_\theta(\hat{z}_{t+k},a_{t+k})
    -
    F_\theta(z_{t+k},a_{t+k})
    \right\|\\
    &+
    \left\|
    F_\theta(z_{t+k},a_{t+k})
    -
    F^\star(z_{t+k},a_{t+k})
    \right\|.
\end{align}
By Assumption~\ref{ass:transition_lipschitz}, the first term is bounded by
\begin{equation}
    L_F \left\|\hat{z}_{t+k}-z_{t+k}\right\|
    =
    L_F e_k .
\end{equation}
By Assumption~\ref{ass:transition_error}, the second term is bounded by $\epsilon_F$. Therefore,
\begin{equation}\label{eq:24}
    e_{k+1} \le L_F e_k + \epsilon_F .
\end{equation}
Unrolling this recursion of Equation~\eqref{eq:24} yields
\begin{equation}
    e_k
    \le
    L_F^k e_0
    +
    \epsilon_F
    \sum_{i=0}^{k-1}L_F^i .
\end{equation}
When $L_F \neq 1$, the geometric sum satisfies
\begin{equation}
    \sum_{i=0}^{k-1}L_F^i
    =
    \frac{L_F^k-1}{L_F-1}.
\end{equation}
When $L_F=1$, the recursion becomes  $e_{k+1} \le e_k + \epsilon_F$, and hence $e_k \le e_0 + k\epsilon_F$.
\end{proof}

Theorem~\ref{thm:latent_transition_error} shows that even a uniformly bounded one-step transition error can increase across a multi-step imagined trajectory. When $L_F<1$, the transition is contractive and the accumulated error remains controlled. When $L_F=1$, the error grows at most linearly in the planning horizon. When $L_F>1$, the error bound grows geometrically, which gives a formal explanation of compounding error in latent world models.

\subsection{Reward Prediction and Return Error Bound}
\label{sec:reward_error_bound}

We next show that latent transition error induces reward prediction error and therefore biases the planning objective. Let the learned reward model be
\begin{equation}
    \hat{r}_{t+k} = R_\theta(\hat{z}_{t+k}, a_{t+k}),
\end{equation}
and let the true latent reward function be
\begin{equation}
    r_{t+k} = R^\star(z_{t+k}, a_{t+k}).
\end{equation}
For a fixed horizon $H$, define the learned imagined return and the true return under the same action sequence as
\begin{equation}
    \hat{J}_\theta(a_{t:t+H-1}\mid \hat{z}_t)
    =
    \sum_{k=0}^{H-1}
    \gamma^k
    R_\theta(\hat{z}_{t+k}, a_{t+k})
\end{equation}
and
\begin{equation}
    J^\star(a_{t:t+H-1}\mid z_t)
    =
    \sum_{k=0}^{H-1}
    \gamma^k
    R^\star(z_{t+k}, a_{t+k}).
\end{equation}

\begin{assumption}[Uniform one-step reward model error]
\label{ass:reward_error}
There exists a constant $\epsilon_R \ge 0$ such that
\begin{equation}
    \sup_{z \in \mathcal{Z}, a \in \mathcal{A}}
    \left|
    R_\theta(z,a) - R^\star(z,a)
    \right|
    \le \epsilon_R .
\end{equation}
\end{assumption}

The previous theorem bounds the deviation of the imagined latent trajectory from the true latent trajectory. We now show that this latent rollout error directly induces reward prediction error, because the reward model is evaluated on the imagined latent states rather than on the true latent states.

\begin{assumption}[Lipschitz continuity of the learned reward model]
\label{ass:reward_lipschitz}
There exists a constant $L_R \ge 0$ such that, for any $z,z' \in \mathcal{Z}$ and $a \in \mathcal{A}$,
\begin{equation}
    \left|
    R_\theta(z,a) - R_\theta(z',a)
    \right|
    \le
    L_R \left\|z-z'\right\|.
\end{equation}
\end{assumption}

\begin{proposition}[Reward prediction error bound]
\label{prop:reward_prediction_error}
Suppose Assumptions~\ref{ass:transition_error},~\ref{ass:transition_lipschitz},~\ref{ass:reward_error}, and~\ref{ass:reward_lipschitz} hold. For any fixed action sequence $a_{t:t+H-1}$ and any $k=0,1,\ldots,H-1$, define
\begin{equation}
    \Delta^R_k
    =
    \left|
    R_\theta(\hat{z}_{t+k},a_{t+k})
    -
    R^\star(z_{t+k},a_{t+k})
    \right| .
\end{equation}
Then
\begin{equation}
\begin{aligned}
    \Delta^R_k
    \le
    L_R
    \left(
    L_F^k e_0
    +
    \epsilon_F
    \sum_{i=0}^{k-1}L_F^i
    \right)
    +
    \epsilon_R .
    \end{aligned}
\end{equation}
In particular, if $e_0=0$, then
\begin{equation}
    \Delta^R_k
    \le
    L_R\epsilon_F
    \sum_{i=0}^{k-1}L_F^i
    +
    \epsilon_R .
\end{equation}
\end{proposition}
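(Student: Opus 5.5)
The plan is a single add-and-subtract argument at step $k$, after which Theorem~\ref{thm:latent_transition_error} supplies the only nontrivial ingredient. I insert the intermediate quantity $R_\theta(z_{t+k},a_{t+k})$, the learned reward evaluated at the \emph{true} latent state. The triangle inequality then gives
\begin{equation*}
\Delta^R_k \le \bigl|R_\theta(\hat z_{t+k},a_{t+k})-R_\theta(z_{t+k},a_{t+k})\bigr| + \bigl|R_\theta(z_{t+k},a_{t+k})-R^\star(z_{t+k},a_{t+k})\bigr|.
\end{equation*}
This is the natural split here. Assumption~\ref{ass:reward_lipschitz} gives Lipschitz continuity of the learned model $R_\theta$, not of $R^\star$, so the state-difference term has to be taken under $R_\theta$. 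The model-error term is then evaluated at the true state.

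First I bound the second term uniformly by $\epsilon_R$, using Assumption~\ref{ass:reward_error}. Next I bound the first term by $L_R\|\hat z_{t+k}-z_{t+k}\|=L_R e_k$, using Assumption~\ref{ass:reward_lipschitz} with the same action $a_{t+k}$ in both arguments.

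I then invoke Theorem~\ref{thm:latent_transition_error}, which applies because Assumptions~\ref{ass:transition_error} and~\ref{ass:transition_lipschitz} hold. For the range $k=0,\dots,H-1$ (contained in the theorem's range $0,\dots,H$) it gives
\begin{equation*}
e_k\le L_F^k e_0+\epsilon_F\sum_{i=0}^{k-1}L_F^i.
\end{equation*}
Since $L_R\ge 0$, multiplying this inequality by $L_R$ preserves it. Adding $\epsilon_R$ yields the claimed bound.

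The special case $e_0=0$ follows by substitution, since the $L_F^k e_0$ term vanishes. The $k=0$ case is consistent with this, because the empty sum is zero and the bound reduces to $L_R e_0+\epsilon_R$.

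There is no real obstacle. The one point needing care is the choice of intermediate term. Subtracting $R^\star(\hat z_{t+k},a_{t+k})$ instead would require Lipschitz continuity of $R^\star$, which is not assumed in this appendix. I would therefore insert $R_\theta(z_{t+k},a_{t+k})$, as above.
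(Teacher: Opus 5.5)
Your proposal is correct and matches the paper's proof step for step. The paper also inserts $R_\theta(z_{t+k},a_{t+k})$, bounds the state-mismatch term by $L_R e_k$ via Assumption~\ref{ass:reward_lipschitz} and the model-error term by $\epsilon_R$ via Assumption~\ref{ass:reward_error}, and then substitutes the bound on $e_k$ from Theorem~\ref{thm:latent_transition_error}; your observation that the intermediate term must be taken under $R_\theta$, since only $R_\theta$ is assumed Lipschitz here, is exactly the point that needs care.
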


Proposition~\ref{prop:reward_prediction_error} separates the reward prediction error into two sources. The first term comes from latent state mismatch and scales with the transition error accumulated through the rollout. The second term, $\epsilon_R$, is the intrinsic one-step approximation error of the reward model. Therefore, even when the reward model is uniformly accurate at one step, the reward estimate can still degrade over a long imagined trajectory because the latent inputs themselves become increasingly biased.

\begin{proof}
For any $k=0,1,\ldots,H-1$, adding and subtracting $R_\theta(z_{t+k},a_{t+k})$ gives
\begin{align}
    &\left|
    R_\theta(\hat{z}_{t+k},a_{t+k})
    -
    R^\star(z_{t+k},a_{t+k})
    \right|
    \nonumber\\
    &\le
    \left|
    R_\theta(\hat{z}_{t+k},a_{t+k})
    -
    R_\theta(z_{t+k},a_{t+k})
    \right|
    \nonumber
    \\ &+
    \left|
    R_\theta(z_{t+k},a_{t+k})
    -
    R^\star(z_{t+k},a_{t+k})
    \right|.
\end{align}
By Assumption~\ref{ass:reward_lipschitz}, the first term is bounded by
\begin{equation}
    L_R\left\|\hat{z}_{t+k}-z_{t+k}\right\|
    =
    L_R e_k .
\end{equation}
By Assumption~\ref{ass:reward_error}, the second term is bounded by $\epsilon_R$. Thus,
\begin{equation}
    \left|
    R_\theta(\hat{z}_{t+k},a_{t+k})
    -
    R^\star(z_{t+k},a_{t+k})
    \right|
    \le
    L_R e_k + \epsilon_R .
\end{equation}
Applying Theorem~\ref{thm:latent_transition_error} gives
\begin{equation}
    e_k
    \le
    L_F^k e_0
    +
    \epsilon_F
    \sum_{i=0}^{k-1}L_F^i .
\end{equation}
Substituting this bound into the reward error inequality proves the result.
\end{proof}

\begin{proposition}[Multi-step return error bound]
\label{prop:return_error}
Suppose Assumptions~\ref{ass:transition_error},~\ref{ass:transition_lipschitz},~\ref{ass:reward_error}, and~\ref{ass:reward_lipschitz} hold. Then, for any fixed action sequence $a_{t:t+H-1}$,
\begin{align}
    &\left|
    \hat{J}_\theta(a_{t:t+H-1}\mid \hat{z}_t)
    -
    J^\star(a_{t:t+H-1}\mid z_t)
    \right|\\
    &\le
    \sum_{k=0}^{H-1}
    \gamma^k
    \left[
    L_R
    \left(
    L_F^k e_0
    +
    \epsilon_F
    \sum_{i=0}^{k-1}L_F^i
    \right)
    +
    \epsilon_R
    \right].
\end{align}
If the initial latent state is exact, namely $e_0=0$, then
\begin{equation}
\begin{aligned}
    &\left|
    \hat{J}_\theta(a_{t:t+H-1}\mid z_t)
    -
    J^\star(a_{t:t+H-1}\mid z_t)
    \right|\\
    &\le
    \sum_{k=0}^{H-1}
    \gamma^k
    \left(
    L_R\epsilon_F
    \sum_{i=0}^{k-1}L_F^i
    +
    \epsilon_R
    \right).
    \end{aligned}
\end{equation}
Furthermore, when $e_0=0$, $L_F=1$, and $\gamma=1$, the return error satisfies
\begin{align}
   & \left|
    \hat{J}_\theta(a_{t:t+H-1}\mid z_t)
    -
    J^\star(a_{t:t+H-1}\mid z_t)
    \right|\\
    &\le
    \frac{L_R\epsilon_F H(H-1)}{2}
    +
    \epsilon_R H .
\end{align}
Consequently,
\begin{equation}
    \left|
    \hat{J}_\theta
    -
    J^\star
    \right|
    =
    O(L_R\epsilon_F H^2 + \epsilon_R H).
\end{equation}
\end{proposition}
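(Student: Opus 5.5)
The statement to prove is Proposition~\ref{prop:return_error}, the multi-step return error bound. The plan is to reduce it termwise to Proposition~\ref{prop:reward_prediction_error}.

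\textbf{Step 1: reduce to per-step reward errors.} Both returns are finite sums over the same horizon and the same fixed action sequence $a_{t:t+H-1}$. Subtracting and applying the triangle inequality gives
\[
\bigl|\hat J_\theta-J^\star\bigr|\le\sum_{k=0}^{H-1}\gamma^k\,\Delta^R_k .
\]
This uses only $\gamma^k\ge0$, which holds since $\gamma\in[0,1]$.

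\textbf{Step 2: insert the per-step bound.} Proposition~\ref{prop:reward_prediction_error} gives, for each $k$,
\[
\Delta^R_k\le L_R\Bigl(L_F^ke_0+\epsilon_F\sum_{i<k}L_F^i\Bigr)+\epsilon_R .
\]
That proposition rests on Theorem~\ref{thm:latent_transition_error}, which controls $e_k$. Substituting this bound for each $k$ yields the first displayed inequality. Setting $e_0=0$ kills the $L_F^ke_0$ term and gives the second display.

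\textbf{Step 3: the special case $e_0=0$, $L_F=1$, $\gamma=1$.} Here the inner sum is $\sum_{i<k}1=k$ and every discount weight equals $1$. The bound therefore becomes
\[
\sum_{k=0}^{H-1}\bigl(L_R\epsilon_Fk+\epsilon_R\bigr)=L_R\epsilon_F\frac{H(H-1)}{2}+\epsilon_RH .
\]
This yields the $O(L_R\epsilon_FH^2+\epsilon_RH)$ rate.

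\textbf{Expected difficulty.} There is essentially no obstacle; the argument is bookkeeping. The only point needing care is notation. The statement writes $\hat J_\theta(\cdot\mid\hat z_t)$ in general but $\hat J_\theta(\cdot\mid z_t)$ when $e_0=0$. I would note that $e_0=0$ means $\hat z_t=z_t$, so the two coincide. I would also check that the rollouts appearing in $\Delta^R_k$ are exactly the ones defining the two returns, so the triangle inequality in Step~1 applies directly.
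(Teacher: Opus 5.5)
Your proposal is correct and follows the paper's proof essentially step for step: the triangle inequality over the discounted sum, then a termwise application of Proposition~\ref{prop:reward_prediction_error}, then the specialization $\sum_{i=0}^{k-1}L_F^i=k$ together with $\sum_{k=0}^{H-1}k=H(H-1)/2$. Your remarks that $\gamma^k\ge 0$ and that $\hat z_t=z_t$ when $e_0=0$ are small clarifications the paper leaves implicit.
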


Proposition~\ref{prop:return_error} shows that transition error affects planning through the cumulative return objective. When $e_0=0$, $L_F=1$, and $\gamma=1$, the return error scales as $O(L_R\epsilon_F H^2+\epsilon_R H)$. Thus, a one-step latent transition error of order $\epsilon_F$ can lead to a quadratic dependence on the planning horizon, while the reward approximation error accumulates linearly. This explains why a latent world model with small one-step prediction error may still produce biased long-horizon planning estimates.

\begin{proof}
By definition,
\begin{align}
    &\left|
    \hat{J}_\theta(a_{t:t+H-1}\mid \hat{z}_t)
    -
    J^\star(a_{t:t+H-1}\mid z_t)
    \right|\\
    &=
    \left|
    \sum_{k=0}^{H-1}
    \gamma^k
    \left[
    R_\theta(\hat{z}_{t+k},a_{t+k})
    -
    R^\star(z_{t+k},a_{t+k})
    \right]
    \right|.
\end{align}
Using the triangle inequality,
\begin{align}
    &\left|
    \hat{J}_\theta(a_{t:t+H-1}\mid \hat{z}_t)
    -
    J^\star(a_{t:t+H-1}\mid z_t)
    \right|
    \nonumber\\
   & \le
    \sum_{k=0}^{H-1}
    \gamma^k
    \left|
    R_\theta(\hat{z}_{t+k},a_{t+k})
    -
    R^\star(z_{t+k},a_{t+k})
    \right|.
\end{align}
Applying Proposition~\ref{prop:reward_prediction_error} to each summand yields
\begin{align}
    &\left|
    \hat{J}_\theta(a_{t:t+H-1}\mid \hat{z}_t)
    -
    J^\star(a_{t:t+H-1}\mid z_t)
    \right|\\
    &\le
    \sum_{k=0}^{H-1}
    \gamma^k
    \left[
    L_R
    \left(
    L_F^k e_0
    +
    \epsilon_F
    \sum_{i=0}^{k-1}L_F^i
    \right)
    +
    \epsilon_R
    \right].
\end{align}
If $e_0=0$, this reduces to
\begin{align}
    &\left|
    \hat{J}_\theta(a_{t:t+H-1}\mid z_t)
    -
    J^\star(a_{t:t+H-1}\mid z_t)
    \right|\\
    &\le
    \sum_{k=0}^{H-1}
    \gamma^k
    \left(
    L_R\epsilon_F
    \sum_{i=0}^{k-1}L_F^i
    +
    \epsilon_R
    \right).
\end{align}
When $L_F=1$, we have $\sum_{i=0}^{k-1}L_F^i = k$. When also $\gamma=1$, the bound becomes
\begin{equation}
    \sum_{k=0}^{H-1}
    \left(
    L_R\epsilon_F k + \epsilon_R
    \right)
    =
    L_R\epsilon_F
    \sum_{k=0}^{H-1}k
    +
    \epsilon_R H .
\end{equation}
Since $\sum_{k=0}^{H-1}k=\frac{H(H-1)}{2}$, we obtain
\begin{align}
    &\left|
    \hat{J}_\theta(a_{t:t+H-1}\mid z_t)
    -
    J^\star(a_{t:t+H-1}\mid z_t)
    \right|\\
    &\le
    \frac{L_R\epsilon_F H(H-1)}{2}
    +
    \epsilon_R H .
\end{align}
This proves the stated asymptotic rate.
\end{proof}

\end{document}